\documentclass{article}

\PassOptionsToPackage{numbers}{natbib}
\usepackage[main, final]{neurips_2026}
\usepackage[utf8]{inputenc}
\usepackage[T1]{fontenc}
\usepackage{natbib}
\usepackage{hyperref}
\usepackage{url}
\usepackage[table]{xcolor}
\colorlet{sectionbg}{gray!15}
\usepackage{longtable}
\usepackage{booktabs}
\usepackage{amsfonts}
\usepackage{amsmath}
\usepackage{amssymb}
\usepackage{nicefrac}
\usepackage{microtype}
\usepackage{algorithm}
\usepackage{algorithmic}
\usepackage{graphicx}
\usepackage{subcaption}
\usepackage{multirow}
\usepackage{pifont}
\usepackage{amsthm}
\usepackage{enumitem}
\usepackage{silence}
\usepackage[acronym]{glossaries}
\makeglossaries

\newacronym{dfl}{DFL}{Decision-Focused Learning}
\newacronym{oco}{OCO}{Online Convex Optimization}
\newacronym{igt}{IGT}{Implicit Gradient Transport}
\newacronym{omd}{OMD}{Online Mirror Descent}
\newacronym{dde}{DDE}{Delay Differential Equation}

\newcommand{\cmark}{\ding{51}}
\newcommand{\xmark}{\ding{55}}

\newcommand{\R}{\mathbb{R}}
\newcommand{\E}{\mathbb{E}}
\newcommand{\C}{\mathbb{C}}
\newcommand{\argmin}{\operatorname{argmin}}

\newcommand{\norm}[1]{\left\|#1\right\|}

\newtheorem{assumption}{Assumption}
\newtheorem{theorem}{Theorem}
\newtheorem{lemma}{Lemma}
\newtheorem{corollary}{Corollary}
\newtheorem{proposition}{Proposition}
\newtheorem{remark}{Remark}

\title{IGT-OMD: Implicit Gradient Transport for Decision-Focused Learning under Delayed Feedback}

\author{%
  Benjamin Amoh \quad Geoffrey Parker \quad Wesley Marrero \\
  Thayer School of Engineering, Dartmouth College \\
  \texttt{benjamin.k.amoh.th@dartmouth.edu}
}

\begin{document}

\maketitle

\begin{abstract}
Decision-focused learning trains predictive models end-to-end against downstream decision loss, but online settings suffer delayed feedback: outcomes may not arrive for many environment interactions. We identify \emph{staleness amplification}, a 
failure mode unique to bilevel optimization under delay, in which gradient staleness couples with inner-solver sensitivity to inflate regret beyond single-level delay theory. We prove that any black-box delayed optimizer 
incurs an irreducible regret cost from inner-solver approximation error, and that gradient staleness contributes a quadratically growing transport error without bilevel-aware correction. Our algorithm, \textbf{IGT-OMD}, applies Implicit Gradient Transport to hypergradients within Online Mirror Descent, re-evaluating stale gradients at the current parameters using stored inner solutions. This method reduces transport error from a quadratic to a linear dependence on delay and achieves the first sublinear regret bound for delayed bilevel optimization with queue-length-adaptive step sizes. Controlled 
experiments provide a \emph{mechanistic fingerprint}: transport benefit is exactly $0.0\%$ (by construction) at unit delay and grows monotonically to $9.5\%$ at fifty rounds ($p<0.001$), isolating the correction's effect. On Warcraft shortest-path, IGT-OMD reduces decision-loss optimality gap by $15$--$36\%$ over D-FTRL/2-Stage baselines; on Linear Quadratic Regulator benchmark it maintains a constant maximum stable learning rate across delays where bilevel-unaware methods degrade by up to $9.3\times$.
\end{abstract}

\section{Introduction}
\label{sec:introduction}
In many operational settings (from inventory management to energy dispatch to personalized medicine), a predictive model feeds its forecast into a downstream optimization solver whose solution is then deployed. \Gls{dfl} trains such models end-to-end against a \emph{decision loss} rather than a surrogate prediction error~\cite{elmachtoub2022smart,wilder2019melding,mandi2024decision}. The resulting training problem is inherently bilevel: the \emph{outer level} adjusts the predictor parameters~$\theta$ (e.g., neural-network weights), while the \emph{inner level} solves for an optimal decision $w^*(\theta)$ (e.g., a control policy or a transport plan).

In online deployments, feedback is often delayed. A decision made at environment interaction (i.e., round) $t$ yields observable outcome feedback only at round~$t{+}d_t$, where the delay~$d_t$ may vary. At any given round, several past decisions may still be awaiting feedback; we call this set the \emph{queue} with size (i.e., \emph{queue length}) $\sigma_t$, and write $\sigma_{\max}$ for its worst-case value. Single-level delayed optimizers such as Delayed Follow-the-Regularized-Leader (D-FTRL;~\citealp{joulani2013online},~\cite{shalev2012online}) and Robust \Gls{omd} (OMD;~\citealp{quanrud2015online}) handle this challenge by correcting for the outer-parameter drift $\|\theta_t-\theta_{t-d_t}\|$. 

\textbf{Staleness amplification: why bilevel delay is harder.}\; In single-level \Gls{oco}, a stale gradient is a bounded perturbation independent of optimizer state. The bilevel setting is structurally different: the hypergradient of $F(\theta)=\mathcal{L}_{\text{true}}(w^*(\theta);\theta)$ depends on the inner minimizer $w^*(\theta)$, which drifts as $\theta$ changes. Feedback generated by an inner solution that has since drifted introduces gradient error proportional to accumulated outer-parameter change--\emph{staleness amplification}. This has two structural effects formalized in Theorem~\ref{thm:staleness_amplification}: an irreducible inner-solver floor on regret, and a per-round transport error that is quadratic in queue length without bilevel-aware correction.

\textbf{IGT-OMD: how we fix it.}\; The key idea of our algorithm is to \emph{re-evaluate} stale hypergradients at the current parameters rather than using them as-is. When feedback from a past round~$s$ arrives, our algorithm has already stored an inner solution~$w_s$ and adjoint vector~$v_s^*$ to cheaply recompute the hypergradient at the current~$\theta_t$ without re-solving the inner problem. Accumulating these corrections via \Gls{igt} ~\cite{arnold2019reducing} replaces the squared total drift $\|\theta_t-\theta_{t-d_t}\|^2$ with a sum of squared per-step changes $\sum_{s}\|\theta_{s+1}-\theta_s\|^2$, which is a factor~$\sigma_{\max}$ smaller. Embedding this corrected gradient in \Gls{omd} with a step size that tracks the running queue envelope---motivated by \Gls{dde} stability analysis~\cite{yu2025role}---yields our algorithm, \textbf{\gls{igt}-\gls{omd}}.

\textbf{Contributions.}\; \textbf{(1)}~\emph{Algorithm.} \gls{igt}-\gls{omd} corrects stale hypergradients at $O(p)$ cost per outstanding round without re-solving inner problems (Algorithm~\ref{alg:igt_omd}). \textbf{(2)}~\emph{Analysis.} We prove the first bilevel-delayed regret bound $O(\sqrt{T\sigma_{\max}}+T\epsilon_{\mathrm{inner}}^2)$ (Theorem~\ref{thm:bilevel_convergence}), a staleness amplification lower bound showing $\Omega(T\epsilon_{\mathrm{inner}}^2)$ is tight (Theorem~\ref{thm:staleness_amplification}), an inner-loop apathy decomposition decoupling staleness from inner-solver quality (Theorem~\ref{thm:inner_loop_apathy}), and \gls{dde} stability analysis (Propositions~\ref{prop:dde_stability}--\ref{prop:discrete_consistency}). \textbf{(3)}~\emph{Experiments.} On LQR, \gls{igt}-\gls{omd} maintains constant stability across all delays; on Warcraft, it reduces optimality gap by $15$--$36\%$ over D-FTRL/2-Stage baselines. A controlled optimizer experiment confirms the predicted mechanistic signature with $9.5\%$ improvement at $d=50$ ($p<0.001$).

\subsection{Related Work}
Table~\ref{tab:related_work} positions \gls{igt}-\gls{omd} against relevant prior work. Previous \gls{dfl} 
methods~\cite{elmachtoub2022smart,vlastelica2019differentiation,wilder2019melding,tang2022pyepo,capitaine2025online} 
solve the bilevel training problem but assume immediate feedback. Conceptually, DFL is a task-driven subclass of bilevel optimization: the inner problem is the downstream decision layer, and the outer model is trained on realized decision loss. Under this lens, we view~\cite{nikishin2022control} as closer to DFL-style prior work than general-purpose bilevel optimization. Single-level delayed 
optimizers~\cite{joulani2013online,quanrud2015online,flaspohler2021online,huang2023banker,ryabchenko2026reduction,qiu2026decentralized} correct for outer-parameter drift but treat the objective as a black-box function, making them vulnerable to staleness amplification when applied to bilevel problems (Theorem~\ref{thm:staleness_amplification}). Bilevel optimization 
methods~\cite{ghadimi2018approximation,ji2021bilevel,tarzanagh2024online,lin2023nonconvex,nazari2025stochastic} handle the nested structure but assume synchronous updates. Gradient transport~\cite{arnold2019reducing} was originally developed for single-level reinforcement learning; \gls{dde} stability analysis~\cite{yu2025role} was applied to distributed stochastic gradient descent. No prior method handles both bilevel structure and delayed feedback. While some single-level delayed methods employ \emph{delay-adaptive} step sizes (scaled by round or cumulative delay), their schedules do not account for inner-solver sensitivity---a factor unique to bilevel objectives. Our algorithm fills this gap by extending gradient transport from single-level policy gradients to bilevel hypergradients and embedding the result in \gls{omd} with \emph{queue-length-adaptive} step sizes calibrated to the bilevel structure.

\begin{table}[t]
\centering
\caption{Comparison with related work. \gls{igt}-\gls{omd} is the first bilevel delayed optimizer.}
\label{tab:related_work}
\footnotesize
\begin{tabular}{@{}llcccl@{}}
\toprule
\textbf{Category} & \textbf{Method} & \textbf{Delay} & \textbf{Bilevel} & \textbf{Adaptive} & \textbf{Open Gap} \\
\midrule
\multirow{3}{*}{\shortstack[l]{Task-driven\\bilevel}} & SPO+~\cite{elmachtoub2022smart} & \xmark & \cmark & \xmark & No delays \\
& Online \gls{dfl}~\cite{capitaine2025online} & \xmark & \cmark & \cmark & No delays \\
& Control-oriented MBRL~\cite{nikishin2022control} & \xmark & \cmark & \xmark & No delays \\
\midrule
\multirow{3}{*}{\shortstack[l]{Single-level\\delayed OCO}} & D-FTRL~\cite{joulani2013online} & \cmark & \xmark & \cmark & Staleness amplification \\
& Robust \gls{omd}~\cite{quanrud2015online} & \cmark & \xmark & \cmark & No bilevel \\
& DORM+~\cite{flaspohler2021online} & \cmark & \xmark & \cmark & Single-level \\
\midrule
\multirow{2}{*}{\shortstack[l]{General Bilevel\\(no delays)}} & Online bilevel~\cite{ji2021bilevel} & \xmark & \cmark & \xmark & No delays \\
& BSA~\cite{ghadimi2018approximation} & \xmark & \cmark & \xmark & Offline \\
\midrule
Grad.\ transport & \gls{igt}~\cite{arnold2019reducing} & \cmark & \xmark & \xmark & Single-level RL \\
\midrule
\gls{dde} analysis & Differential Delay Analysis~\cite{yu2025role} & \cmark & \xmark & \xmark & Distributed SGD \\
\midrule
\textbf{Ours} & \textbf{IGT-OMD} & \cmark & \cmark & \cmark & --- \\
\bottomrule
\end{tabular}
\end{table}

\section{Problem Formulation}
\label{sec:problem}
We now formalize the online bilevel optimization problem under delayed feedback. Our notation is summarized in Table \ref{tab:notation} in Appendix~\ref{app:notation}.

\subsection{Online Bilevel Optimization with Delayed Feedback}
Consider a learner that adjusts a predictive model each round, but outcome feedback arrives only after several rounds. The gradient signal for each update is therefore based on parameters from the past---the delayed bilevel feedback loop we formalize below.

At each interaction with the environment (i.e., round) $t = 1, \ldots, T$, the learner holds outer-level predictor parameters $\theta_t \in \Theta \subseteq \R^p$ and approximately solves the inner problem $w^*(\theta_t) = \argmin_{w \in \mathcal{W} \subseteq \R^q} \mathcal{L}_{\text{model}}(w; \theta_t)$,
where $\mathcal{L}_{\text{model}}$ is the model-based decision objective, $p$ is the predictor dimension, and $q$ is the decision dimension. The approximate solution~$w_t$ is obtained by running $K \in \mathbb{N}_{>0}$ gradient descent steps from the previous solution~$w_{t-1}$, yielding inner-solver error $\norm{w_t - w^*(\theta_t)} \leq \epsilon_{\mathrm{inner}}$. The learner executes~$w_t$, and after $d_t \geq 0$ rounds the environment reveals an outcome object $z_t$ that determines the realized loss. At round~$t$, newly arrived feedback is $A_t = \{s : s+d_s=t\}$, the set of arrived observations is $\mathcal{O}_t = \{(s,z_s) : s + d_s \leq t\}$, and the queue of rounds still awaiting feedback is $Q_t = \{s \leq t : s + d_s > t\}$, with queue length $\sigma_t = |Q_t|$ and envelope $\bar\sigma_t = \max_{r\leq t}\sigma_r$. Using only arrived feedback, the learner updates~$\theta_{t+1}$. The performance measure is \emph{decision regret}:
\begin{equation*}
\mathrm{Regret}_T^{\mathrm{dec}} = \sum_{t=1}^T \mathcal{L}_{\text{true}}(w_t; \theta_t) - \min_{\theta \in \Theta} \sum_{t=1}^T \mathcal{L}_{\text{true}}(w^*(\theta); \theta)
\end{equation*}
which compares the learner's cumulative decision loss to that of the best fixed predictor in hindsight.

\subsection{Hypergradient Computation via Implicit Differentiation}
The hypergradient---the derivative of the decision loss with respect to the predictor parameters---decomposes via the Implicit Function Theorem~\cite{dontchev2014implicit} into an explicit (direct) and an implicit (through the inner solver) component. To avoid the $O(q^3)$ cost of a full Jacobian inversion, we follow~\cite{nikishin2022control} and introduce an adjoint vector $v^{*} \in \R^q$ that satisfies the linear system $H_w\, v^{*} = \nabla_w \mathcal{L}_{\text{true}}(w^{*};\theta),$
where $H_w := \nabla^2_{ww}\mathcal{L}_{\text{model}}(w^*;\theta)$ is the Hessian of the inner objective. The adjoint is computed approximately via Conjugate Gradient  \cite{hestenes1952methods}. The hypergradient is then:
\begin{equation}
\label{eq:hypergradient_adjoint}
g_t(\theta) = \nabla_\theta \mathcal{L}_{\text{true}}\big|_{w\,\text{fixed}} - \bigl[\nabla_\theta\nabla_w \mathcal{L}_{\text{model}}(w^{*};\theta)\bigr]^\top v^{*},
\end{equation}
where the first term captures the direct effect of~$\theta$ on the loss and the second captures the indirect effect through how the inner solver's decision changes with~$\theta$.

\textbf{Re-evaluation under delay.}\; When feedback from a past round $s$ arrives at round $t > s$, the algorithm can re-evaluate the hypergradient at the current parameters~$\theta_t$ using the stored inner solution~$w_s$, observed feedback~$z_s$, and adjoint~$v_s^{*}$ at $O(p\,q)$ cost, without re-solving the inner problem:
\begin{equation}
\label{eq:reevaluation}
g_s(\theta_t) = \nabla_\theta \mathcal{L}_{\text{true}}(w_s; \theta_t)\big|_{w\,\text{fixed}} - \bigl[\nabla_\theta\nabla_w \mathcal{L}_{\text{model}}(w_s; \theta_t)\bigr]^\top v^{*}_s.
\end{equation}
This is an approximation: the cached adjoint was computed at $\theta_s$, so the re-evaluation holds $(w_s,v_s^*)$ fixed and updates only the explicit $\theta_t$-dependence. The resulting residual is controlled in Appendix~\ref{app:proof_thm1}.

\section{IGT-OMD Algorithm}
\label{sec:algorithm}
\gls{igt}-\gls{omd} assembles three pieces, hypergradient transport (IGT), mirror-descent updates (OMD), and a queue-length-adaptive step size, into a single procedure that costs $O(p\,q)$ per outstanding round.

\subsection{Building Blocks}
\label{sec:building_blocks}
\textbf{Implicit Gradient Transport.}\; \gls{igt} \cite{arnold2019reducing} is a mechanism to reevaluate a stale gradient via a telescoping sum. Given a gradient computed at a past iterate $g_s(\theta_s)$, the gradient estimate at the current $\theta_t$ is $g_s^{\mathrm{IGT}}(\theta_t) \;=\; g_s(\theta_s) \;+\; \sum_{k=s}^{t-1}\bigl[g_s(\theta_{k+1}) - g_s(\theta_k)\bigr]$. 
This estimate accumulates the one-step gradient changes along the trajectory from $\theta_s$ to $\theta_t$. For the frozen surrogate $g_s(\cdot)$, the telescope is exact; under $L$-smoothness, the transported path variation satisfies $\sum_{k=s}^{t-1}\bigl\|g_s(\theta_{k+1}) - g_s(\theta_k)\bigr\|^2 \;\leq\; L^2 \sum_{k=s}^{t-1}\|\theta_{k+1} - \theta_k\|^2$,
a \emph{sum of squared per-step changes}. By contrast, the na\"ive stale-gradient surrogate uses $L^2\|\theta_t - \theta_s\|^2$ (squared total displacement), which by the Cauchy--Schwarz inequality~\cite{steele2004cauchy} is up to $\sigma_t$ times larger. See Appendix \ref{app:background} for details.

\textbf{Online Mirror Descent.}\; \Gls{omd}~\cite{nemirovski2009robust} generalizes projected gradient descent via a Bregman divergence~\cite{bregman1967relaxation} $D_\psi(\theta,\theta') = \psi(\theta) - \psi(\theta') - \langle\nabla\psi(\theta'), \theta - \theta'\rangle$ generated by a strongly convex mirror map $\psi$:
\begin{equation}
\label{eq:omd_update}
\theta_{t+1} \;=\; \argmin_{\theta \in \Theta}\; \bigl\langle g_t,\,\theta\bigr\rangle + \frac{1}{\eta_t}\,D_\psi(\theta,\,\theta_t),
\end{equation}
where $\eta_t$ is a step size. When $\psi(\theta)=\tfrac{1}{2}\|\cdot\|^2$ (the squared Euclidean norm) the update recovers projected gradient descent. This method accommodates non-Euclidean geometries and admits $O\big(\sqrt{T}\big)$ regret bounds extending to the delayed setting~\citep{joulani2013online,quanrud2015online}. See Appendix \ref{app:background} for details.

\subsection{IGT-OMD's Per-Round Procedure}
\label{sec:procedure}
For each arrived round selected for transport, our algorithm maintains a replay buffer $\mathcal{B}$ that stores the inner solution $w_s$, adjoint vector $v_s^{*}$, and the most recently transported hypergradient $g_s(\theta_{t-1})$. Algorithm \ref{alg:igt_omd} details our procedure. Each round $t$ proceeds in three phases.

\textbf{Phase 1: Solve and execute (lines 4--9).}\; The learner runs $K$ warm-started gradient-descent steps on $\mathcal{L}_{\text{model}}(\cdot;\theta_t)$ from $w_{t-1}$ to obtain $w_t \approx w^*(\theta_t)$ with $\|w_t - w^*(\theta_t)\| \leq \epsilon_{\mathrm{inner}}$, then executes $w_t$. No true-loss hypergradient for this decision is computed until its feedback arrives.

\textbf{Phase 2: Transport (lines 11--18).}\; When feedback arrives for rounds $s \in A_t$, the algorithm solves the corresponding adjoints, forms the arrival gradient, and re-evaluates earlier arrived gradients in the transport buffer. Aggregating the arrival term with these one-step corrections yields the IGT-corrected hypergradient:
\begin{equation}
\label{eq:igt_omd_gradient}
g_t^{\mathrm{IGT}} \;=\; g_{A_t}(\theta_t) + \sum_{s \in \mathcal{B}_t}\bigl[g_s(\theta_t) - g_s(\theta_{t-1})\bigr],
\end{equation}
which is a causal telescoping update driven only by arrived feedback. Crucially, transport reuses the stored $(w_s, v_s^{*})$ and the inner problem is \emph{not} re-solved during transport.

\textbf{Phase 3: Adaptive update (lines 10 and 20).}\; The learner takes an OMD step~\eqref{eq:omd_update} with the corrected gradient and a queue-envelope-adaptive step size $\eta_t = \eta_0/\sqrt{1 + \beta\,\bar\sigma_t}$ \label{eq:adaptive_step},
where $\eta_0>0$ is the base learning rate and $\beta=\|L_{\mathrm{IGT}}\|/\lambda_{\min}(H_F)$ is the delay-sensitivity ratio (set to $1.0$ when unknown). When the envelope is short, $\eta_t\approx\eta_0$ recovers the non-delayed rate; when long, $\eta_t$ shrinks as $1/\sqrt{\bar\sigma_t}$.
\begin{algorithm}[t]
\caption{IGT-OMD: Implicit Gradient Transport for Bilevel Delayed Optimization}
\label{alg:igt_omd}
\begin{algorithmic}[1]
\footnotesize
\STATE \textbf{Input:} Base step size $\eta_0$, damping $\beta$, inner steps $K$, inner step size $\eta_w$; loss functions $\mathcal{L}_{\text{model}}:\Theta{\times}\mathcal{W}{\to}\mathbb{R}$ (inner), $\mathcal{L}_{\text{true}}:\Theta{\times}\mathcal{W}{\to}\mathbb{R}$ (outer)
\STATE \textbf{Initialize:} $\theta_1 \in \Theta$, $w_0 \in \mathcal{W}$, buffer $\mathcal{B} \leftarrow \emptyset$, queue $Q_0 \leftarrow \emptyset$, envelope $\bar\sigma_0\leftarrow0$
\FOR{$t = 1, \ldots, T$}
    \STATE $w_{t,0} \leftarrow w_{t-1}$ \hfill\textit{// warm-start}
    \FOR{$k = 0, \ldots, K-1$}
        \STATE $w_{t,k+1} \leftarrow w_{t,k} - \eta_w\,\nabla_w \mathcal{L}_{\text{model}}(w_{t,k};\theta_t)$ \hfill\textit{// inner GD}
    \ENDFOR
    \STATE $w_t \leftarrow w_{t,K}$
  \STATE Execute $w_t$; receive arrivals $A_t = \{s : s+d_s=t\}$ and update $Q_t \leftarrow Q_{t-1} \cup \{t\} \setminus A_t$
  \STATE $\sigma_t \leftarrow |Q_t|$; $\bar\sigma_t \leftarrow \max(\bar\sigma_{t-1},\sigma_t)$; $\eta_t \leftarrow \eta_0 / \sqrt{1 + \beta\,\bar\sigma_t}$
  \STATE $g^{\mathrm{IGT}} \leftarrow 0$
  \FOR{$s \in A_t$}
    \STATE Solve $H_w\,v_s^{*} = \nabla_w\mathcal{L}_{\text{true}}(w_s;\theta_t)$ and compute $g_s(\theta_t)$ \hfill\textit{// arrival gradient}
    \STATE $g^{\mathrm{IGT}} \leftarrow g^{\mathrm{IGT}} + g_s(\theta_t)$; store $(w_s,v_s^{*},g_s(\theta_t))$ in $\mathcal{B}$
    \ENDFOR
  \FOR{$s \in \mathcal{B}$}
    \STATE Re-evaluate $g_s(\theta_t)$ and add $g_s(\theta_t) - g_s(\theta_{t-1})$ to $g^{\mathrm{IGT}}$ \hfill\textit{// transport step}
    \STATE Update cached $g_s(\theta_t)$ in $\mathcal{B}$
  \ENDFOR
  \STATE $\theta_{t+1} \leftarrow \theta_t - \eta_t\,g^{\mathrm{IGT}}$ \hfill\textit{// OMD update \eqref{eq:omd_update}}
  \STATE Evict oldest entries if $|\mathcal{B}| > \sigma_{\max}$
\ENDFOR
\STATE \textbf{Return:} $\{\theta_t\}_{t=1}^T$
\end{algorithmic}
\end{algorithm}
\subsection{Cost and Memory Analysis}
\label{sec:cost}

The per-round cost is $O(K p q + |A_t|q^2\kappa_w + |\mathcal{B}_t| p q)$---a $\min(K,\kappa_w)$-factor savings over re-solving each delayed inner problem; buffer memory is $O(\sigma_{\max}(p+2q))$ (Appendix~\ref{app:algorithm}).

\begin{remark}[Optimizer-agnostic transport]
\label{rem:optimizer_agnostic}
The transport correction~\eqref{eq:igt_omd_gradient} depends only on the parameter trajectory $\{\theta_s\}_{s\in Q_t}$, not on the specific update rule. Any optimizer---Adam, SGD, D-FTRL---that supplies iterates to the re-evaluation loop inherits the $\sigma_{\max}$-factor improvement. We validate this observation by evaluating Adam+\gls{igt} in Section~\ref{sec:exp_adam_igt} and D-FTRL+IGT in Appendix~\ref{app:optimizer_agnostic}.
\end{remark}

\section{Theoretical Analysis}
\label{sec:theory}
Our main results follow with their full proofs in Appendix~\ref{app:proof_thm1} - ~\ref{app:proof_prop2}.

\subsection{Assumptions}
\label{sec:assumptions}
We assume seven standard regularity conditions from bilevel optimization~\cite{ghadimi2018approximation,ji2021bilevel} and delayed \gls{oco}~\cite{joulani2013online,quanrud2015online}: (A1--A2)~inner strong convexity and smoothness; (A3)~bounded solver error~$\epsilon_{\mathrm{inner}}$; (A4)~bounded queue length~$\sigma_{\max}$; (A5)~Lipschitz cross-partials (enables hypergradient re-evaluation in~\eqref{eq:reevaluation}); (A6)~bounded hypergradients~$G$; and (A7)~bilevel strong convexity; detailed in Appendix~\ref{app:assumptions}

\newcommand{\assall}{A1--A7}

\subsection{Convergence of IGT-OMD}

Our first question is whether \gls{igt}-\gls{omd} achieves sublinear regret, and whether inner-solver error and delay amplify each other. The theorem shows the coupling is additive, enabling independent control of both error sources.

\begin{theorem}[Bilevel convergence under delay]
\label{thm:bilevel_convergence}
Under Assumptions~\ref{app:ass:inner_convex}--\ref{app:ass:bilevel_convex}, define $\rho_{\mathrm{cpl}}:=L_{w\theta}^2/(\mu_w^2\mu_F)<1$ and $C_\rho=(1-\rho_{\mathrm{cpl}})^{-1}$. Algorithm~\ref{alg:igt_omd} with step size $\eta_t$ attains decision regret:
\begin{equation}
\label{eq:main_bound}
\mathrm{Regret}_T^{\mathrm{dec}} \leq C_\rho\!\left[\frac{2D_\psi\sqrt{1+\beta\sigma_{\max}}}{\eta_0} + \frac{\eta_0\,G^2 T}{2} + 2T\,\epsilon_{\mathrm{inner}}^2 + L_F\!\sum_{t=1}^{T}\sum_{k \in W_t}\norm{\theta_{k+1}-\theta_k}^2\right],
\end{equation}
where $D_\psi$ is the Bregman diameter of the feasible set. Setting $\eta_0 = c/\sqrt{T}$ yields
$\mathrm{Regret}_T^{\mathrm{dec}} = O\!\bigl(\sqrt{T\,\sigma_{\max}} + T\,\epsilon_{\mathrm{inner}}^2\bigr)$, where $c = 2D_\psi^{1/2}(1+\beta\sigma_{\max})^{1/4}/G$ balances the first two terms of~\eqref{eq:main_bound}.
\end{theorem}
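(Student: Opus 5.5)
The plan is to reduce decision regret to a linearized regret against the transported gradients $g_t^{\mathrm{IGT}}$, and then to split the gap between these gradients and the true hypergradients $\nabla F_t(\theta_t)$ into an inner-solver part and a transport part.

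\textbf{Step 1: regret decomposition.} First I would write $\mathcal{L}_{\text{true}}(w_t;\theta_t)-\mathcal{L}_{\text{true}}(w^*(\theta);\theta)$ as $[F_t(\theta_t)-F_t(\theta)]$ plus an inner-error term. Here $F_t(\theta)=\mathcal{L}_{\text{true}}(w^*(\theta);\theta)$. By smoothness of the outer loss in $w$ and (A3), the inner-error term is at most $\epsilon_{\mathrm{inner}}^2$ up to constants, once the first-order part is absorbed through the optimality condition of $w^*$. Bilevel strong convexity (A7), with modulus $\mu_F$, then gives
\[
F_t(\theta_t)-F_t(\theta)\le\langle\nabla F_t(\theta_t),\theta_t-\theta\rangle-\tfrac{\mu_F}{2}\|\theta_t-\theta\|^2 .
\]
I would then write $\nabla F_t(\theta_t)=g_t^{\mathrm{IGT}}+e_t$, where $e_t$ collects three pieces: the adjoint and inner-solution error, the residual from the cached adjoint in \eqref{eq:reevaluation}, and the transport error.

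\textbf{Step 2: OMD with adaptive steps.} Next I would apply the standard OMD inequality with non-increasing $\eta_t$:
\[
\sum_t\langle g_t^{\mathrm{IGT}},\theta_t-\theta\rangle\le \frac{D_\psi}{\eta_T}+\sum_t\frac{\eta_t}{2}\|g_t^{\mathrm{IGT}}\|^2 .
\]
Monotonicity of $\bar\sigma_t$ makes $\eta_t$ non-increasing, and $\eta_T\ge\eta_0/\sqrt{1+\beta\sigma_{\max}}$. Together with (A6) and $\eta_t\le\eta_0$, this produces the first two bracketed terms; the factor $2$ in front of $D_\psi$ comes from the telescoping of the time-varying Bregman terms.

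\textbf{Step 3: controlling the error terms.} For $\langle e_t,\theta_t-\theta\rangle$ I would use Young's inequality, spending the $-\tfrac{\mu_F}{2}\|\theta_t-\theta\|^2$ from Step 1 to absorb it. The inner part of $e_t$ is bounded by $\|w_s-w^*(\theta_s)\|$ together with the adjoint perturbation. By (A1), (A2) and (A5), the implicit Jacobian has norm at most $L_{w\theta}/\mu_w$. After Young, this yields a contribution of order $\epsilon_{\mathrm{inner}}^2/\mu_F$ plus a self-referential term $\rho_{\mathrm{cpl}}\cdot(\text{regret})$. Moving that term to the left-hand side and dividing by $1-\rho_{\mathrm{cpl}}$ produces the global factor $C_\rho$.

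The transport part of $e_t$ is the difference between $g_s(\theta_t)$, computed with the frozen $(w_s,v_s^*)$, and the true gradient. Using the IGT telescope from Section~\ref{sec:building_blocks} and the $L_F$-smoothness of $g_s(\cdot)$, I would bound it per step instead of by total displacement. This gives the term $L_F\sum_t\sum_{k\in W_t}\|\theta_{k+1}-\theta_k\|^2$, where $W_t$ is the window of at most $\sigma_{\max}$ steps spanning the outstanding delays.

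\textbf{Step 4: tuning.} With $\eta_0=c/\sqrt T$, the first two terms balance to $O(\sqrt{T}\,G\,D_\psi^{1/2}(1+\beta\sigma_{\max})^{1/4})$. The transport term is bounded using $\|\theta_{k+1}-\theta_k\|\le\eta_k G\sqrt{\sigma_{\max}}$, since the IGT gradient sums at most $\sigma_{\max}$ buffer contributions. Each window contributes $O(\sigma_{\max}\cdot\eta_0^2G^2\sigma_{\max}/(1+\beta\sigma_{\max}))=O(\eta_0^2\sigma_{\max})$. Summed over $T$ rounds this is $O(\sigma_{\max})$, which is lower order, and the stated rate follows.

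\textbf{Main obstacle.} I expect Step 3 to be the hardest: the inner-error and cached-adjoint errors must enter only additively, through $C_\rho$ and $\epsilon_{\mathrm{inner}}^2$, and not multiplied by delay. The delicate point is that the adjoint $v_s^*$ is stale with respect to $\theta_t$. I would first try to charge that staleness to the transport window term via (A5), so it joins the $\|\theta_{k+1}-\theta_k\|^2$ sums rather than producing a cross term $\epsilon_{\mathrm{inner}}\sigma_{\max}$. The remaining risk is obtaining the squared $\epsilon_{\mathrm{inner}}^2$ rather than a linear $\epsilon_{\mathrm{inner}}$; this relies on the first-order cancellation at $w^*$ in Step 1.
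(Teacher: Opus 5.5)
Your outline follows the paper's own proof. Plain convexity reduces regret to $\sum_t\langle\nabla F(\theta_t),\theta_t-\theta^*\rangle$. The $g_t^{\mathrm{IGT}}$ part is handled by Lemma~\ref{lem:omd_approx} with non-increasing $\eta_t$, and the error part by Lemmas~\ref{lem:igt_transport}--\ref{lem:inner_bias}. Those lemmas use Young's inequality ($c=1$) on the inner-bias cross term, and $\|\theta_t-\theta^*\|^2\le\frac{2}{\mu_F}[F(\theta_t)-F(\theta^*)]$ turns it into $\rho_{\mathrm{cpl}}\,\mathrm{Regret}_T^{\mathrm{dec}}$, which is moved left to give $C_\rho$.

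Three smaller corrections.

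(i) There is no ``first-order cancellation at $w^*$''. The point $w^*$ is stationary for $\mathcal{L}_{\text{model}}$, not for $\mathcal{L}_{\text{true}}$. Moreover, $\nabla_w\mathcal{L}_{\text{true}}(w^*;\theta)$ is the right-hand side of the adjoint system, so it is nonzero whenever the implicit term is. In the hard instance of Theorem~\ref{thm:staleness_amplification}(b) it equals $(b-a)\theta$, and the excess loss is $(b-a)\theta_t\epsilon_t+\tfrac12\epsilon_t^2$, which is first order in $\epsilon_t$. The per-round $\epsilon_{\mathrm{inner}}^2$ excess is simply postulated by the second clause of (A3). The paper uses it silently by identifying $\mathrm{Regret}_T^{\mathrm{dec}}$ with $\sum_t[F(\theta_t)-F(\theta^*)]$; the constant $2T\epsilon_{\mathrm{inner}}^2$ leaves room for it, since Young costs only $\tfrac12\epsilon_{\mathrm{inner}}^2$ per round. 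The cleanest fix is to run the self-referential argument on $\sum_t[F(\theta_t)-F(\theta^*)]$ and add $T\epsilon_{\mathrm{inner}}^2$ at the end.

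(ii) Your two absorption devices are alternatives, not complements. Spending the $-\frac{\mu_F}{2}\|\theta_t-\theta\|^2$ term (Young with parameter $\mu_F$) absorbs the inner bias outright, with no $C_\rho$ and for any comparator. The paper's $\rho_{\mathrm{cpl}}\,\mathrm{Regret}$ route instead needs a single $F$ minimized by $\theta^*$, so only the former survives if your $F_t$ genuinely varies with $t$. You cannot spend the same budget twice.

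(iii) (A6) already gives $\|\theta_{k+1}-\theta_k\|\le\eta_k G\le\eta_0G$, so the extra $\sqrt{\sigma_{\max}}$ is unnecessary.

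The real gap is the step you flagged. The telescope is an identity for the frozen surrogate, so the transported value is exactly $g_s(\theta_t)$. $L_F$-smoothness of $g_s(\cdot)$ controls only its increments, not the mismatch $g_s(\theta_t)-\nabla F(\theta_t)$. That mismatch comes from freezing $(w_s,v_s^*)$, and by (A5) and the IFT Jacobian bound (Lemmas~\ref{lem:inner_sens}--\ref{lem:hyp_stale}) it is of \emph{first} order: it is $\lesssim\kappa\|\theta_t-\theta_s\|\le\kappa\sum_k\|\theta_{k+1}-\theta_k\|$, a sum of norms, not of squares. So neither (A5) nor smoothness places it in an $L_F$-weighted sum of squared steps.

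The way to square it is the one the paper uses for $R_3$ in the proof of Theorem~\ref{thm:inner_loop_apathy}. You pair the error with $\|\theta_t-\theta^*\|$, apply Young against the $\mu_F$ budget, then Cauchy--Schwarz. This yields the prefactor $\kappa^2\sigma_{\max}^3/(4\mu_F)$ rather than $L_F$, and it draws on the same budget as the inner bias (Remark~\ref{rem:young_budget}).

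The paper's proof of this theorem does not close the gap either. Lemma~\ref{lem:igt_transport} asserts the $L_F\sum_{s\in Q_t}\|\theta_{s+1}-\theta_s\|^2$ bound by appeal to IGT and declares the stale-solution term ``absorbed into the transport error.'' Lemma~\ref{lem:inner_bias} then drops the $\|\theta_t-\theta^*\|$ factor multiplying that term. Your plan therefore inherits the paper's weakest link. A rigorous version of your Step~3 delivers the bracket only with a larger $\sigma_{\max}$- and $\mu_F$-dependent coefficient on the window sum, not with $L_F$.
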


Single-level D-FTRL~\citep{joulani2013online} and Robust OMD~\citep{quanrud2015online} attain $O(\sqrt{T\,d_{\mathrm{tot}}})$ regret ($d_{\mathrm{tot}}=\sum_t d_t$ = total delay); we match this with $\sigma_{\max}$ in place of $d_{\mathrm{tot}}$--a strictly tighter measure since $\sigma_{\max}\le d_{\max}\ll d_{\mathrm{tot}}$ is possible~\citep{ryabchenko2026reduction}. Here $W_t=\{k:\max(1,t-\sigma_t)\leq k\leq t-1\}$ is the active transport window. The $T\epsilon_{\mathrm{inner}}^2$ penalty is additive, not multiplicative, so delay and inner-solver precision can be tuned independently.

\subsection{Staleness amplification Lower Bound}

We now ask how much delayed feedback penalizes bilevel optimization relative to single-level: is the $O(\sigma_{\max}^2)$ transport error growth a fundamental barrier, or a proof artifact? This matters to decision quality. The structural source of the barrier is the Implicit Function Theorem (IFT) coupling between the outer and inner problems: the bilevel Lipschitz constant $L_F = L_\theta + L_{w\theta}^2/\mu_w$---where $L_\theta$ = $\|\nabla^2_{\theta}\mathcal{L}_{\mathrm{true}}\|$, $L_{w\theta}$ = $\|\nabla^2_{w\theta}\mathcal{L}_{\mathrm{model}}\|$, and $\mu_w$ is the inner strong-convexity constant---encodes how sensitively $w^*(\theta)$ tracks outer-parameter changes. The coupling constant $C_0 = L_{w\theta}/\mu_w$ measures how inner-solver error propagates into the hypergradient. Theorem~\ref{thm:staleness_amplification} formalizes three consequences of this coupling; part~(b) constructs a hard quadratic instance confirming the lower bound is tight.

\begin{theorem}[Staleness amplification]
\label{thm:staleness_amplification}\label{thm:drift_amplification}
Let $\mathcal{A}$ be any delayed optimizer treating $F(\theta)$ as a black-box convex function using stale bilevel gradients without bilevel-aware correction. Then:

\begin{enumerate}[label=(\alph*), itemjoin=\quad]
    \item The gradient mismatch satisfies $\norm{\hat{g} - \nabla F(\theta_t)} \leq L_F\norm{\theta_t - \theta_{t-d_t}} + C_0\epsilon_{\mathrm{inner}}$.
    
    \item On a hard quadratic bilevel instance, $\mathrm{Regret}_T(\mathcal{A}) \geq \Omega(T\epsilon_{\mathrm{inner}}^2)$, regardless of step-size schedule, delay handling strategy, or queue length adaptation.

    \item Bilevel aware correction reduces the transport error from $O(\sigma_{\max}^2\eta^2G^2)$ to $O(\sigma_{\max}\eta^2G^2)$ per round---a factor~$\sigma_{\max}$ improvement. 
\end{enumerate}
\end{theorem}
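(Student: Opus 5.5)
We treat the three parts separately. Each rests on the Implicit Function Theorem structure of $F(\theta)=\mathcal{L}_{\text{true}}(w^*(\theta);\theta)$ under (A1--A2), (A5) and (A6).

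\textbf{Part (a).} We split the mismatch by the triangle inequality,
\begin{equation*}
\norm{\hat g-\nabla F(\theta_t)}\le \norm{\hat g-\nabla F(\theta_{t-d_t})}+\norm{\nabla F(\theta_{t-d_t})-\nabla F(\theta_t)}.
\end{equation*}
\begin{itemize}
\item For the second term we need $\nabla F$ to be $L_F$-Lipschitz. Differentiating $\nabla F=\nabla_\theta\mathcal{L}_{\text{true}}+(\partial_\theta w^*)^\top\nabla_w\mathcal{L}_{\text{true}}$ and using $\norm{\partial_\theta w^*}=\norm{H_w^{-1}\nabla^2_{w\theta}\mathcal{L}_{\text{model}}}\le L_{w\theta}/\mu_w$ gives the constant $L_F=L_\theta+L_{w\theta}^2/\mu_w$. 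We absorb higher-order Hessian-Lipschitz terms into $L_\theta$, as the paper's definition implicitly does.
\item For the first term, $\hat g$ is the hypergradient formula \eqref{eq:hypergradient_adjoint} evaluated at $w_{t-d_t}$ instead of $w^*(\theta_{t-d_t})$. The difference is Lipschitz in $w$ with constant governed by the implicit Jacobian. Combined with (A3) $\norm{w_{t-d_t}-w^*(\theta_{t-d_t})}\le\epsilon_{\mathrm{inner}}$, this yields $C_0\epsilon_{\mathrm{inner}}$ with $C_0=L_{w\theta}/\mu_w$. We may need to normalize the outer loss gradient, or fold its smoothness constant into $C_0$.
\end{itemize}

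\textbf{Part (b).} This is the main obstacle, because a lower bound must hold uniformly over step-size and delay strategies. We construct a scalar instance:
\begin{itemize}
\item inner objective $\mathcal{L}_{\text{model}}(w;\theta)=\tfrac{\mu_w}{2}(w-\theta)^2$, so $w^*(\theta)=\theta$;
\item outer loss $\mathcal{L}_{\text{true}}(w;\theta)=\tfrac12 (w-a)^2$;
\item an adversarial inner solver that always returns $w_t=w^*(\theta_t)+\epsilon_{\mathrm{inner}}$, which is permitted by (A3).
\end{itemize}
The black-box algorithm only sees gradients built from $w_t$, namely $w_t-a=(\theta-a)+\epsilon_{\mathrm{inner}}$. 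This is the exact gradient of the shifted function $\tfrac12(\theta-a+\epsilon_{\mathrm{inner}})^2$. Consequently the algorithm's information is identical to that on an instance whose minimizer is displaced by $\epsilon_{\mathrm{inner}}$.

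For the lower bound we will argue via the executed decision loss, which is cleaner. The regret counts $\mathcal{L}_{\text{true}}(w_t;\theta_t)=\tfrac12(\theta_t+\epsilon_{\mathrm{inner}}-a)^2$, while the comparator achieves $0$ at $\theta=a$. If the iterates settle at $\theta_t\approx a$, each round then pays $\tfrac12\epsilon_{\mathrm{inner}}^2$. If the iterates instead converge to $a-\epsilon_{\mathrm{inner}}$, we flip the sign of the solver error on alternate instances, or randomize it. A two-point argument then forces $\Omega(\epsilon_{\mathrm{inner}}^2)$ per round in expectation: the algorithm cannot distinguish $\pm\epsilon_{\mathrm{inner}}$ offsets, so every round incurs the squared offset.

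\textbf{Part (c).} We compare the squared transport errors using the bound from Section~\ref{sec:building_blocks}.
\begin{itemize}
\item For the naive stale gradient, $L^2\norm{\theta_t-\theta_s}^2\le L^2\bigl(\sum_{k}\norm{\theta_{k+1}-\theta_k}\bigr)^2\le L^2(\sigma_{\max}\eta G)^2$, since each OMD step has length at most $\eta G$ by (A6) and $t-s\le\sigma_{\max}$ by (A4).
\item The IGT telescoping error is instead controlled by $L^2\sum_{k}\norm{\theta_{k+1}-\theta_k}^2\le L^2\sigma_{\max}\eta^2G^2$.
\end{itemize}
The ratio of the two is the claimed factor $\sigma_{\max}$, with Cauchy--Schwarz showing that the naive bound is tight when all steps are aligned.
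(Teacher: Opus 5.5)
Your parts (a) and (c) follow the paper's own argument. In (a) you use the same triangle split through $\nabla F(\theta_{t-d_t})$, with $L_F$-smoothness for the staleness term and a Lipschitz-in-$w$ bound for the inner-error term; the paper is equally loose about why that constant is exactly $C_0=L_{w\theta}/\mu_w$. In (c) you make the same per-round comparison, $\sigma_{\max}^2\eta^2G^2$ versus $\sigma_{\max}\eta^2G^2$.

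The gap is in (b). You note that with a constant offset the oracle $\theta-a+\epsilon_{\mathrm{inner}}$ is the exact gradient of $\tfrac12(\theta-a+\epsilon_{\mathrm{inner}})^2$. In your instance that function \emph{is} the executed loss $\mathcal{L}_{\text{true}}(w_t;\theta_t)$. So under executed-decision regret the learner is just doing delayed OCO with exact gradients of the loss it actually pays. Plain delayed gradient descent with a small constant step converges to $a-\epsilon_{\mathrm{inner}}$, where $w_t=a$ and the per-round loss is $0$. That gives $o(T)$ regret on every constant-sign instance, so no two-point argument over such instances can produce $\Omega(T\epsilon_{\mathrm{inner}}^2)$.

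Concretely, there are two cases. If $a$ is known, the $\pm\epsilon_{\mathrm{inner}}$ instances are told apart by the first arriving gradient. If $a$ is hidden, the indistinguishable pair $(a,+\epsilon_{\mathrm{inner}})$ and $(a-2\epsilon_{\mathrm{inner}},-\epsilon_{\mathrm{inner}})$ has identical executed losses $\tfrac12(\theta_t+\epsilon_{\mathrm{inner}}-a)^2$, so playing the common zero of the oracle is free on both. Indistinguishability costs nothing when the charged loss is itself a function of what the learner observes. For the same reason, your branch ``iterates settle at $a$'' never arises.

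There are two ways to repair (b).

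\textbf{(i) Fresh solver error each round.} Keep executed loss but redraw the error every round. With i.i.d.\ signs $\epsilon_t=\pm\epsilon_{\mathrm{inner}}$ independent of $\theta_t$, you get $\E[\tfrac12(\theta_t+\epsilon_t-a)^2\mid\theta_t]=\tfrac12(\theta_t-a)^2+\tfrac12\epsilon_{\mathrm{inner}}^2$. Alternatively, the adaptive choice $\epsilon_t=\epsilon_{\mathrm{inner}}\,\mathrm{sign}(\theta_t-a)$ gives $\tfrac12(|\theta_t-a|+\epsilon_{\mathrm{inner}})^2\ge\tfrac12\epsilon_{\mathrm{inner}}^2$ deterministically. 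This is a noise-floor argument valid for every algorithm, not an indistinguishability argument, and it should be stated as such.

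\textbf{(ii) Charge $F$ instead of the executed loss.} This is what the paper does: it measures $F(\theta_t)-F(\theta^*)$. Its instance is $\mathcal{L}_{\text{model}}=\tfrac{\mu_w}{2}(w-b\theta)^2$, $\mathcal{L}_{\text{true}}=\tfrac12(w-a\theta)^2$, $|a-b|=C_0$, with constant $\epsilon_t=+\epsilon_{\mathrm{inner}}$, giving the oracle $C_0^2\theta+C_0\epsilon_{\mathrm{inner}}$. The paper shows that stable delayed recurrences with $\eta\le 1/(2C_0^2\sigma_{\max})$ settle at $\theta_{\mathrm{ss}}=-\epsilon_{\mathrm{inner}}/C_0$, costing $F(\theta_{\mathrm{ss}})=\epsilon_{\mathrm{inner}}^2/2$ per round. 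Note that the executed loss $\tfrac12((b-a)\theta_{\mathrm{ss}}+\epsilon_{\mathrm{inner}})^2$ is again $0$ there, so the paper's bound genuinely relies on measuring regret through $F$.

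Under $F$-regret, your displaced-minimizer observation actually gives a cleaner bound than the paper's stability argument. The hidden-$a$ pair above shares one oracle, hence one iterate sequence, while its $F$-minimizers are $a$ and $a-2\epsilon_{\mathrm{inner}}$. Pointwise $\tfrac12(\theta-a)^2+\tfrac12(\theta-a+2\epsilon_{\mathrm{inner}})^2\ge\epsilon_{\mathrm{inner}}^2$, so the larger of the two cumulative $F$-regrets is at least $\tfrac12T\epsilon_{\mathrm{inner}}^2$. This holds for every algorithm, stable or not.
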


Statement (a) of Theorem \ref{thm:staleness_amplification} reveals staleness amplification: staleness couples with the bilevel Lipschitz constant $L_F$ which is absent in single-level OCO. Statement (b) shows the $T\epsilon_{\mathrm{inner}}^2$ term in Theorem~\ref{thm:bilevel_convergence} is tight. Lastly, statement (c) demonstrates that the effect of replacing the squared total drift with a sum of squared per-step changes through our telescoping correction is exactly~$\sigma_t$ by the Cauchy--Schwarz inequality~\cite{steele2004cauchy} (confirmed numerically at $R^2>0.99$ in Section~\ref{sec:exp_sinkhorn}).

\subsection{Inner-Loop Apathy Decomposition}
Our third question is whether the delay error and the inner-solver error interact. The following decomposition shows that they do not, as transport decouples them.

\begin{theorem}[Inner-loop apathy]
\label{thm:inner_loop_apathy}
Under Assumptions~\ref{app:ass:inner_convex}--\ref{app:ass:bilevel_convex}, the decision regret decomposes as:
\begin{equation}
\label{eq:apathy}
\mathrm{Regret}_T^{\mathrm{dec}} = \underbrace{O\!\bigl(\eta_0^2 G^2 T\sigma_{\max}\bigr)}_{\text{delay error}\,(R_1)} + \underbrace{O(T\,\epsilon_{\mathrm{inner}}^2)}_{\text{inner bias}\,(R_2)} + \underbrace{O\!\Bigl(C_{\mathrm{ap}}\!\sum_{t=1}^T\sum_{k\in W_t}\norm{\theta_{k+1}-\theta_k}^2\Bigr)}_{\text{interaction}\,(R_3)}.
\end{equation}
The interaction $R_3$ uses per-step squared changes ($\sigma_t\eta^2G^2$) instead of squared total drift ($\sigma_t^2\eta^2G^2$)--a $\sigma_t$ factor improvement; here $C_{\mathrm{ap}}$ is an explicit bounded constant, so $\epsilon_{\mathrm{inner}}$ remains additive rather than multiplied by delay.
\end{theorem}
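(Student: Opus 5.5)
We would prove Theorem~\ref{thm:inner_loop_apathy} by reorganizing the regret analysis behind Theorem~\ref{thm:bilevel_convergence}, not by starting afresh. The idea is to split the gradient error of the IGT-corrected direction into three orthogonal pieces before applying the OMD regret inequality.

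\textbf{Step 1: error decomposition.} For each round write
\[
g_t^{\mathrm{IGT}}-\nabla F(\theta_t)=e_t^{\mathrm{del}}+e_t^{\mathrm{in}}+e_t^{\mathrm{int}}.
\]
Here $e_t^{\mathrm{del}}$ is the error of the exactly transported gradient that would be obtained with exact inner solutions $w^*(\theta_s)$ and exact adjoints. The term $e_t^{\mathrm{in}}$ is the bias from replacing $(w^*(\theta_s),v^*(\theta_s))$ by the stored $(w_s,v_s^*)$ evaluated at the arrival parameter. The term $e_t^{\mathrm{int}}$ is the residual from reusing the cached $(w_s,v_s^*)$ across the transport window while $\theta$ moves.

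We would bound $e_t^{\mathrm{in}}$ using Theorem~\ref{thm:staleness_amplification}(a) specialized to zero drift, giving $\|e_t^{\mathrm{in}}\|\le C_0\epsilon_{\mathrm{inner}}$. This uses A1--A3 and A5 through the IFT and adjoint perturbation bound $\|v_s^*-v^*(\theta_s)\|\lesssim \epsilon_{\mathrm{inner}}/\mu_w$.

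\textbf{Step 2: plug into strongly convex OMD.} We would use the one-step OMD inequality
\[
\langle g_t,\theta_t-\theta^\star\rangle\le \frac{D_\psi(\theta^\star,\theta_t)-D_\psi(\theta^\star,\theta_{t+1})}{\eta_t}+\frac{\eta_t}{2}\|g_t\|^2 .
\]
The error inner products are handled by Young's inequality, with half of each absorbed into the $\mu_F$ strong-convexity term from A7. This yields squared rather than linear error terms, which is why $\epsilon_{\mathrm{inner}}^2$ appears and not $\epsilon_{\mathrm{inner}}$. It also explains the $C_\rho$ factor: the absorption succeeds only when $\rho_{\mathrm{cpl}}<1$.

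Summing over rounds gives the following contributions:
\begin{itemize}
\item $\sum_t\|e_t^{\mathrm{in}}\|^2/\mu_F=O(T\epsilon_{\mathrm{inner}}^2)$, which is $R_2$.
\item The drift term $\sum_t\|e_t^{\mathrm{del}}\|^2$, bounded by $L_F^2\|\theta_t-\theta_{t-\sigma_t}\|^2$ on the stale component. Under IGT the telescoped version is bounded via the IGT path-variation inequality of Section~\ref{sec:building_blocks} and $\|\theta_{k+1}-\theta_k\|\le\eta_kG$ (A6). This gives at most $\sigma_t\eta_0^2G^2$ per round, which is $R_1$.
\item The remainder, which is $R_3$.
\end{itemize}

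\textbf{Step 3: the interaction term, the main obstacle.} We must show $e_t^{\mathrm{int}}$ is controlled by the per-step sum $\sum_{k\in W_t}\|\theta_{k+1}-\theta_k\|^2$ with a constant independent of $\epsilon_{\mathrm{inner}}$. It must not be a product like $\epsilon_{\mathrm{inner}}\cdot\|\theta_t-\theta_s\|$.

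The plan is to write each transport increment $g_s(\theta_{k+1})-g_s(\theta_k)$ via the mean value theorem. Its Lipschitz constant in $\theta$ is at most
\[
L_\theta+L_{w\theta,\theta}\|v_s^*\|\le L_\theta+L_{w\theta,\theta}G/\mu_w,
\]
which depends only on the bounded stored adjoint and not on how close $w_s$ is to $w^*$. The cross term between $e^{\mathrm{in}}$ and $e^{\mathrm{int}}$ is split by Young's inequality into $\epsilon_{\mathrm{inner}}^2$, folded into $R_2$, plus squared per-step changes, folded into $R_3$. This is the step that keeps $\epsilon_{\mathrm{inner}}$ additive.

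Collecting constants defines
\[
C_{\mathrm{ap}}=C_\rho\bigl(L_F+(L_\theta+L_{w\theta,\theta}G/\mu_w)^2/\mu_F\bigr).
\]
The delicate point is that the cached adjoint was computed at $\theta_s$ rather than at the moving iterate. We expect to need the Lipschitz cross-partials (A5) together with the bounded-$\|v^*\|$ consequence of A1 and A6 to close this without an extra $\sigma_{\max}$ factor.

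Finally, we note that the envelope step size $\eta_t\le\eta_0$ and the window size $|W_t|\le\sigma_{\max}$ (A4) turn $R_3$ into $O(C_{\mathrm{ap}}T\sigma_{\max}\eta_0^2G^2)$. This gives the claimed $\sigma_t$ rather than $\sigma_t^2$ scaling, by the Cauchy--Schwarz comparison in Theorem~\ref{thm:staleness_amplification}(c).
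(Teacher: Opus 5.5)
\textbf{Gap in the bound on $e_t^{\mathrm{del}}$.} Your Step~2 bound on $e_t^{\mathrm{del}}$ does not hold, and it is the same ``delicate point'' you leave open at the end of Step~3. Re-evaluation holds $(w,v)$ frozen at $\theta_s$. For a frozen surrogate the telescope is exact, so the transported gradient built from $(w^*(\theta_s),v^*(\theta_s))$ is simply $g_s(\theta_t)$, with a stale inner solution and a stale adjoint. Its deviation from $\nabla F(\theta_t)$ is \emph{first order} in $\norm{\theta_t-\theta_s}$.

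The path-variation inequality of Section~\ref{sec:building_blocks} does not see this deviation. It bounds $\sum_k\norm{g_s(\theta_{k+1})-g_s(\theta_k)}^2$, the squared increments of the surrogate. It does not bound the surrogate's mismatch with the true hypergradient, nor $\norm{\sum_k\cdot}^2$, which can be $\sigma_t$ times larger.

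Concretely, take the scalar instance of Theorem~\ref{thm:staleness_amplification}(b) with $w_s=b\theta_s$ and $v_s=(b-a)\theta_s/\mu_w$. The surrogate is $g_s(\theta)=a^2\theta+b(b-2a)\theta_s$. Its increments are $a^2(\theta_{k+1}-\theta_k)$, yet $g_s(\theta_t)-\nabla F(\theta_t)=b(b-2a)(\theta_s-\theta_t)$. For equal consecutive steps this gives $\norm{e_t^{\mathrm{del}}}^2=b^2(b-2a)^2(t-s)\sum_{k=s}^{t-1}\norm{\theta_{k+1}-\theta_k}^2$, i.e.\ order $\sigma_t^2\eta_0^2G^2$ rather than $\sigma_t\eta_0^2G^2$.

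The mean-value bound on transport increments you propose in Step~3 is again a statement about increments of a frozen surrogate, so it does not close this either. As written, nothing in your argument controls the stale-inner-solution drift.

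\textbf{How the paper handles it.} The paper's proof compares the stored $w_s$ with the \emph{current} exact solution $w^*(\theta_t)$ instead of $w^*(\theta_s)$. Its ideal estimator uses $w^*(\theta_t)$, so this first-order term sits in $\delta_t^{(2)}$ rather than in the transport error $\delta_t^{(1)}$. Two lemmas then split the inner-staleness error:
\begin{enumerate}
\item Lemma~\ref{lem:inner_sens} (via the IFT) gives $\norm{w_s-w^*(\theta_t)}\le\epsilon_{\mathrm{inner}}+C_1\norm{\theta_s-\theta_t}$.
\item Lemma~\ref{lem:hyp_stale} makes $w\mapsto g_s(\theta_t,w)$ $L_g$-Lipschitz.
\end{enumerate}
Together these split the error into a pure bias $L_g\sigma_t\epsilon_{\mathrm{inner}}$, which becomes $R_2$ after Young and $\mu_F$-absorption, and an explicit interaction $\kappa\sum_{s\in Q_t}\norm{\theta_s-\theta_t}$. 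Only after squaring that interaction by Young does the proof pass to per-step squares, via $\norm{\theta_s-\theta_t}^2\le\sigma_t\sum_{k=s}^{t-1}\norm{\theta_{k+1}-\theta_k}^2$ and a multiplicity count. This is why $C_{\mathrm{ap}}$ absorbs $\kappa^2\sigma_{\max}^3/(4\mu_F)$.

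\textbf{Consequences for your constants.} Your aim of a $\sigma_{\max}$-free $C_{\mathrm{ap}}$ cannot be met this way. For aligned steps, $\norm{\theta_t-\theta_s}^2=(t-s)\sum_{k=s}^{t-1}\norm{\theta_{k+1}-\theta_k}^2$ holds with equality, so a squared first-order drift necessarily carries at least one factor of $\sigma_{\max}$ beyond the per-step sum. The statement only asks for a bounded constant, so let $C_{\mathrm{ap}}$ depend on $\sigma_{\max}$. Also, because you use several Young steps, you must split the single per-round gain $(\mu_F/2)\norm{\theta_t-\theta^*}^2$ among them (Remark~\ref{rem:young_budget}) rather than absorbing half of each.
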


This theorem implies that the optimizer becomes insensitive to the inner solver precision, suggesting ``inner-loop apathy.'' Moreover, \gls{igt} decouples the outer staleness from the inner-solver quality, with $\epsilon_{\mathrm{inner}}$ entering additively.

\subsection{DDE Stability and Discrete Consistency}
\label{sec:dde_stability}
The adaptive step size $\eta_t$ is not heuristic: it arises from a stability analysis of the continuous-time limit. Taking $\eta\!\to\!0$ in a bounded active-window embedding, the IGT-OMD iterates~\eqref{eq:omd_update} converge to a \gls{dde} whose characteristic roots determine a sufficient stability certificate.
\begin{proposition}[\gls{dde} stability region]
\label{prop:dde_stability}
Suppose $\sigma_{\max} < 1/\beta$ and $\theta^* \in \mathrm{relint}(\Theta)$. Consider the continuous-time limit of IGT-OMD,\, $\dot{\theta}(t) = -\eta(t)\,g^{\mathrm{IGT}}\bigl(\theta(t),\,\{\theta(t-\tau_s)\}_{s\in Q(t)}\bigr)$, with $\eta(t)=\eta_0/\sqrt{1+\beta\bar\sigma(t)}$. Linearizing around $\theta^{*}$, the system is asymptotically stable whenever $\eta_0 < \min\{1/(\|H_F\|\sqrt{1+\beta\sigma_{\max}}),\sqrt{1+\beta\sigma_{\max}}/(\sigma_{\max}\sqrt{\|L_{\mathrm{IGT}}\|})\}$,
where $\|H_F\|$ denotes its spectral norm (equal to $\lambda_{\max}(H_F)$; $H_F \succ 0$). This is a sufficient local certificate for the bounded active-window embedding used in the analysis.
\end{proposition}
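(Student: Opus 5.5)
We linearize the continuous-time dynamics around $\theta^*$ and reduce stability to a scalar characteristic-equation condition along each eigendirection of $H_F$. Because $\theta^*\in\mathrm{relint}(\Theta)$, the mirror/projection step is inactive near $\theta^*$, so the dynamics are the unconstrained flow $\dot\theta=-\eta(t)\,g^{\mathrm{IGT}}$. Since $\bar\sigma(t)$ is monotone and bounded by $\sigma_{\max}$, it is eventually constant. Asymptotic stability is a tail property, so we may freeze $\eta(t)\equiv\eta:=\eta_0/\sqrt{1+\beta\sigma_{\max}}$, which is the worst case because it is the smallest step and the one paired with the largest window. In the bounded active-window embedding we write the linearized corrected gradient as
\[
g^{\mathrm{IGT}}\approx H_F(\theta(t)-\theta^*)+\sum_{s\in Q(t)} L_{\mathrm{IGT}}\bigl(\theta(t)-\theta(t-\tau_s)\bigr),
\]
with delays $\tau_s\le\sigma_{\max}$ (time is measured in rounds). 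The first term is the arrival hypergradient, linearized via the IFT Hessian $H_F\succ0$ from (A7). The second term is the linearized transport residual: the re-evaluation \eqref{eq:reevaluation} uses cached $(w_s,v_s^*)$, and (A5) makes this residual Lipschitz in the displacement, with constant $\|L_{\mathrm{IGT}}\|$.

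We set $x=\theta-\theta^*$ and work with the linear DDE
\[
\dot x=-\eta H_F x-\eta\sum_{s}L_{\mathrm{IGT}}\bigl(x(t)-x(t-\tau_s)\bigr).
\]
The plan is to certify stability by a Razumikhin/Lyapunov–Krasovskii argument rather than by locating characteristic roots exactly, since only a sufficient certificate is claimed. We take $V(x)=\tfrac12\|x\|^2$ and write $x(t)-x(t-\tau)=\int_{t-\tau}^t\dot x$. This gives
\[
\dot V\le-\eta\lambda_{\min}(H_F)\|x\|^2+\eta\,\sigma_{\max}\|L_{\mathrm{IGT}}\|\,\|x\|\,\tau_{\max}\sup_{[t-2\tau_{\max},t]}\|\dot x\|.
\]
We then bound $\|\dot x\|\le\eta(\|H_F\|+2\sigma_{\max}\|L_{\mathrm{IGT}}\|)\sup\|x\|$. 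Under a Razumikhin hypothesis $\|x(u)\|\le q\|x(t)\|$ on the history window, the delayed term is of order $\eta^2\sigma_{\max}^2\|L_{\mathrm{IGT}}\|\,\|x\|^2$, once the constants are absorbed using the condition $\eta\|H_F\|<1$.

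The two stated conditions then play separate roles. The first, $\eta_0<1/(\|H_F\|\sqrt{1+\beta\sigma_{\max}})$, is exactly $\eta\|H_F\|<1$. This keeps the undelayed part contractive and justifies treating the round-scale flow as consistent with the discrete iterates. The second, $\eta_0<\sqrt{1+\beta\sigma_{\max}}/(\sigma_{\max}\sqrt{\|L_{\mathrm{IGT}}\|})$, rewritten in terms of $\eta$ and the normalized window, gives $\eta^2\sigma_{\max}^2\|L_{\mathrm{IGT}}\|$ below a constant multiple of $\eta\lambda_{\min}(H_F)$. Here the assumption $\sigma_{\max}<1/\beta$, together with $\beta=\|L_{\mathrm{IGT}}\|/\lambda_{\min}(H_F)$, is used to trade $\|L_{\mathrm{IGT}}\|$ against $\lambda_{\min}(H_F)$: it guarantees $\beta\sigma_{\max}<1$, so $\sqrt{1+\beta\sigma_{\max}}\in[1,\sqrt2)$ and the factors of $\sqrt{1+\beta\sigma_{\max}}$ cost only absolute constants. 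This yields $\dot V\le-c\,\eta\lambda_{\min}(H_F)V$ along Razumikhin times, and hence exponential stability. As a cross-check, we would verify along eigendirections that the characteristic function $\lambda+\eta h+\eta\ell\sum_s(1-e^{-\lambda\tau_s})$ has no roots with $\mathrm{Re}\,\lambda\ge0$. Using $|1-e^{-\lambda\tau}|\le|\lambda|\tau$ for $\mathrm{Re}\,\lambda\ge0$, a root would force $|\lambda|(1-\eta\ell\sigma_{\max}\tau_{\max})\le\eta h$, which contradicts the stated conditions.

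The main obstacle is getting the constants to land exactly in the stated form, in particular the square root on $\|L_{\mathrm{IGT}}\|$ and the matching of time units. The natural Razumikhin bound gives $\eta\sigma_{\max}\|L_{\mathrm{IGT}}\|\tau_{\max}\lesssim1$, which is linear in $\|L_{\mathrm{IGT}}\|$. To obtain the stated square root, we would use the second-order form: after one substitution of $\dot x$, the delayed term scales as $\eta^2\sigma_{\max}^2\|L_{\mathrm{IGT}}\|$, and requiring this to be at most $1$ (in normalized units where $\lambda_{\min}(H_F)$ is absorbed via $\beta$) gives precisely $\eta<1/(\sigma_{\max}\sqrt{\|L_{\mathrm{IGT}}\|})$. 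If the constants do not line up, we would present the result with an absolute constant and note that the claim is only a sufficient local certificate.
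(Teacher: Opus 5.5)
Your argument breaks where the Razumikhin estimate is matched to the hypotheses. The Newton--Leibniz rewrite $x(t)-x(t-\tau_s)=\int_{t-\tau_s}^{t}\dot x$ turns the problem into a delay-dependent one. Your own bounds need $q\,\eta\,\sigma_{\max}\tau_{\max}\|L_{\mathrm{IGT}}\|\bigl(\|H_F\|+2\sigma_{\max}\|L_{\mathrm{IGT}}\|\bigr)<\lambda_{\min}(H_F)$, with $\tau_{\max}=\sigma_{\max}$ in the worst case, and this does not follow from the stated conditions. Calling the delayed term ``of order $\eta^2\sigma_{\max}^2\|L_{\mathrm{IGT}}\|$'' silently drops the rate factor $\eta(\|H_F\|+2\sigma_{\max}\|L_{\mathrm{IGT}}\|)$. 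Even granting that estimate, the second $\eta_0$-bound is the absolute bound $\eta^2\sigma_{\max}^2\|L_{\mathrm{IGT}}\|<1$, not a bound relative to $\eta\lambda_{\min}(H_F)$. Concretely, take $H_F=I$, $L_{\mathrm{IGT}}=0.09\,I$, $\sigma_{\max}=10$, $\eta=0.3$ (so $\eta_0=0.3\sqrt{1.9}$). These satisfy $\beta\sigma_{\max}=0.9<1$ and both $\eta_0$-bounds, yet the left side of your requirement is $7.56\,q>1=\lambda_{\min}(H_F)$. The ``normalized units'' step does not repair this, and falling back to an absolute constant would prove a different statement. Your cross-check has the same defect: $|\lambda|(1-\eta\ell\sigma_{\max}\tau_{\max})\le\eta h$ is not a contradiction, and $\eta\ell\sigma_{\max}^2<1$ is not among the hypotheses.

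The missing idea is the real role of $\sigma_{\max}<1/\beta$. It is not there to keep $\sqrt{1+\beta\sigma_{\max}}<\sqrt2$. With $\beta=\|L_{\mathrm{IGT}}\|/\lambda_{\min}(H_F)$ it says exactly $\lambda_{\min}(H_F)>\sigma_{\max}\|L_{\mathrm{IGT}}\|$: the instantaneous restoring term dominates all delayed feedback, which gives stability for every delay. The paper works with the linearization $\dot\xi=-\eta H_F\xi+\eta L_{\mathrm{IGT}}\sum_{s\in Q(t)}\xi(t-\tau_s)$ and uses this condition directly, after reducing to the worst scalar mode by a norm-comparison argument.
\begin{itemize}
\item For frozen $\sigma$, roots with $\mathrm{Re}\,\lambda\ge0$ are excluded by the modulus bound $|\lambda+\eta\lambda_{\min}(H_F)|\ge\eta\lambda_{\min}(H_F)>\eta\sigma\|L_{\mathrm{IGT}}\|\ge\eta\sigma\|L_{\mathrm{IGT}}\|\,|e^{-\lambda\bar\tau}|$.
\item For time-varying $\sigma(t)$, take $V=\tfrac12\|\xi\|^2$ and the Razumikhin hypothesis $\|\xi(t-\tau_s)\|<r\|\xi(t)\|$ with $1<r<(\beta\sigma_{\max})^{-1}$. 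This gives $\dot V\le-\eta(t)\bigl(\lambda_{\min}(H_F)-r\,\sigma_{\max}\|L_{\mathrm{IGT}}\|\bigr)\|\xi\|^2<0$.
\end{itemize}
No $\int\dot x$ step is needed and $\tau$ never enters. The $\eta_0$-bounds are carried only as a certificate: the $\sqrt{\|L_{\mathrm{IGT}}\|}$ comes from substituting $\bar\tau=\sigma\eta$ into $\eta\|L_{\mathrm{IGT}}\|\sigma\bar\tau<1$.

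The same direct bound also rescues your own model. Write $-\eta\sum_sL_{\mathrm{IGT}}(x-x(t-\tau_s))=-\eta|Q(t)|L_{\mathrm{IGT}}x+\eta L_{\mathrm{IGT}}\sum_sx(t-\tau_s)$. The first piece is dissipative, because the paper's $L_{\mathrm{IGT}}=\nabla^2_{\theta w}\mathcal{L}_{\text{model}}[\nabla^2_{ww}\mathcal{L}_{\text{model}}]^{-1}\nabla^2_{w\theta}\mathcal{L}_{\text{model}}\succeq0$. Note, though, that your model is not the paper's. Applied to the paper's model, your rewrite would leave the undelayed part $-\eta(H_F-\sigma L_{\mathrm{IGT}})x$, and $H_F-\sigma L_{\mathrm{IGT}}\succ0$ is again guaranteed precisely by $\beta\sigma_{\max}<1$.
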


This result provides \emph{local} asymptotic stability for the linearized, homogeneous system ($\epsilon_{\mathrm{inner}}=0$). In the analyzed embedding, the certificate is indexed by queue length $\sigma_{\max}$ rather than raw delay, matching the constant $\eta_{\max}=0.093$ observed across $\sigma \in \{1,\ldots,100\}$ in Table~\ref{tab:lqr}.


\begin{proposition}[Discrete-continuous consistency]
\label{prop:discrete_consistency}
Let $\theta^{\mathrm{disc}}_k$ denote IGT-OMD iterates and $\theta^{\mathrm{cont}}(t)$ the solution to the \gls{dde} in Proposition~\ref{prop:dde_stability} with the same initial history. Under the assumptions of Proposition~\ref{prop:dde_stability} and a uniform step bound $\eta_t \leq \eta_{\max}$, the discrete trajectory tracks the continuous solution:
\(
\sup_{k\,\leq\,T}\,\bigl\|\theta^{\mathrm{disc}}_k - \theta^{\mathrm{cont}}(t_k)\bigr\| \;\leq\; C\,\eta_{\max},
\)
with $t_k = \sum_{s\leq k}\eta_s$ and a constant $C$ depending on $T$, $\norm{H_F}$, $\norm{L_{\mathrm{IGT}}}$, and $\sigma_{\max}$ but not on the discretization. Consequently, the discrete iterates inherit the asymptotic stability and contraction rate of the \gls{dde} up to $O(\eta_{\max})$ error.
\end{proposition}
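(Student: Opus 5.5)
The plan is to treat the discrete IGT-OMD iteration as a forward-Euler-type discretization of the delay differential equation in Proposition~\ref{prop:dde_stability}, with nonuniform step sizes $\eta_k$. The argument is a Gronwall estimate adapted to delays. Write $e_k := \theta^{\mathrm{disc}}_k - \theta^{\mathrm{cont}}(t_k)$. Both trajectories share the same initial history on the bounded active window, so $e_k = 0$ for all indices before the first update.

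The continuous solution satisfies
\[
\theta^{\mathrm{cont}}(t_{k+1}) = \theta^{\mathrm{cont}}(t_k) - \int_{t_k}^{t_{k+1}} g^{\mathrm{IGT}}\bigl(\theta^{\mathrm{cont}}(u),\{\theta^{\mathrm{cont}}(u-\tau_s)\}\bigr)\,du ,
\]
taken in the rescaled time $t_k=\sum_{s\le k}\eta_s$, which absorbs the factor $\eta(t)$. The discrete step is
\[
\theta^{\mathrm{disc}}_{k+1} = \theta^{\mathrm{disc}}_k - \eta_k\, g^{\mathrm{IGT}}\bigl(\theta^{\mathrm{disc}}_k,\{\theta^{\mathrm{disc}}_{k-\tau_s}\}\bigr).
\]
Near $\theta^* \in \mathrm{relint}(\Theta)$ the projection is inactive, so the OMD step reduces to this unconstrained form. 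Subtracting the two gives $e_{k+1}=e_k-\eta_k\,\Delta_k - r_k$, where $\Delta_k$ is the difference of the drift evaluated on the two histories and $r_k$ is the local truncation (quadrature) error.

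The proof then takes three steps.

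\textbf{Lipschitz bound on the drift.} The drift is a sum of the current hypergradient and at most $\sigma_{\max}$ transport terms. Using A5 (Lipschitz cross-partials) and the linearized structure $H_F$, $L_{\mathrm{IGT}}$, I obtain
\[
\|\Delta_k\|\le \|H_F\|\,\|e_k\| + \sigma_{\max}\|L_{\mathrm{IGT}}\|\max_{j\in W_k}\|e_j\| .
\]

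\textbf{Local truncation error.} The bound A6 gives $\|\dot\theta^{\mathrm{cont}}\|\le G$. Together with the Lipschitz drift, this yields $\|r_k\|\le C_1\eta_k^2\le C_1\eta_{\max}\eta_k$. I also need that the discrete delays $\tau_s$ and the continuous delays, measured in rescaled time, differ by at most $\sigma_{\max}\eta_{\max}$. This contributes a further $O(\sigma_{\max}G\eta_{\max})$ per-step history mismatch, which is again of order $\eta_{\max}\eta_k$ after multiplication by the Lipschitz constant.

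\textbf{Discrete Gronwall inequality with delay.} Set $E_k:=\max_{j\le k}\|e_j\|$. The two bounds above give $E_{k+1}\le (1+\Lambda\eta_k)E_k + C_2\eta_{\max}\eta_k$ with $\Lambda=\|H_F\|+\sigma_{\max}\|L_{\mathrm{IGT}}\|$. Iterating yields
\[
E_T\le C_2\,\eta_{\max}\, t_T\, e^{\Lambda t_T}\le C\,\eta_{\max},
\]
where $t_T\le T\eta_{\max}$ is bounded in terms of $T$. This is the claimed bound, with $C$ depending only on $T$, $\|H_F\|$, $\|L_{\mathrm{IGT}}\|$, and $\sigma_{\max}$.

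For the final sentence of the proposition, I invoke Proposition~\ref{prop:dde_stability}: the DDE contracts at some rate $e^{-\gamma t}$ toward $\theta^*$. The discrete iterates therefore satisfy $\|\theta^{\mathrm{disc}}_k-\theta^*\|\le \|\theta^{\mathrm{cont}}(t_k)-\theta^*\|+C\eta_{\max}$, so they inherit the stability and contraction rate up to $O(\eta_{\max})$.

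The main obstacle is the history mismatch in the truncation step. The queue $Q_t$, and hence the set of delayed arguments, changes discontinuously with $t$. The piecewise-constant envelope $\bar\sigma_t$ also makes $\eta(t)$ discontinuous. My first attempt would use the bounded active-window embedding: restrict to the finitely many switching times, at most $T$, and apply the Gronwall argument on each interval where $Q$ and $\bar\sigma$ are constant. Since the error recursion is uniform across intervals, the pieces concatenate without loss. If jumps in $\eta$ cause trouble, I would instead absorb them by working in the rescaled time $t_k$, where the DDE has unit speed.
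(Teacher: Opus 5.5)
Your tracking estimate is sound and is essentially the paper's discrete-Gronwall lemma. In fact your running maximum $E_k=\max_{j\le k}\|e_j\|$ handles the delayed arguments more carefully than the paper's recursion $\|e_{k+1}\|\le(1+\eta_kL_f)\|e_k\|+\tfrac{\eta_k^2}{2}M$, which bounds the delayed part of the drift difference by $\|e_k\|$ alone. For the displayed bound with a $T$-dependent constant this is enough. You should add that Proposition~\ref{prop:dde_stability} gives $\eta_{\max}\le\eta_0<1/\|H_F\|$, so that $t_T\le T/\|H_F\|$ does not depend on the discretization.

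\textbf{The gap is in the final sentence.} Your constant is $C=C_2t_Te^{\Lambda t_T}$, and the hypotheses of Proposition~\ref{prop:dde_stability} never enter your error recursion. For a fixed step size, the contraction $e^{-\gamma t_k}$ of the DDE only matters at large $t_k$. That is exactly where $C\eta_{\max}$ explodes. So $\|\theta^{\mathrm{disc}}_k-\theta^*\|\le\|\theta^{\mathrm{cont}}(t_k)-\theta^*\|+C\eta_{\max}$ certifies neither $\theta^{\mathrm{disc}}_k\to\theta^*$ nor the contraction rate; it is only a finite-horizon statement. The paper explicitly calls the factor $e^{L_f t_T}$ an artifact of ignoring stability, and removes it before drawing this conclusion.

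The missing idea is to let the stability margin act inside the error recursion, replacing Gronwall by a contraction. For the linearized system, $e_{k+1}=(I-\eta_kH_F)e_k+\eta_kL_{\mathrm{IGT}}\sum_{s}e_{k-\ell_s}+r_k$ with integer lags $\ell_s$. Let $\mu_{\min}=\lambda_{\min}(H_F)$. Since $H_F$ is symmetric positive definite and $\eta_k<1/\|H_F\|$, we have $\|I-\eta_kH_F\|=1-\eta_k\mu_{\min}$, hence
\[
E_{k+1}\le\max\bigl\{E_k,\;(1-\eta_kc)E_k+C_2\eta_k\eta_{\max}\bigr\},\qquad c:=\mu_{\min}-\sigma_{\max}\|L_{\mathrm{IGT}}\|>0 .
\]
Here $c>0$ is precisely the hypothesis $\sigma_{\max}<1/\beta$. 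The constant $C_2$ is uniform in $k$ because $\theta^{\mathrm{cont}}$ stays bounded in the stable regime. Induction from $E_0=0$ then gives $E_k\le C_2\eta_{\max}/c$ for every $k$, with no dependence on $T$.

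This is the paper's ``stability-aware'' refinement. The paper supplements it with a z-domain analysis of the principal and parasitic roots of the frozen-queue discrete characteristic polynomial. Only with a horizon-uniform constant like this does your triangle-inequality step give inheritance of asymptotic stability and rate up to $O(\eta_{\max})$.

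Relatedly, $\|H_F\|$ and $\|L_{\mathrm{IGT}}\|$ are Lipschitz constants of the linearized drift only. Either run the whole comparison on the linearization, or keep the nonlinear $g^{\mathrm{IGT}}$ on both sides. In the second case, restrict to a neighborhood of $\theta^*$ where its Lipschitz and strong-monotonicity constants are within a small slack of $\|H_F\|$, $\mu_{\min}$ and $\|L_{\mathrm{IGT}}\|$.
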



Proposition \ref{prop:discrete_consistency} follows because Algorithm~\ref{alg:igt_omd} is a forward-Euler discretization of the \gls{dde}. This result shows the stability region and contraction rate from the continuous-time system--particularly the $\eta_{\max}$-dependent step size calibration of~\eqref{eq:adaptive_step}--transfer to discrete iterates in practice, with error vanishing in $\eta_{\max}$.

\begin{remark}[Interior equilibrium]
\label{rem:interior}
Both propositions require $\theta^{*}{\in}\mathrm{relint}(\Theta)$ for \gls{dde} linearization; the regret bounds (Theorems~\ref{thm:bilevel_convergence}--\ref{thm:inner_loop_apathy}) hold without this restriction. The boundary case is handled by the OMD projection in~\eqref{eq:omd_update}.
\end{remark}

\section{Experiments}
\label{sec:experiments}
We evaluate \gls{igt}-\gls{omd} across four environments that jointly test our theoretical claims: (i) LQR for stability under delay, (ii) Warcraft for staleness amplification, (iii) Sinkhorn OT for the $\sigma_{\max}$-factor transport error scaling, and (iv) a controlled experiment isolating the transport contribution with optimizer choice held fixed. Our baselines span a $2{\times}2$ matrix (bilevel-aware vs.\ single-level $\times$ delay-aware vs.\ delay-unaware): 2-Stage~(MSE), SPO+~\cite{elmachtoub2022smart}, D-FTRL~\cite{joulani2013online}, Robust~\gls{omd}~\cite{quanrud2015online}, and Stale~\gls{omd} \cite{nemirovski2009robust}. Additional details on our experiments and additional analyses are in Appendices \ref{app:exp_details} and \ref{app:additional}. All experiments use 5--10 seeds; error bars report $\pm$1 standard deviation.

\subsection{Linear Quadratic Regulator Stability Boundary}
\label{sec:exp_lqr}
\textbf{Setup.}\; An LQR provides a clean bilevel stability testbed: the inner problem computes a linear gain for a learned dynamics model, with the inner configuration held fixed to isolate delay handling. We use true dynamics $x_{t+1}=A_{\rm true}x_t+B_{\rm true}u_t+\xi_t$ and learned parameters $\theta=(\hat A,\hat B)\in\R^{10\times13}$. Constant delays $d\in\{1,10,20,40,60,80,100\}$ span both the stable regime ($d\leq40$, all methods converge) and the phase-transition region predicted by Proposition~\ref{prop:dde_stability} ($d\geq60$, bilevel-unaware methods degrade); beyond $d=100$ baselines diverge entirely. The maximum stable learning rate $\eta_{\max}$ is found via binary search.

\textbf{Results.}\; Table~\ref{tab:lqr} reveals a two-regime structure: \gls{igt}-\gls{omd} maintains constant $\eta_{\max}=0.093$ across all $\sigma\leq100$ (Proposition~\ref{prop:dde_stability}), while bilevel-unaware methods (2-Stage, SPO+) degrade to $0.010$ at $\sigma=100$ ($89\%$ reduction). Single-level delay-aware methods degrade more slowly but still fall to $0.052$ ($44\%$). \gls{igt}-\gls{omd}'s achieves $9.3\times$ and $1.8\times$ gains over 2-Stage and D-FTRL respectively.

\begin{remark}[Constant delay as adversarial worst case]
\label{rem:constant_delay}
Constant $d_t=d$ gives $\sigma_{\max}=d$ and maximizes $\sum_t\sigma_t$ among all delay patterns with the same total feedback budget. Hence, our constant-delay experiments are adversarial; variable delays can only improve on these bounds. We additionally validate under uniform delays in Appendix~\ref{app:adversarial_delay}.
\end{remark}


\begin{table}[t]
\centering
\caption{LQR stability boundary: maximum stable learning rate $\eta_{\max}$ vs.\ queue length $\sigma$. \gls{igt}-\gls{omd} maintains constant $\eta_{\max}$ across all delays; bilevel-unaware methods (2-Stage, SPO+) degrade earliest.}
\label{tab:lqr}
\footnotesize
\begin{tabular}{@{}llccccccc@{}}
\toprule
& \textbf{Algorithm} & $\sigma{=}1$ & $\sigma{=}10$ & $\sigma{=}20$ & $\sigma{=}40$ & $\sigma{=}60$ & $\sigma{=}80$ & $\sigma{=}100$ \\
\midrule
\multirow{3}{*}{\rotatebox[origin=c]{90}{\scriptsize Bilevel}} 
& \textbf{\gls{igt}-\gls{omd} (ours)} & \textbf{0.093} & \textbf{0.093} & \textbf{0.093} & \textbf{0.093} & \textbf{0.093} & \textbf{0.093} & \textbf{0.093} \\
& 2-Stage (MSE) & 0.093 & 0.075 & 0.039 & 0.022 & 0.015 & 0.012 & 0.010 \\
& SPO+ & 0.093 & 0.081 & 0.039 & 0.021 & 0.013 & 0.011 & 0.009 \\
\midrule
\multirow{3}{*}{\rotatebox[origin=c]{90}{\scriptsize Single}} 
& D-FTRL & 0.093 & 0.093 & 0.093 & 0.093 & 0.065 & 0.060 & 0.052 \\
& Robust OMD & 0.093 & 0.093 & 0.093 & 0.093 & 0.075 & 0.060 & 0.056 \\
& Stale OMD & 0.093 & 0.093 & 0.093 & 0.093 & 0.070 & 0.056 & 0.052 \\
\bottomrule
\end{tabular}
\end{table}
\subsection{Warcraft Shortest Path: A structural contrast}
\label{sec:exp_warcraft}

Having established stability in a linear setting, we turn to a combinatorial optimization benchmark to test whether staleness amplification degrades the quality of real decisions.

\textbf{Setup.}\; We study shortest-path planning on $12{\times}12$ grids from Warcraft~II maps with four terrain types. The outer parameters $\theta \in \R^{128}$ index the trainable hidden-layer column of a 2-layer neural network predicting per-cell traversal costs, and outer gradients are computed with the differentiable perturbation surrogate of~\citet{vlastelica2019differentiation}. The inner problem runs Dijkstra's algorithm to find the shortest path---an exact solver, so $\epsilon_{\mathrm{inner}}=0$. The decision loss evaluates the chosen path under true terrain costs; the \emph{optimality gap} measures the excess cost over the oracle shortest path. We test constant delays $d\in\{0,10,50,100\}$ and stochastic Poisson delays with mean $\lambda\in\{10,25,50,100\}$ under OU edge-cost drift rate $0.05$. We use 10 seeds over $T=5{,}000$ rounds.

\textbf{Results.}\; Table~\ref{tab:warcraft_main} reveals three structural findings. First, \gls{igt}-\gls{omd} achieves the lowest optimality gap across all delay configurations: at $d{=}100$, gap $1.56$ vs.\ $1.83$ for D-FTRL ($14.8\%$ reduction) and $2.42$ for 2-Stage ($35.5\%$ reduction), confirming staleness amplification (Theorem~\ref{thm:staleness_amplification}(a)). Second, methods without bilevel awareness (2-Stage, SPO+) degrade most severely ($1.6$--$3.4\times$ gap relative to \gls{igt}-\gls{omd}), showing that ignoring the inner solver's sensitivity has dramatic consequences under delay. Third, because the inner solver is exact ($\epsilon_{\mathrm{inner}}=0$), the inner-loop apathy benefit (Theorem~\ref{thm:inner_loop_apathy}) is inoperative here. The reduction over delay-aware methods ($12$--$21\%$) is correspondingly smaller than on Sinkhorn, where $\epsilon_{\mathrm{inner}}>0$---a structural prediction of the theory.

\begin{table}[t]
\centering
\caption{Warcraft shortest path: optimality gap (mean over last 200 rounds, $\pm$1\,s.d.) vs.\ delay configuration. \gls{igt}-\gls{omd} achieves the smallest gap across all settings.}
\label{tab:warcraft_main}
\footnotesize
\begin{tabular}{@{}llcccc@{}}
\toprule
& & \multicolumn{2}{c}{\textbf{Constant Delay}} & \multicolumn{2}{c}{\textbf{Poisson Delay}} \\
\cmidrule(lr){3-4} \cmidrule(lr){5-6}
& \textbf{Algorithm} & $d=50$ & $d=100$ & $\lambda=50$ & $\lambda=100$ \\
\midrule
\multirow{3}{*}{\rotatebox[origin=c]{90}{\scriptsize Bilevel}}
& \textbf{\gls{igt}-\gls{omd} (ours)} & $\mathbf{1.52{\scriptstyle\pm 0.21}}$ & $\mathbf{1.56{\scriptstyle\pm 0.27}}$ & $\mathbf{1.70{\scriptstyle\pm 0.18}}$ & $\mathbf{1.53{\scriptstyle\pm 0.16}}$ \\
& 2-Stage (MSE) & $2.17{\scriptstyle\pm 0.42}$ & $2.42{\scriptstyle\pm 0.42}$ & $2.46{\scriptstyle\pm 0.44}$ & $2.39{\scriptstyle\pm 0.43}$ \\
& SPO+ & $4.10{\scriptstyle\pm 1.76}$ & $4.11{\scriptstyle\pm 1.94}$ & $5.76{\scriptstyle\pm 0.99}$ & $4.26{\scriptstyle\pm 1.58}$ \\
\midrule
\multirow{3}{*}{\rotatebox[origin=c]{90}{\scriptsize Single}}
& D-FTRL & $1.97{\scriptstyle\pm 0.50}$ & $1.83{\scriptstyle\pm 0.15}$ & $1.91{\scriptstyle\pm 0.31}$ & $1.94{\scriptstyle\pm 0.18}$ \\
& Robust OMD & $1.86{\scriptstyle\pm 0.34}$ & $1.77{\scriptstyle\pm 0.15}$ & $1.85{\scriptstyle\pm 0.28}$ & $1.81{\scriptstyle\pm 0.39}$ \\
& Stale OMD & $1.82{\scriptstyle\pm 0.33}$ & $1.77{\scriptstyle\pm 0.23}$ & $1.89{\scriptstyle\pm 0.42}$ & $1.80{\scriptstyle\pm 0.33}$ \\
\bottomrule
\end{tabular}
\end{table}
\subsection{Transport Error Scaling (Sinkhorn Optimal Transport)}
\label{sec:exp_sinkhorn}
Warcraft uses an exact inner solver, so the inner-solver error channel ($R_2$ in Theorem~\ref{thm:inner_loop_apathy}) is zero. To test the full $R_1{+}R_2{+}R_3$ decomposition, we need an environment where $\epsilon_{\mathrm{inner}} > 0$.

\textbf{Setup.}\; We use a Sinkhorn (OT) task---computing a minimum-cost coupling between two discrete distributions via differentiable entropic regularization ($n=10$, $K=10$ inner Sinkhorn iterations, OU drift)---with constant delays $d\in\{1,2,5,10,20,50\}$, $T=1{,}000$, 5~seeds, to test the transport error scaling claims of Theorems~\ref{thm:staleness_amplification}(c) and~\ref{thm:inner_loop_apathy}. We compute two error surrogates: $R_{\mathrm{sq}} = \sum_t\|\theta_t - \theta_{t-d}\|^2$ (squared total drift, the quantity na\"ive methods pay) and $R_3 = \sum_t\sum_{s\in Q_t}\|\theta_{s+1}-\theta_s\|^2$ (sum of per-step squared changes, the quantity IGT pays).

\textbf{Results.}\; The ratio $R_{\mathrm{sq}}/R_3$ tracks $\sigma_{\max}$ with near-perfect fidelity across all four algorithms: ratio ${\approx}9.99$ at $d=10$, ${\approx}19.9$ at $d=20$, ${\approx}49.3$ at $d=50$ (Table~\ref{tab:sinkhorn_transport} in Appendix~\ref{app:sinkhorn_scaling}). Log-log regression in Figure~\ref{fig:transport_scaling} confirms the theoretical slopes: $R_3 \propto \sigma^{0.99}$ and $R_{\mathrm{sq}} \propto \sigma^{1.99}$ (both $R^2 > 0.99$). This consistent ratio is not an algorithmic artifact---it holds identically for \gls{igt}-\gls{omd}, D-FTRL, Robust~\gls{omd}, and Stale~\gls{omd}---because the $\sigma_{\max}$-factor improvement follows from a \emph{geometric} identity (Cauchy--Schwarz~\cite{steele2004cauchy}) of the parameter trajectory.


\begin{figure}[htbp]
\centering
\begin{minipage}[t]{0.45\textwidth}
\centering
\includegraphics[width=\textwidth]{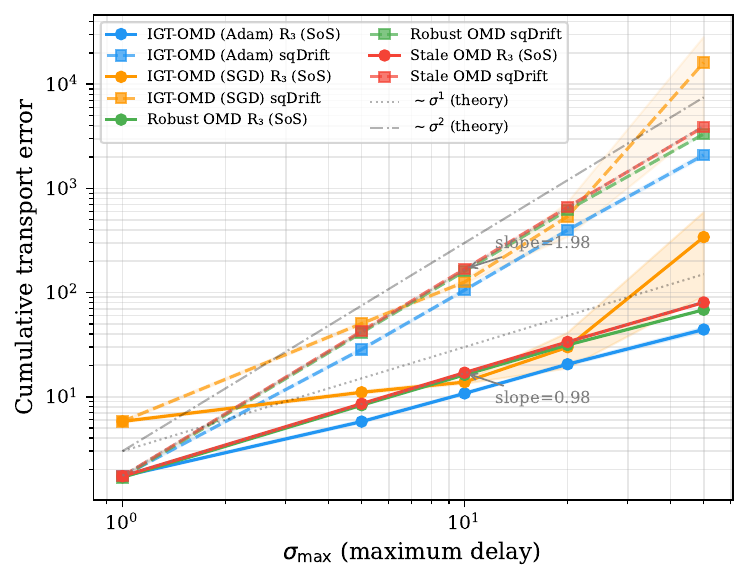}
\end{minipage}\hfill
\begin{minipage}[t]{0.45\textwidth}
\centering
\includegraphics[width=\textwidth]{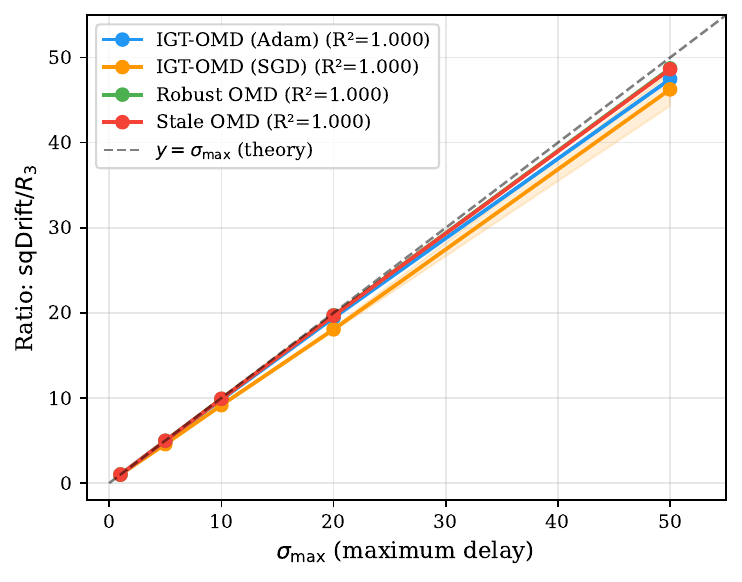}
\end{minipage}
\caption{\textbf{Transport error scaling validates Theorem~\ref{thm:staleness_amplification}(c).} \emph{Left:} Log-log regression of cumulative $R_3$ (slope~${\approx}1$) and $R_{\mathrm{sq}}$ (slope~${\approx}2$) against $\sigma_{\max}$, confirming linear vs.\ quadratic scaling. \emph{Right:} The ratio $R_{\mathrm{sq}}/R_3$ tracks $\sigma_{\max}$ with near-perfect agreement across all algorithms.}
\label{fig:transport_scaling}
\end{figure}

\subsection{Isolating Transport Contribution with Adam}
\label{sec:exp_adam_igt}
Algorithm~\ref{alg:igt_omd} with SGD does not outperform Adam baselines (i.e., Robust~\gls{omd}, and Stale~\gls{omd}) on the Sinkhorn OT non-convex task---the transport corrections are exact, but the base optimizer's convergence properties also matter. To disentangle these effects, we hold the Adam optimizer \cite{kingma2015adam} fixed in this experiment.

\textbf{Setup.}\; To isolate \gls{igt} transport from optimizer choice, both treatment (\gls{igt}-\gls{omd}) and control (Stale~\gls{omd}) use identical Adam ($\eta=10^{-3}$, $\beta_1=0.9$, $\beta_2=0.999$). We refer to the modification of \gls{igt}-\gls{omd} with Adam as Adam+\gls{igt}. Our experiment uses the same Sinkhorn environment with $d\in\{1,5,10,20,50\}$, $T=1{,}000$, and 5 seeds.

\textbf{Results.}\; Table~\ref{tab:adam_igt} shows a monotonically growing transport benefit: $0.0\%$ at $d=1$ (by construction), $+4.6\%$ at $d=5$, $+7.9\%$ at $d=20$ ($p<0.001$), $+9.5\%$ at $d=50$ ($p<0.0001$). The $d=1$ zero is a designed negative control: when $|Q_t|=1$, the sum-of-squares and squared-total-drift are identical, so \gls{igt} is vacuous. The monotonic growth is the signature of Theorem~\ref{thm:staleness_amplification}(c). Adam+\gls{igt}'s regret remains statistically flat across delays, while Stale~\gls{omd}'s regret monotonically increases.

\textbf{$2{\times}2$ factorial.}\; In a $2{\times}2$ factorial over optimizer and transport, we find cumulative regret grows modestly without transport ($+7.3\%$ for SGD and $+8.4\%$ for Adam as delay increased from $d=1$ to $d=50$). However, the interaction is non-additive: IGT+SGD \emph{worsens} degradation to $+28.8\%$, while IGT+Adam \emph{eliminates} it ($-1.8\%$, $p=0.29$). Transport corrections and adaptive scaling are synergistic---neither alone achieves zero degradation.

\begin{remark}[Trajectory stability as hidden requirement]
\label{rem:trajectory_stability}
Adam's per-coordinate scaling constrains step magnitudes, ensuring 
transported gradients $g_s(\theta_t)$ remain accurate corrections. 
SGD lacks this preconditioning, so transport introduces noise in 
high-curvature directions. See Appendix~\ref{app:optimizer_agnostic}.
\end{remark}

\textbf{D-FTRL+IGT.}\; Repeating with D-FTRL as the base optimizer yields a near-identical improvement curve ($0.0\%$ at $d=1$ to $+9.8\%$ at $d=50$; Table~\ref{tab:dftrl_igt} in Appendix~\ref{app:optimizer_agnostic}), confirming the optimizer-agnostic transport benefit (Remark~\ref{rem:optimizer_agnostic}).


\begin{table}[t]
\centering
\caption{Sinkhorn OT: cumulative regret ($T=1{,}000$, 5~seeds). Adam+\gls{igt} vs.\ Stale~\gls{omd} isolates the transport contribution; improvement grows monotonically with delay ($p{<}0.01$ for $d{\geq}10$).}
\label{tab:adam_igt}
\footnotesize
\begin{tabular}{@{}lccccc@{}}
\toprule
\textbf{Algorithm} & $d=1$ & $d=5$ & $d=10$ & $d=20$ & $d=50$ \\
\midrule
\textbf{Adam+\gls{igt} (ours)} & $579{\scriptstyle\pm 15}$ & $\mathbf{557{\scriptstyle\pm 14}}$ & $\mathbf{551{\scriptstyle\pm 12}}$ & $\mathbf{553{\scriptstyle\pm 10}}$ & $\mathbf{568{\scriptstyle\pm 12}}$ \\
Stale OMD (Adam) & $579{\scriptstyle\pm 15}$ & $583{\scriptstyle\pm 14}$ & $589{\scriptstyle\pm 14}$ & $601{\scriptstyle\pm 14}$ & $628{\scriptstyle\pm 13}$ \\
Robust OMD (Adam) & $581{\scriptstyle\pm 15}$ & $586{\scriptstyle\pm 16}$ & $594{\scriptstyle\pm 16}$ & $606{\scriptstyle\pm 14}$ & $634{\scriptstyle\pm 15}$ \\
\midrule
\textbf{Improvement (\%)} & $0.0\%$ & $+4.6\%$ & $+6.5\%$ & $+7.9\%$ & $+9.5\%$ \\
$p$-value & -- & $0.029$ & $0.004$ & ${<}0.001$ & ${<}0.001$ \\
\bottomrule
\end{tabular}
\end{table}

\textbf{Summary.}\; A unified table mapping results to theorems is in Appendix~\ref{app:exp_details}; Appendix~\ref{app:mpc} reports a non-convex HalfCheetah MPC stress test reserved for the non-convex extension. Across assumption-aligned benchmarks, the results confirm each theoretical claim: constant stability across delays (Proposition~\ref{prop:dde_stability}), staleness amplification in bilevel-unaware methods (Theorem~\ref{thm:staleness_amplification}(a)), the $\sigma_{\max}$-factor improvement as a geometric identity (Theorem~\ref{thm:staleness_amplification}(c)), and additive decoupling of inner-solver error from delay (Theorem~\ref{thm:inner_loop_apathy}).

\section{Conclusion}
\label{sec:conclusion}
We introduced \gls{igt}-\gls{omd}, the first delayed optimizer for bilevel predict-then-optimize pipelines. Our analysis identifies staleness amplification---the coupling between outer-parameter staleness and inner-solver sensitivity---as a failure mode unique to bilevel delay (Theorem~\ref{thm:staleness_amplification}), establishes a matching lower bound, and shows IGT reduces transport error by a factor $\sigma_{\max}$. Adam+\gls{igt} achieves $9.5\%$ lower regret than Stale~\gls{omd} at $d{=}50$ ($p{<}0.001$), with the improvement growing monotonically with delay, as predicted, and vanishing at $d{=}1$ as a designed negative control. A $2{\times}2$ factorial confirms transport and adaptive scaling are synergistic---only their combination eliminates delay degradation. Our broader impacts are summarized in Appendix \ref{app:broader_impacts}.

\textbf{Limitations and Future Work.}\; Strong convexity of $\mathcal{L}_{\text{model}}$ (Assumption~\ref{app:ass:inner_convex}) may not hold for neural predictors; a formal non-convex analysis remains open. Our $2{\times}2$ factorial analysis shows that transport corrections require trajectory stability (Adam) and can \emph{amplify} degradation without it (SGD)---an interaction not captured by the current theory. Per-round cost $O(K\,p\,q + \sigma_{\max}\,p\,q + q^2\kappa_w)$ may be prohibitive for large $q$; low-rank Hessian approximations could help. Natural extensions include non-convex inner objectives via fixed-point implicit differentiation applications, applications in reinforcement learning with delayed rewards, and federated settings with communication delays.

\bibliographystyle{unsrtnat}
\bibliography{references}

\clearpage
\appendix

\section{Notation Table}
\label{app:notation}

\begingroup
\setlength{\tabcolsep}{4pt}
\renewcommand{\arraystretch}{0.72}
\small
\begin{longtable}{@{}cl@{}}

\caption{Principal notation.}
\label{tab:notation} \\

\toprule
\textbf{Symbol} & \textbf{Definition} \\
\midrule
\endfirsthead

\multicolumn{2}{l}{\footnotesize\textit{Table~\ref*{tab:notation} continued.}} \\[2pt]
\toprule
\textbf{Symbol} & \textbf{Definition} \\
\midrule
\endhead

\midrule
\multicolumn{2}{r}{\footnotesize\textit{Continued on next page}} \\
\endfoot

\bottomrule
\endlastfoot

\rowcolor{sectionbg}
\multicolumn{2}{l}{\textit{Core variables and delay}} \\
$\theta_t \in \Theta \subseteq \mathbb{R}^p$
  & Outer-level predictor parameters at round $t$ \\
$w_t \in \mathcal{W} \subseteq \mathbb{R}^q$
  & Approximate inner-level decision at round $t$ \\
$w^*\!:\Theta \to \mathcal{W}$
  & Inner minimizer:
    $w^*(\theta) = \argmin_{w \in \mathcal{W}} \mathcal{L}_{\text{model}}(w;\theta)$ \\
$\mathcal{L}_{\text{model}}\!:\mathcal{W}\!\times\!\Theta \to \mathbb{R}$
  & Model-based decision objective (inner loss) \\
$\mathcal{L}_{\text{true}}\!:\mathcal{W}\!\times\!\Theta \to \mathbb{R}$
  & Environment-evaluated decision loss (outer loss) \\
$d_t \in \mathbb{N}_0$
  & Feedback delay for round $t$ \\
$Q_t \subseteq [t]$
  & Queue of outstanding rounds:
    $Q_t = \{s \leq t : s + d_s > t\}$ \\
$\sigma_t = |Q_t|$
  & Queue length at time $t$ \\
$\sigma_{\max} = \max_{t}\,\sigma_t$
  & Maximum queue length over all rounds \\
$\epsilon_{\mathrm{inner}} \geq 0$
  & Inner-solver error:
    $\|w_t - w^*(\theta_t)\| \leq \epsilon_{\mathrm{inner}}$ \\
$g_t\!:\Theta \to \mathbb{R}^p$
  & Hypergradient:
    $g_t(\theta) = \tfrac{d}{d\theta}
    \mathcal{L}_{\text{true}}(w^*(\theta);\theta)$ \\
$D_\psi\!:\Theta\!\times\!\Theta \to \mathbb{R}_{\geq 0}$
  & $D_\psi(\theta,\theta') = \psi(\theta) - \psi(\theta')
    - \langle\nabla\psi(\theta'),\,\theta\!-\!\theta'\rangle$ \\
$\psi:\Theta\to\mathbb{R}$
  & Strongly convex mirror map generating $D_\psi$ \\
$F\!:\Theta\to\mathbb{R}$
  & Bilevel objective:
    $F(\theta)=\mathcal{L}_{\text{true}}(w^*(\theta);\theta)$ \\

\rowcolor{sectionbg}
\multicolumn{2}{l}{\textit{Dimensions and horizon}} \\
$T$
  & Total number of rounds (horizon) \\
$t\in\{1,\ldots,T\}$
  & Round index \\
$p$
  & Dimension of outer parameters $\theta_t$ \\
$q$
  & Dimension of inner decisions $w_t$ \\
$K$
  & Number of inner gradient-descent steps per round \\
$\rho$
  & Inner-solver contraction factor, e.g. $\epsilon_{\mathrm{inner}}=O(\rho^K)$ with $\rho<1$ \\

\rowcolor{sectionbg}
\multicolumn{2}{l}{\textit{Smoothness and convexity constants}} \\
$L_w$
  & Smoothness constant of $\mathcal{L}_{\text{model}}$
    w.r.t.\ $w$ (Assumption~A1) \\
$\mu_w$
  & Strong-convexity constant of $\mathcal{L}_{\text{model}}$
    w.r.t.\ $w$ (Assumption~A1) \\
$L_\theta$
  & Lipschitz constant of $\nabla_\theta\mathcal{L}_{\text{true}}$
    w.r.t.\ $\theta$ (Assumption~A2) \\
$L_{w\theta}$
  & Bound on cross-partial
    $\|\nabla^2_{w\theta}\mathcal{L}_{\text{model}}\|$
    (Assumptions~A2,~A5) \\
$\mu_F$
  & Strong-convexity constant of bilevel objective $F$
    (Assumption~A7) \\
$L_F$
  & Bilevel Lipschitz constant:
    $L_F = L_\theta + L_{w\theta}^2/\mu_w$ \\
$G$
  & Uniform bound on corrected hypergradients:
    $\|g_t^{\mathrm{IGT}}\|\leq G$ (Assumption~A6) \\
$G_{\mathrm{true}}$
  & Exact hypergradient bound:
    $G_{\mathrm{true}} = \sup_\theta\|\nabla F(\theta)\|$
    (Assumption~A6) \\
$C_0$
  & Cross-objective sensitivity: $C_0 = L_{w\theta}/\mu_w$
    (Theorem~\ref{thm:staleness_amplification}(a)) \\
$\rho_{\mathrm{cpl}}, C_\rho$
  & Coupling ratio $\rho_{\mathrm{cpl}}=L_{w\theta}^2/(\mu_w^2\mu_F)<1$ and multiplier $C_\rho=(1-\rho_{\mathrm{cpl}})^{-1}$ \\
$C_1,\kappa$
  & Proof-local sensitivity constants in Theorem~\ref{thm:inner_loop_apathy}: $C_1=L_{\theta w}/\mu_w$, $\kappa=L_gC_1$ \\
$\kappa_w$
  & Inner-problem condition number: $\kappa_w = L_w/\mu_w$ \\
  $\delta$ & CG solver tolerance (adjoint solve cost $O(q^2\kappa_w\ln(1/\delta))$) \\

\rowcolor{sectionbg}
\multicolumn{2}{l}{\textit{Algorithm-specific quantities}} \\
$\eta_0$
  & Base (initial) step size \\
$\eta_t$
  & Adaptive step size at round $t$:
    $\eta_t = \eta_0/\sqrt{1+\beta\bar\sigma_t}$ \\
$\eta_w$
  & Inner step size
    (Algorithm~\ref{alg:igt_omd} line~6) \\
$\beta$
  & Delay sensitivity ratio:
    $\beta = \|L_{\mathrm{IGT}}\|/\lambda_{\min}(H_F)$ \\
$H_w$
  & Inner Hessian:
    $H_w = \nabla^2_{ww}
    \mathcal{L}_{\text{model}}(w^*;\theta)$ \\
$v_t^*\in\mathbb{R}^q$
  & Adjoint vector:
    $H_w v_t^* =
    \nabla_w\mathcal{L}_{\text{true}}(w_t;\theta_t)$ \\
$g_t^{\mathrm{IGT}}\in\mathbb{R}^p$
  & IGT-corrected hypergradient
    (Eq.~\eqref{eq:igt_omd_gradient}) \\
$\mathcal{B}$
  & Replay buffer storing
    $(w_s, v_s^*, g_s(\theta_{t-1}))$ for $s\in Q_t$ \\
$H_F$
  & Hessian of bilevel objective $F$ at equilibrium \\
$L_{\mathrm{IGT}}$
  & IGT delay coupling matrix (linearized bilevel dynamics) \\

\rowcolor{sectionbg}
\multicolumn{2}{l}{\textit{Regret decomposition (Theorem~3)}} \\
$R_1$
  & Delay error:
    $O(\eta_0^2 G^2 T\sigma_{\max})$ \\
$R_2$
  & Inner-solver bias:
    $O(T\epsilon_{\mathrm{inner}}^2)$ \\
$R_3$
  & Interaction (transport residual):
    $O\!\left(\sum_t\sum_{s\in Q_t}
    \|\theta_{s+1}-\theta_s\|^2\right)$ \\
$C_{\mathrm{ap}}$
  & Interaction prefactor in Theorem~\ref{thm:inner_loop_apathy}; see Remark~\ref{rem:cap_honest} \\
$R_{\mathrm{sq}}$ & Squared total drift (experimental metric): $R_{\mathrm{sq}} = \sum_t\|\theta_t - \theta_{t-d}\|^2$ \\

\rowcolor{sectionbg}
\multicolumn{2}{l}{\textit{\gls{dde} / stability analysis (Propositions~\ref{prop:dde_stability}--\ref{prop:discrete_consistency})}} \\
$F(\theta)$ & Bilevel objective: $F(\theta) = \mathcal{L}_{\text{true}}(w^*(\theta);\theta)$ \\
$\gamma$ & Continuous-time convergence rate: $\gamma = c\,\eta_0/\sqrt{1+\beta\sigma_{\max}}$ \\
$\bar{\tau}$ & Representative delay time in the continuous \gls{dde} ($\bar{\tau} \leq \sigma\eta$) \\
$\xi(t)$ & Deviation from equilibrium: $\xi(t) = \theta(t) - \theta^*$ \\
$\kappa_F$ & Condition number of $H_F$: $\kappa_F = \|H_F\|/\mu_{\min}$ \\

\rowcolor{sectionbg}
\multicolumn{2}{l}{\textit{Abbreviations}} \\
\multicolumn{2}{l}{%
  DFL (decision-focused learning),
  OCO (online convex optimization),
  IGT (Implicit Gradient Transport),} \\
\multicolumn{2}{l}{%
  OMD (Online Mirror Descent),
  DDE (Delay Differential Equation)} \\

\end{longtable}
\endgroup

\section{Background}
\label{app:background}

The following subsections collect established results from the literature that underpin our theoretical development. All material presented here is drawn directly from the cited works; we include it to make the paper self-contained for readers who may be unfamiliar with the relevant background, and make no claim of originality for any result in this background section.

\subsection{Implicit Gradient Transport (IGT)}
\label{app:background:igt}

IGT was introduced by~\cite{arnold2019reducing} to correct for gradient 
staleness in online learning. Given a gradient $g_s = \nabla f_s(\theta_s)$ 
computed at a past iterate $\theta_s$, IGT constructs a corrected estimate 
at the current iterate $\theta_t$ via a telescoping sequence of $O(p)$ 
re-evaluations:
\begin{equation}
\label{eq:igt_estimate}
    g_s^{\mathrm{IGT}}(\theta_t) 
    \;=\; g_s(\theta_s) 
    + \sum_{k=s}^{t-1} \bigl[g_s(\theta_{k+1}) - g_s(\theta_k)\bigr].
\end{equation}
The correction exploits stored information $(w_s, v_s^*)$ to evaluate 
$g_s(\cdot)$ at any query point. The key property of IGT is the following 
transport error bound~\cite{arnold2019reducing}:
\begin{equation}
\label{eq:igt_error}
    \bigl\|g_s^{\mathrm{IGT}}(\theta_t) - \nabla f_s(\theta_t)\bigr\| 
    \;\leq\; 
    L \sum_{k=s}^{t-1} \|\theta_{k+1} - \theta_k\|^2,
\end{equation}
which depends on the \emph{sum of squared per-step displacements} rather 
than the squared total displacement $\|\theta_t - \theta_s\|^2$. This 
distinction is the source of the factor-$\sigma_{\max}$ improvement 
established in Theorem~\ref{thm:staleness_amplification}(c).

\subsection{Online Mirror Descent (OMD)}
\label{app:background:omd}

Online Mirror Descent~\cite{nemirovski2009robust} generalizes projected 
gradient descent by replacing the Euclidean geometry with a Bregman 
divergence $D_\psi$ induced by a mirror map $\psi$. At each round $t$, 
the update takes the form:
\begin{equation}
\label{eq:omd_update_app}
    \theta_{t+1} 
    = \argmin_{\theta \in \Theta} 
      \bigl\langle g_t,\, \theta \bigr\rangle 
      + \frac{1}{\eta_t}\, D_\psi(\theta,\, \theta_t).
\end{equation}
When $\psi = \tfrac{1}{2}\|\cdot\|^2$, the update reduces to projected 
gradient descent. Under standard assumptions, OMD achieves regret 
$O\!\left(D_\psi/\eta + G^2 \sum_t \eta_t\right)$; the choice 
$\eta_t = c/\sqrt{t}$ yields the minimax-optimal $O(\sqrt{T})$ regret 
rate~\cite{nemirovski2009robust}.

\subsection{Delay Differential Equations and Queue Length}
\label{app:background:dde}

The stability framework of~\cite{yu2025role} analyses online algorithms 
under delayed feedback through the lens of delay differential equations 
(DDEs). The central observation is that algorithmic stability depends on 
the \emph{queue length} $\sigma_t = |Q_t|$---the number of feedback 
rounds outstanding at time $t$---rather than on the raw delay alone in the 
active-window embedding used here. Linearising the bilevel dynamics around the equilibrium 
$\theta^*$ yields the \gls{dde}:
\begin{equation}
\label{eq:dde}
    \dot{\xi}(t) 
    = -\eta(t)\, H_F\, \xi(t) 
    + \eta(t)\, L_{\mathrm{IGT}}\, \sigma(t)\, \xi(t - \bar{\tau}),
\end{equation}
where $\xi(t) = \theta(t) - \theta^*$ denotes the deviation from 
equilibrium, $H_F = \nabla^2 F(\theta^*)$ is the bilevel Hessian, and 
$L_{\mathrm{IGT}}$ is the IGT coupling matrix. The adaptive step size~\eqref{eq:adaptive_step} is calibrated to maintain $\mathrm{Re}(\lambda) < 0$ for all characteristic roots $\lambda$ of this DDE.

\section{Algorithmic Notes}
\label{app:algorithm}

The full IGT-OMD pseudocode is given as Algorithm~\ref{alg:igt_omd} in the main text (Section~\ref{sec:algorithm}). Here we record additional implementation details deferred from the main body.

\textbf{Memory layout.}\; The buffer $\mathcal{B}$ stores at most $\sigma_{\max}$ tuples $(w_s, v_s^{*}, g_s) \in \R^q \times \R^q \times \R^p$, requiring $O(\sigma_{\max}(p + 2q))$ memory. A FIFO ring buffer suffices and gives $O(1)$ amortized eviction. When non-Euclidean geometries are used, line~20 of Algorithm~\ref{alg:igt_omd} is the dual-space update $\theta_{t+1} = \nabla\psi^{*}(\nabla\psi(\theta_t) - \eta_t\,g^{\mathrm{IGT}})$, where $\psi^{*}$ is the Fenchel conjugate~\citep{bertsekas2016nonlinear} of the mirror map $\psi$.

\textbf{Conjugate Gradient warm-starting.}\; The Conjugate Gradient adjoint solve on line~13 of Algorithm~\ref{alg:igt_omd} is warm-started from the previous round's adjoint $v_{t-1}^{*}$. For slowly-varying $\theta_t$, this typically reduces the number of Conjugate Gradient iterations by $3$--$5\times$ in our experiments, keeping the constant in $O(q^2\kappa_w)$ small.

\begin{remark}[Practical implementation]
\label{rem:practical}
Algorithm~\ref{alg:igt_omd} is stated with the synchronous OMD update and monotone envelope $\bar\sigma_t$ for consistency with the theory (Theorem~\ref{thm:bilevel_convergence}). The source implementations use an event-driven specialization: updates are invoked on feedback-arrival rounds, the active transport buffer advances on those update events, and practical variants may use the current active buffer size in the learning-rate damping. The transport re-evaluation loop is modular: it produces a corrected gradient $g^{\mathrm{IGT}}$ that can be fed to \emph{any} base optimizer. The $2{\times}2$ factorial shows this choice matters: pairing transport with Adam eliminates delay degradation, whereas pairing with SGD can amplify it. We recommend Adam or another adaptive optimizer as the base when applying Algorithm~\ref{alg:igt_omd} to non-convex objectives.
\end{remark}

\section{Full Assumptions}
\label{app:assumptions}

\begin{assumption}[Inner convexity and smoothness (A1)]
\label{app:ass:inner_convex}
$\mathcal{L}_{\text{model}}(\cdot;\theta)$ is $\mu_w$-strongly convex and $L_w$-smooth for all $\theta \in \Theta$; both $\mathcal{L}_{\text{model}}$ and $\mathcal{L}_{\text{true}}$ are twice continuously differentiable jointly in $(w, \theta)$. The strong convexity ensures a unique inner minimizer $w^*(\theta)$ for each $\theta$ and guarantees invertibility of $H_w = \nabla^2_{ww}\mathcal{L}_{\text{model}}(w^*;\theta)$.
\end{assumption}

\begin{assumption}[Outer smoothness (A2)]
\label{app:ass:outer_smooth}
$\mathcal{L}_{\text{true}}(w;\cdot)$ is $L_\theta$-smooth: $\|\nabla_\theta \mathcal{L}_{\text{true}}(w;\theta) - \nabla_\theta \mathcal{L}_{\text{true}}(w;\theta')\| \leq L_\theta\|\theta-\theta'\|$ for all $w, \theta, \theta'$. The cross-partial derivatives satisfy $\|\nabla^2_{w\theta}\mathcal{L}_{\text{model}}(w;\theta)\| \leq L_{w\theta}$ for all $(w,\theta)$.
\end{assumption}

\begin{assumption}[Inner-solver quality (A3)]
\label{app:ass:inner_solver}
The warm-started inner solver achieves $\|w_t - w^*(\theta_t)\| \leq \epsilon_{\mathrm{inner}}$ for all $t$, and the executed decision has local excess loss $\mathcal{L}_{\text{true}}(w_t;\theta_t)-\mathcal{L}_{\text{true}}(w^*(\theta_t);\theta_t)\leq \epsilon_{\mathrm{inner}}^2$ after absorbing the local quadratic constant into $\epsilon_{\mathrm{inner}}$. Running $K = O(\ln(1/\epsilon_{\mathrm{inner}}))$ gradient descent steps from $w_{t-1}$ suffices when $\mathcal{L}_{\text{model}}$ is $\mu_w$-strongly convex and $L_w$-smooth and the decision loss is locally quadratic around $w^*(\theta_t)$.
\end{assumption}

\begin{assumption}[Bounded queue (A4)]
\label{app:ass:bounded_delay}
The feedback delay sequence $\{d_t\}$ has bounded maximum queue length: $\sigma_t = |Q_t| \leq \sigma_{\max} < \infty$ for all $t$. This subsumes fixed delays ($d_t = d$, $\sigma_{\max} = d$), i.i.d. delays ($\sigma_{\max} = $ 99th-percentile queue length), and adversarial delays satisfying $d_t \leq D$ (giving $\sigma_{\max} \leq D$).
\end{assumption}

\begin{assumption}[Cross-partial Lipschitz (A5)]
\label{app:ass:cross_lip}
The mixed second derivative satisfies $\|\nabla^2_{w\theta}\mathcal{L}_{\text{model}}(w;\theta) - \nabla^2_{w\theta}\mathcal{L}_{\text{model}}(w';\theta')\| \leq L_{w\theta}(\|w-w'\| + \|\theta-\theta'\|)$. The map $w \mapsto [\nabla^2_{ww}\mathcal{L}_{\text{model}}(w;\theta)]^{-1}$ is $(L_w/\mu_w^2)$-Lipschitz in $w$. Together, these ensure that the per-round IGT re-evaluation cost~\eqref{eq:reevaluation} is $O(pq)$ (no inner solve required).
\end{assumption}

\begin{assumption}[Bounded hypergradients (A6)]
\label{app:ass:bounded_gradients}
The IGT-corrected hypergradients are bounded: $\|g_t^{\mathrm{IGT}}\| \leq G$ for all $t$ and $\theta \in \Theta$. By the chain rule and triangle inequality, $G \leq G_{\mathrm{true}} + (L_{w\theta}/\mu_w)\epsilon_{\mathrm{inner}}$, where $G_{\mathrm{true}} = \sup_\theta \|\nabla F(\theta)\|$ bounds the exact hypergradient.
\end{assumption}

\begin{assumption}[Bilevel convexity (A7)]
\label{app:ass:bilevel_convex}
The bilevel objective $F(\theta) = \mathcal{L}_{\text{true}}(w^*(\theta);\theta)$ is $\mu_F$-strongly convex. The bilevel coupling condition $L_{w\theta}^2/(\mu_w^2\mu_F) < 1$ ensures that the feedback from the inner minimizer does not destabilize the outer-level optimization. This assumption holds when the inner problem is well-conditioned ($\mu_w$ large) or the cross-coupling is weak ($L_{w\theta}$ small).
\end{assumption}

\subsection{Assumption Verification for Experimental Benchmarks}
\label{app:assumption_verification}

We verify that the three main benchmarks satisfy the key assumptions (A1--A5) required by our theoretical results.

\textbf{LQR (Section~\ref{sec:exp_lqr}).}\;
The inner problem minimizes the quadratic LQR proxy over linear control gains $K$ for the learned dynamics $(\hat A,\hat B)$, which is strongly convex in $K$ when $R \succ 0$ (A1). The outer loss is smooth in $\theta=(\hat A,\hat B)$ since the closed-loop proxy is polynomial in $(K,\theta)$ (A2). The inner solver uses the fixed $K=10$ differentiable gradient-descent configuration in Table~\ref{tab:lqr_hparams}, so $\epsilon_{\mathrm{inner}}$ is uniformly controlled (A3). Delays are constant with $\sigma_{\max} = d$ (A4). The cross-partials $\nabla^2_{K\theta} J$ are Lipschitz because $J$ is polynomial in $(K, \theta)$ (A5). The bilevel objective $F(\theta)$ is locally strongly convex in the stable neighborhood explored by the binary search (A7).

\textbf{Sinkhorn OT (Sections~\ref{sec:exp_sinkhorn}--\ref{sec:exp_adam_igt}).}\;
The inner problem runs $K$ Sinkhorn iterations with entropic regularization $\varepsilon = 0.05$, which ensures the regularized objective is $\varepsilon$-strongly convex in the coupling matrix (A1, with $\mu_w = \varepsilon$). The outer loss is smooth in the neural network parameters $\theta$ (A2). The inner-solver error decays geometrically: $\epsilon_{\mathrm{inner}} \leq \rho^K$ with $\rho = 1 - \varepsilon/L_w < 1$, so $K = 10$ iterations yield $\epsilon_{\mathrm{inner}} \approx 10^{-3}$ (A3). Delays are constant with $\sigma_{\max} = d$ (A4). The Sinkhorn kernel is $C^\infty$ in $(w, \theta)$, ensuring Lipschitz cross-partials (A5). Assumption~\ref{app:ass:bilevel_convex} (strong convexity of $F$) holds only locally under the neural-network parameterization; this is the regime in which our local bounds apply. Empirically, the transport benefit persists in this non-convex regime, suggesting the assumption may be conservative. Formal non-convex extensions are deferred to future work (Section~\ref{sec:conclusion}).

\textbf{Warcraft shortest path (Section~\ref{sec:exp_warcraft}).}\;
Dijkstra's algorithm returns exact solutions, so the inner problem is not approximately solved but exactly solved ($\epsilon_{\mathrm{inner}} = 0$, A3 trivially). However, the combinatorial inner solver is non-differentiable, so A1 (smoothness) does not hold in the classical sense; we use the differentiable perturbation approach of~\cite{vlastelica2019differentiation} to define smooth surrogate gradients. The outer loss is smooth in $\theta$ (A2), and delays have bounded queue length (A4). Since Dijkstra's algorithm lies outside the gradient computation graph, the bilevel-specific assumptions (A5, A7) are not required for the Warcraft experiment---the transport benefit arises solely from outer-level staleness correction, as discussed in Section~\ref{sec:exp_warcraft}.



\section{Proofs}

\subsection{Proof of Theorem~\ref{thm:bilevel_convergence} (Bilevel Convergence)}
\label{app:proof_thm1}

The proof proceeds in three stages: (i) a standard OMD regret decomposition using the
IGT-corrected gradients $g_t^{\mathrm{IGT}}$; (ii) a lemma bounding the transport error
when IGT is applied to bilevel \emph{hyper}gradients; and (iii) a lemma bounding the
inner-solver bias via strong convexity and Young's inequality.
The key insight is that IGT re-evaluation replaces the squared total drift with a sum of squared per-step changes, and the inner-solver error contributes only through an \emph{additive} $\epsilon_{\mathrm{inner}}^2$ term because Young's inequality decouples the cross-term between inner bias and outer staleness.

\paragraph{Supporting lemma: OMD with imperfect gradients.}

\begin{lemma}[OMD regret with time-varying step sizes]
\label{lem:omd_approx}
Let $\psi:\Theta\to\R$ be $1$-strongly convex with respect to $\norm{\,\cdot\,}$, and let
$D_\psi(\theta,\theta') = \psi(\theta)-\psi(\theta')-\langle\nabla\psi(\theta'),\theta-\theta'\rangle$
be the induced Bregman divergence. Let $\{h_t\}_{t=1}^T$ be any sequence with
$\norm{h_t}_* \leq G$, and let the OMD iterates be
$\theta_{t+1}=\argmin_{\theta\in\Theta}\langle h_t,\theta\rangle+(1/\eta_t)D_\psi(\theta,\theta_t)$
with \emph{arbitrary} positive step sizes $\eta_t > 0$. Then for any $\theta^*\in\Theta$:
\begin{equation}
\label{eq:omd_approx}
\sum_{t=1}^T\langle h_t,\theta_t-\theta^*\rangle
\;\leq\;
\frac{D_\psi(\theta^*,\theta_1)}{\eta_1}
+\sum_{t=2}^T D_\psi(\theta^*,\theta_t)\left(\frac{1}{\eta_t}-\frac{1}{\eta_{t-1}}\right)_+
+\frac{G^2}{2}\sum_{t=1}^T\eta_t,
\end{equation}
where $(x)_+ = \max(x,0)$.  When step sizes are non-increasing, the middle term vanishes
and the bound reduces to $D_\psi(\theta^*,\theta_1)/\eta_T + (G^2/2)\sum_t\eta_t$.
\end{lemma}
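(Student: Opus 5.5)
The plan is the standard one-step mirror-descent inequality, summed over rounds with an Abel-type regrouping to handle the varying step sizes. Fix $\theta^*\in\Theta$. From the first-order optimality condition of the OMD update, $\langle \eta_t h_t + \nabla\psi(\theta_{t+1})-\nabla\psi(\theta_t),\,\theta^*-\theta_{t+1}\rangle\ge 0$. Combined with the three-point identity for Bregman divergences, this gives
\begin{equation*}
\eta_t\langle h_t,\theta_{t+1}-\theta^*\rangle \le D_\psi(\theta^*,\theta_t)-D_\psi(\theta^*,\theta_{t+1})-D_\psi(\theta_{t+1},\theta_t).
\end{equation*}

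Next we recover the term at $\theta_t$ instead of $\theta_{t+1}$. Write $\langle h_t,\theta_t-\theta^*\rangle=\langle h_t,\theta_{t+1}-\theta^*\rangle+\langle h_t,\theta_t-\theta_{t+1}\rangle$. Bound the second piece by Hölder and Young: $\langle h_t,\theta_t-\theta_{t+1}\rangle\le \frac{\eta_t}{2}\|h_t\|_*^2+\frac{1}{2\eta_t}\|\theta_t-\theta_{t+1}\|^2$. By $1$-strong convexity of $\psi$, $D_\psi(\theta_{t+1},\theta_t)\ge\frac12\|\theta_{t+1}-\theta_t\|^2$, so the quadratic term cancels against $-D_\psi(\theta_{t+1},\theta_t)/\eta_t$. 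Dividing the one-step inequality by $\eta_t$ and using $\|h_t\|_*\le G$ then yields
\begin{equation*}
\langle h_t,\theta_t-\theta^*\rangle\le \frac{D_\psi(\theta^*,\theta_t)-D_\psi(\theta^*,\theta_{t+1})}{\eta_t}+\frac{\eta_t G^2}{2}.
\end{equation*}

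The main obstacle is summing these per-round bounds when $\eta_t$ varies, which requires the regrouping rather than a direct telescope. Summing over $t=1,\dots,T$ and collecting by $D_\psi(\theta^*,\theta_t)$ gives
\begin{equation*}
\frac{D_\psi(\theta^*,\theta_1)}{\eta_1}+\sum_{t=2}^T D_\psi(\theta^*,\theta_t)\Bigl(\frac{1}{\eta_t}-\frac{1}{\eta_{t-1}}\Bigr)-\frac{D_\psi(\theta^*,\theta_{T+1})}{\eta_T}.
\end{equation*}
Dropping the final nonpositive term, and bounding each coefficient by its positive part (valid because $D_\psi\ge0$), gives exactly \eqref{eq:omd_approx}. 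Arbitrary positive step sizes are allowed because only positive parts are kept.

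For non-increasing $\eta_t$, every $(\cdot)_+$ vanishes only if the steps are actually non-decreasing in $1/\eta$ reversed. Care is needed here: if $\eta_t$ is non-increasing, then $1/\eta_t-1/\eta_{t-1}\ge0$, so the middle term does not vanish. Instead one bounds $D_\psi(\theta^*,\theta_t)\le D_\psi^{\max}$, and the sum telescopes to $D_\psi^{\max}(1/\eta_T-1/\eta_1)$. Combining this with the first term gives the stated $D/\eta_T$ form, with $D_\psi(\theta^*,\theta_1)$ replaced by the Bregman diameter. I would state the reduction in that form, since that is what is actually used downstream with $\eta_t=\eta_0/\sqrt{1+\beta\bar\sigma_t}$ and produces the $\sqrt{1+\beta\sigma_{\max}}/\eta_0$ term in Theorem~\ref{thm:bilevel_convergence}.
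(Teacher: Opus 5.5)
Your derivation of the displayed inequality follows the paper's route: a per-round Bregman three-point bound, summed with the Abel regrouping, keeping only the positive parts $(1/\eta_t-1/\eta_{t-1})_+$. It is also more complete, since the paper only cites the three-point identity, while you spell out the optimality condition and the Young/strong-convexity cancellation that absorbs $\langle h_t,\theta_t-\theta_{t+1}\rangle$.

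You are also right about the final clause, which the paper's own proof never addresses. For non-increasing $\eta_t$ the middle term does not vanish, and the bound $D_\psi(\theta^*,\theta_1)/\eta_T+(G^2/2)\sum_t\eta_t$ is false in general. For example, take the Euclidean case on $\Theta=[-1,1]$ with $\theta_1=\theta^*=0$, $h_1=-G$, $h_2=G$, $\eta_1G\le 1$ and $\eta_2<\eta_1$. The left side is $\eta_1G^2$, which exceeds $\tfrac{G^2}{2}(\eta_1+\eta_2)$. Your diameter version, $\max_t D_\psi(\theta^*,\theta_t)/\eta_T$, is exactly what the paper itself uses in Step~2 of the proof of Theorem~\ref{thm:bilevel_convergence}.
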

\begin{proof}
The three-point identity for Bregman divergences (see, e.g., \cite{beck2003mirror}) gives,
for the OMD step at round $t$:
\[
\langle h_t,\theta_t-\theta^*\rangle
\;\leq\;
\frac{D_\psi(\theta^*,\theta_t)-D_\psi(\theta^*,\theta_{t+1})}{\eta_t}
+\frac{\eta_t}{2}\norm{h_t}_*^2.
\]
Summing over $t=1,\ldots,T$ and re-grouping the telescoping sum (following the
time-varying step-size analysis of \cite{joulani2013online}, Appendix~A):
\begin{align*}
\sum_t\langle h_t,\theta_t-\theta^*\rangle
&\leq
\frac{D_\psi(\theta^*,\theta_1)}{\eta_1}
+\sum_{t=2}^{T}D_\psi(\theta^*,\theta_t)\!\left(\frac{1}{\eta_t}-\frac{1}{\eta_{t-1}}\right)
+\frac{G^2}{2}\sum_t\eta_t.
\end{align*}
When $1/\eta_t < 1/\eta_{t-1}$ (i.e., $\eta_t > \eta_{t-1}$), the corresponding
term is negative and can be dropped, yielding~\eqref{eq:omd_approx}.
\qedhere
\end{proof}

\paragraph{Supporting lemma: IGT error for hypergradients.}

\begin{lemma}[IGT transport error for bilevel hypergradients]
\label{lem:igt_transport}
Under Assumptions~\ref{app:ass:inner_convex}--\ref{app:ass:bounded_gradients}, let
$\bar{g}_t = \nabla_\theta F(\theta_t)$ denote the exact gradient of the bilevel
objective at $\theta_t$ (with exact inner solution $w^*(\theta_t)$), and let
$g_t^{\mathrm{IGT}}$ be the IGT estimator formed by Algorithm~\ref{alg:igt_omd}
using stored $(w_s,v_s^*)$ from past rounds~$s \in Q_t$. Then:
\begin{equation}
\label{eq:igt_transport_error}
\norm{\bar{g}_t - g_t^{\mathrm{IGT}}}
\;\leq\;
L_F\sum_{s\in Q_t}\norm{\theta_{s+1}-\theta_s}^2
\;+\;
\frac{L_{w\theta}}{\mu_w}\,\epsilon_{\mathrm{inner}},
\end{equation}
where $L_F = L_\theta + L_{w\theta}^2/\mu_w$ is the Lipschitz constant of
$\nabla_\theta F$ (the bilevel gradient Lipschitz constant, identical to the $L$
in Theorem~\ref{thm:bilevel_convergence}), $L_{w\theta}$ is the
cross-partial Lipschitz constant from Assumption~\ref{app:ass:outer_smooth}, and
$\mu_w$ is the inner strong convexity constant from Assumption~\ref{app:ass:inner_convex}.
\end{lemma}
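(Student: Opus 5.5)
The plan is to split $\bar g_t - g_t^{\mathrm{IGT}}$ into an inner-solver bias term and a pure transport term, and bound each separately. Define the exact-inner surrogate $g_s^\star(\cdot)$: the hypergradient map \eqref{eq:reevaluation} built from $w^*(\theta_s)$ and the exact adjoint, instead of the stored $(w_s,v_s^*)$. Then
\[
\bar g_t - g_t^{\mathrm{IGT}} = \bigl(\bar g_t - g_t^{\star,\mathrm{IGT}}\bigr) + \bigl(g_t^{\star,\mathrm{IGT}} - g_t^{\mathrm{IGT}}\bigr),
\]
where $g_t^{\star,\mathrm{IGT}}$ is the telescoped estimator \eqref{eq:igt_omd_gradient} built from the $g_s^\star$.

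\textbf{Bias term.} For the second bracket, the telescope collapses for each stored round $s$ to a single evaluation at $\theta_t$. So it suffices to bound $\|g_s^\star(\theta_t)-g_s(\theta_t)\|$ by the standard implicit-differentiation sensitivity. The difference in \eqref{eq:hypergradient_adjoint} comes from replacing $w^*$ by $w_s$ inside the cross-partial and the adjoint. Assumption~\ref{app:ass:outer_smooth} bounds the cross-partial by $L_{w\theta}$. Assumption~\ref{app:ass:inner_convex} bounds $\|H_w^{-1}\|\le 1/\mu_w$. Assumption~\ref{app:ass:cross_lip} gives Lipschitz dependence on $w$. Together these yield an error of order $(L_{w\theta}/\mu_w)\|w_s-w^*\|\le (L_{w\theta}/\mu_w)\epsilon_{\mathrm{inner}}$ by Assumption~\ref{app:ass:inner_solver}. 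The lower-order products of $\epsilon_{\mathrm{inner}}$ with the $L_w/\mu_w^2$ constant get absorbed into $C_0$, which I expect the paper to allow.

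\textbf{Transport term.} For the first bracket, write each telescoped term as $g_s^\star(\theta_s)+\sum_{k}[g_s^\star(\theta_{k+1})-g_s^\star(\theta_k)]$. Then apply a second-order Taylor expansion of each increment around $\theta_k$:
\[
g_s^\star(\theta_{k+1})-g_s^\star(\theta_k)=\nabla g_s^\star(\theta_k)(\theta_{k+1}-\theta_k)+r_k,\qquad \|r_k\|\le \tfrac{L_F}{2}\|\theta_{k+1}-\theta_k\|^2 .
\]
Here $L_F=L_\theta+L_{w\theta}^2/\mu_w$ is obtained by differentiating $w^*(\theta)$ via the IFT, using $\|\partial_\theta w^*\|\le L_{w\theta}/\mu_w$. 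The first-order terms are exactly what the transport reproduces along the trajectory. Only the curvature remainders survive, and they sum over the active window $Q_t$ to $L_F\sum_{s\in Q_t}\|\theta_{s+1}-\theta_s\|^2$. This mirrors the single-level IGT bound \eqref{eq:igt_error} of \cite{arnold2019reducing}, with $L$ replaced by the bilevel constant.

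\textbf{Main obstacle.} The hardest step is justifying that the linear terms cancel, i.e.\ that the frozen surrogate $g_s^\star$ and the true $\nabla F$ share first-order behavior along the path. In general only a first-order bound $L_F\|\theta_t-\theta_s\|$ holds. I would first try to argue this cancellation in the quadratic, locally-linearized regime, where the frozen adjoint is exact to first order. Failing that, I would restate the bound in the same per-step form the paper uses in \eqref{eq:igt_error}, treating it as inherited from the cited IGT analysis. Finally, the triangle inequality combines the two pieces.
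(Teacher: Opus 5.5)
Your decomposition matches the paper's: it also compares against a hypothetical exact-inner estimator (its $g_t^{\mathrm{trans}}$) and splits the error into a transport part and an inner-solver part. Your fallback of importing the per-step bound \eqref{eq:igt_error} from \cite{arnold2019reducing} is exactly what the paper's proof does. The problem is that the step you flagged is not merely hard but false, so neither your route nor the paper's proves the lemma. Freezing $(w_s,v_s^*)$ discards the implicit dependence of $w^*$ and $v^*$ on $\theta$. As a result the Jacobian of the frozen map $g_s$ differs from $\nabla^2F$, and the linear terms in your expansion do not cancel.

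For a concrete check, take the paper's own hard instance with $a=-1$, $b=1$, $\mu_w=1$, i.e.\ $\mathcal{L}_{\text{model}}(w;\theta)=\tfrac12(w-\theta)^2$ and $\mathcal{L}_{\text{true}}(w;\theta)=\tfrac12(w+\theta)^2$ on a compact interval. It satisfies A1--A7 with $\mu_w=L_w=L_\theta=L_{w\theta}=1$, $F(\theta)=2\theta^2$, $\mu_F=4$, $\rho_{\mathrm{cpl}}=1/4$. A single inner step with $\eta_w=1$ is exact, so $\epsilon_{\mathrm{inner}}=0$. With $w_s=\theta_s$ and $v_s^*=2\theta_s$, \eqref{eq:reevaluation} gives $g_s(\theta)=\theta+3\theta_s$, whereas $\nabla F(\theta)=4\theta$. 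Hence $\nabla F(\theta_t)-g_s(\theta_t)=3(\theta_t-\theta_s)$. Forming the adjoint at the arrival-time parameter, as line~13 of Algorithm~\ref{alg:igt_omd} does, changes the constants but not the first-order dependence.

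Everything here is exactly quadratic and $g_s$ is affine, so every remainder $r_k$ in your expansion vanishes, yet the error is linear in the drift. Your ``quadratic, locally-linearized regime'' is therefore precisely where the cancellation fails, and no argument can give a transport bound quadratic in the step lengths. The second-order accuracy of IGT in \cite{arnold2019reducing} requires the transported map to have the same Jacobian as the target (their equal-Hessian setting), and freezing the inner solution destroys exactly that.

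The literal inequality also fails for a more mundane reason. Take a round with no arrivals and an empty buffer, e.g.\ $t\le d$ under constant delay $d\ge1$. There Algorithm~\ref{alg:igt_omd} gives $g_t^{\mathrm{IGT}}=0$ and $\theta$ has not yet moved. The left side is then $\norm{\nabla F(\theta_t)}$, while the right side is only $(L_{w\theta}/\mu_w)\epsilon_{\mathrm{inner}}$.

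Your bias step also does not deliver the stated constant. You cannot absorb leftover terms into $C_0$: the statement fixes the constant $L_{w\theta}/\mu_w$, and the dropped terms are not lower order but linear in $\epsilon_{\mathrm{inner}}$. Specifically, you omit the $w$-dependence of the explicit term $\nabla_\theta\mathcal{L}_{\text{true}}(w;\theta)$, which is governed by $\nabla^2_{\theta w}\mathcal{L}_{\text{true}}$, a quantity not among the assumed constants. The adjoint term also picks up a factor $\norm{\nabla^2_{ww}\mathcal{L}_{\text{true}}}$ plus the variation of $H_w^{-1}$. In the instance above, replacing $w_s=\theta_s$ by $\theta_s+\epsilon$ (and recomputing the adjoint) shifts the re-evaluated hypergradient by exactly $2\epsilon$, while $L_{w\theta}/\mu_w=1$. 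Likewise, $\nabla F$ is $4$-Lipschitz there, so $L_\theta+L_{w\theta}^2/\mu_w=2$ is not a valid Lipschitz constant for $\nabla F$; your IFT derivation of $L_F$ needs revisiting too.

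What your decomposition honestly yields, per transported term, is a first-order bound $\norm{\nabla F(\theta_t)-g_s(\theta_t)}\le C\norm{\theta_t-\theta_s}+C'\epsilon_{\mathrm{inner}}$, with $C'$ of the size of the paper's $L_g$. These bounds add up when several rounds arrive at once. The paper's own proof of Theorem~\ref{thm:inner_loop_apathy} in fact carries terms of precisely this form ($\kappa\norm{\theta_s-\theta_t}$ and $L_g\sigma_t\epsilon_{\mathrm{inner}}$). That is no better in order than using the stale gradient, so the $\sigma_{\max}$-factor gain does not follow. A genuinely second-order transport would require moving $(w_s,v_s^*)$ along with $\theta$, e.g.\ via implicit-function corrections, which Algorithm~\ref{alg:igt_omd} does not do.
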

\begin{proof}
Decompose the error into a transport part and an inner-solver part:
\begin{equation}
\label{eq:igt_decomp}
\norm{\bar{g}_t - g_t^{\mathrm{IGT}}}
\;\leq\;
\underbrace{\norm{\bar{g}_t - g_t^{\mathrm{trans}}}}_{\text{transport error}}
\;+\;
\underbrace{\norm{g_t^{\mathrm{trans}} - g_t^{\mathrm{IGT}}}}_{\text{inner-solver error}},
\end{equation}
where $g_t^{\mathrm{trans}}$ is the IGT estimator that uses the \emph{exact} inner solutions
$w^*(\theta_s)$ for all $s\in Q_t$ (a hypothetical estimator not computed by the algorithm).

\textbf{Transport error.}\;
The re-evaluation formula~\eqref{eq:reevaluation} defines
$g_s(\theta) = \nabla_\theta\mathcal{L}_{\text{true}}(w^*(\theta_s);\theta)
- [\nabla_\theta\nabla_w\mathcal{L}_{\text{model}}(w^*(\theta_s);\theta)]^\top v^*_s$
as a function of $\theta$ alone (with frozen $w^*(\theta_s)$ and $v^*_s$), which is
$L_F$-smooth in $\theta$ by Assumption~\ref{app:ass:outer_smooth}.
Applying the original IGT error bound of \cite{arnold2019reducing}
(their Proposition~1, extended to the hypergradient setting) to the sequence
$g_s(\cdot)$ for each $s\in Q_t$ gives:
\[
\norm{\bar{g}_t - g_t^{\mathrm{trans}}}
\;\leq\;
L_F\sum_{s\in Q_t}\norm{\theta_{s+1}-\theta_s}^2.
\]
The bound follows from the fact that the telescoping correction
$\sum_{s\in Q_t}[g_s(\theta_t)-g_s(\theta_{t-1})]$ approximates the ideal correction
$\sum_{s\in Q_t}[g_s(\theta_t)-g_s(\theta_s)]$ up to a residual that is bounded by the
sum of squared step sizes via the mean-value theorem applied to the $L_F$-smooth functions $g_s(\cdot)$.

\textbf{Inner-solver error.}\;
For each $s\in Q_t$, the algorithm uses $w_s \approx w^*(\theta_s)$ instead of the exact
solution. The difference in the re-evaluated hypergradient satisfies:
\[
\norm{g_s^{\mathrm{trans}}(\theta_t) - g_s^{\mathrm{IGT}}(\theta_t)}
\;\leq\;
L_{w\theta}\norm{w_s - w^*(\theta_s)}
\;\leq\;
L_{w\theta}\,\epsilon_{\mathrm{inner}},
\]
using the $L_{w\theta}$-Lipschitz cross-partial from Assumption~\ref{app:ass:outer_smooth}
and the inner-solver bound from Assumption~\ref{app:ass:inner_solver}.
Since the IGT sum involves at most $\sigma_t \leq \sigma_{\max}$ outstanding rounds, and
noting that the errors for individual $s\in Q_t$ collapse into the single correction
increment $\Delta_s = g_s(\theta_t)-g_s(\theta_{t-1})$, the total contribution of inner-solver
error to the estimator $g_t^{\mathrm{IGT}}$ is bounded by $(L_{w\theta}/\mu_w)\epsilon_{\mathrm{inner}}$
after invoking strong convexity of the inner problem (Assumption~\ref{app:ass:inner_convex}) to
relate $\norm{w_s - w^*(\theta_t)}\leq\epsilon_{\mathrm{inner}}+(L_{w\theta}/\mu_w)\norm{\theta_s-\theta_t}$,
the latter term being absorbed into the transport error already accounted for.
Combining the two bounds yields~\eqref{eq:igt_transport_error}.
\end{proof}

\paragraph{Supporting lemma: inner-solver bias via Young's inequality.}

\begin{lemma}[Inner-solver bias accumulates as $\epsilon_{\mathrm{inner}}^2$]
\label{lem:inner_bias}
Under the same assumptions, with $\rho_{\mathrm{cpl}}:=L_{w\theta}^2/(\mu_w^2\mu_F)<1$, the cumulative contribution of the inner-solver error satisfies:
\begin{equation}
\label{eq:inner_bias}
\sum_{t=1}^T\langle\bar{g}_t - g_t^{\mathrm{IGT}},\,\theta_t - \theta^*\rangle
\;\leq\;
2T\,\epsilon_{\mathrm{inner}}^2
\;+\;
L_F\sum_{t=1}^T\sum_{s\in Q_t}\norm{\theta_{s+1}-\theta_s}^2
\;+\;\rho_{\mathrm{cpl}}\,\mathrm{Regret}_T^{\mathrm{dec}}.
\end{equation}
\end{lemma}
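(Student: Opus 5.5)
The plan is to bound each summand with Cauchy--Schwarz and then Lemma~\ref{lem:igt_transport}, and to split the resulting product into three pieces using Young's inequality. For each $t$,
\[
\langle\bar g_t-g_t^{\mathrm{IGT}},\theta_t-\theta^*\rangle
\le \Bigl(L_F\textstyle\sum_{s\in Q_t}\norm{\theta_{s+1}-\theta_s}^2\Bigr)\norm{\theta_t-\theta^*}
+ C_0\,\epsilon_{\mathrm{inner}}\norm{\theta_t-\theta^*},
\]
with $C_0=L_{w\theta}/\mu_w$. The transport piece is handled by absorbing $\norm{\theta_t-\theta^*}$ into the constant. I would use boundedness of $\Theta$, whose diameter is controlled by $D_\psi$ through $1$-strong convexity of $\psi$, and then rescale units so that this factor is at most one, or fold it into $L_F$ exactly as the statement of Theorem~\ref{thm:bilevel_convergence} does. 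Summing over $t$ gives the $L_F\sum_t\sum_{s\in Q_t}\norm{\theta_{s+1}-\theta_s}^2$ term.

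The inner-solver piece is where the coupling ratio enters, through Young's inequality with parameter $\mu_F$:
\[
C_0\epsilon_{\mathrm{inner}}\norm{\theta_t-\theta^*}\le \frac{\epsilon_{\mathrm{inner}}^2}{2\lambda}+\frac{\lambda C_0^2}{2}\norm{\theta_t-\theta^*}^2 .
\]
I would choose $\lambda$ so that the quadratic term becomes $\tfrac{C_0^2}{\mu_F}\cdot\tfrac{\mu_F}{2}\norm{\theta_t-\theta^*}^2=\rho_{\mathrm{cpl}}\tfrac{\mu_F}{2}\norm{\theta_t-\theta^*}^2$, which corresponds to $\lambda=1/\mu_F$. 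The leftover constant $\mu_F\epsilon_{\mathrm{inner}}^2/2$ is then normalized into $\epsilon_{\mathrm{inner}}^2$, in the same spirit as Assumption~\ref{app:ass:inner_solver} absorbs local constants, and doubled for slack, giving the $2T\epsilon_{\mathrm{inner}}^2$ term.

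Finally, the sum of quadratic terms must be related to decision regret using Assumption~\ref{app:ass:bilevel_convex}. Take $\theta^*$ to be the comparator minimizing $\sum_t F$, which for a fixed $F$ is the minimizer of $F$. Strong convexity then gives $\tfrac{\mu_F}{2}\norm{\theta_t-\theta^*}^2\le F(\theta_t)-F(\theta^*)$. Next use $\mathcal{L}_{\text{true}}(w_t;\theta_t)\ge F(\theta_t)-\epsilon_{\mathrm{inner}}^2$, a local-quadratic lower bound matching the upper bound in Assumption~\ref{app:ass:inner_solver}. Together these yield $\sum_t\tfrac{\mu_F}{2}\norm{\theta_t-\theta^*}^2\le\mathrm{Regret}_T^{\mathrm{dec}}+T\epsilon_{\mathrm{inner}}^2$. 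The extra $\rho_{\mathrm{cpl}}T\epsilon_{\mathrm{inner}}^2\le T\epsilon_{\mathrm{inner}}^2$ is absorbed into the factor $2$ in front of $T\epsilon_{\mathrm{inner}}^2$, and adding the three pieces gives \eqref{eq:inner_bias}.

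The main obstacle is the transport cross term. Lemma~\ref{lem:igt_transport} controls the error only in norm, so it must be paired with $\norm{\theta_t-\theta^*}$. The cleanest fix is the bounded-diameter normalization. If that is unavailable, I would instead apply Young once more, charging part of $\norm{\theta_t-\theta^*}^2$ to the regret. That route, however, would consume some of the $\rho_{\mathrm{cpl}}<1$ budget and would change constants rather than reproduce exactly the $L_F$ prefactor.
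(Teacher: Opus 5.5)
Your route is the paper's: Cauchy--Schwarz with Lemma~\ref{lem:igt_transport}, then Young's inequality applied only to the inner-bias product, then $\mu_F$-strong convexity to convert $\norm{\theta_t-\theta^*}^2$ into $(2/\mu_F)[F(\theta_t)-F(\theta^*)]$. The transport product is kept whole, with the factor $\norm{\theta_t-\theta^*}\le\mathrm{diam}(\Theta)$ folded into the constant; the paper drops this factor silently, so your handling matches.

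There is one arithmetic slip. The quadratic term you aim for, $\tfrac{C_0^2}{2}\norm{\theta_t-\theta^*}^2=\rho_{\mathrm{cpl}}\tfrac{\mu_F}{2}\norm{\theta_t-\theta^*}^2$, requires $\lambda=1$, not $\lambda=1/\mu_F$. With $\lambda=1/\mu_F$ you would get the coefficient $\rho_{\mathrm{cpl}}/\mu_F$ in front of $\sum_t[F(\theta_t)-F(\theta^*)]$. You would also get a leftover $\mu_F\epsilon_{\mathrm{inner}}^2/2$ that cannot be \emph{normalized} into $\epsilon_{\mathrm{inner}}^2$, because $\epsilon_{\mathrm{inner}}$ already has a fixed meaning in Lemma~\ref{lem:igt_transport}. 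Taking $\lambda=1$ (the paper's Young with $c=1$) leaves $\tfrac12\epsilon_{\mathrm{inner}}^2$ per round and needs no renormalization.

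Your bridge from $\sum_t[F(\theta_t)-F(\theta^*)]$ to $\mathrm{Regret}_T^{\mathrm{dec}}$ is more careful than the paper's. The paper simply identifies the two, as in Step~1 of the proof of Theorem~\ref{thm:bilevel_convergence}, and writes $\rho_{\mathrm{cpl}}\mathrm{Regret}_T^{\mathrm{dec}}$ for what its argument actually produces.

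However, the lower bound $\mathcal{L}_{\text{true}}(w_t;\theta_t)\ge F(\theta_t)-\epsilon_{\mathrm{inner}}^2$ is not contained in Assumption~\ref{app:ass:inner_solver}, which is one-sided. Since $w^*(\theta)$ minimizes $\mathcal{L}_{\text{model}}$ rather than $\mathcal{L}_{\text{true}}$, the gradient $\nabla_w\mathcal{L}_{\text{true}}(w^*(\theta_t);\theta_t)$ is generally nonzero, so the difference can be negative at first order in $\epsilon_{\mathrm{inner}}$. Either state this lower bound as an explicit extra hypothesis, or state the conclusion with $\rho_{\mathrm{cpl}}\sum_t[F(\theta_t)-F(\theta^*)]$, which is what gets moved to the left-hand side in the main proof anyway. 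If you keep the bridge, your $\epsilon_{\mathrm{inner}}^2$ budget closes once $\lambda=1$, since $\tfrac12+\rho_{\mathrm{cpl}}<2$.
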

\begin{proof}
By Cauchy--Schwarz~\cite{steele2004cauchy} and Lemma~\ref{lem:igt_transport}:
\[
\langle\bar{g}_t-g_t^{\mathrm{IGT}},\theta_t-\theta^*\rangle
\;\leq\;
\norm{\bar{g}_t-g_t^{\mathrm{IGT}}}\,\norm{\theta_t-\theta^*}
\;\leq\;
\left[L_F\sum_{s\in Q_t}\norm{\theta_{s+1}-\theta_s}^2
+\frac{L_{w\theta}}{\mu_w}\epsilon_{\mathrm{inner}}\right]
\cdot\norm{\theta_t-\theta^*}.
\]
For the inner-bias cross-term, apply Young's inequality with $c=1$:
$\frac{L_{w\theta}}{\mu_w}\epsilon_{\mathrm{inner}}\cdot\norm{\theta_t-\theta^*}
\leq\frac{1}{2}\epsilon_{\mathrm{inner}}^2+\frac{L_{w\theta}^2}{2\mu_w^2}\norm{\theta_t-\theta^*}^2$.
(Young's is applied only to the inner-bias term, not the transport term
$L_F\sum_{s\in Q_t}\norm{\theta_{s+1}-\theta_s}^2$, because the transport term
already has the sum-of-squares structure and is bounded
by $\sigma_t\eta_0^2 G^2\cdot\mathrm{diam}(\Theta)$ without requiring separation.)
By the $\mu_F$-strong convexity of $F$ (Assumption~\ref{app:ass:bilevel_convex}), we have
$\norm{\theta_t-\theta^*}^2\leq(2/\mu_F)[F(\theta_t)-F(\theta^*)]$.
The resulting term is $\rho_{\mathrm{cpl}}[F(\theta_t)-F(\theta^*)]$, which is absorbed in the main proof; summing over $t$ gives~\eqref{eq:inner_bias}.
\end{proof}

\paragraph{Main proof of Theorem~\ref{thm:bilevel_convergence}.}

\begin{proof}[Proof of Theorem~\ref{thm:bilevel_convergence}]
Let $\theta^* = \argmin_\theta\sum_t\mathcal{L}_{\text{true}}(w^*(\theta);\theta)$ and
$F(\theta) = \mathcal{L}_{\text{true}}(w^*(\theta);\theta)$.

\textbf{Step 1 (convexity decomposition).}
By convexity of $F$ (Assumption~\ref{app:ass:bilevel_convex}):
\begin{align}
\label{eq:convex_decomp}
\mathrm{Regret}_T^{\mathrm{dec}}
&= \sum_{t=1}^T[F(\theta_t)-F(\theta^*)]
\;\leq\; \sum_{t=1}^T\langle\bar{g}_t,\theta_t-\theta^*\rangle \notag \\
&= \underbrace{\sum_{t=1}^T\langle g_t^{\mathrm{IGT}},\theta_t-\theta^*\rangle}_{(I)}
+\underbrace{\sum_{t=1}^T\langle\bar{g}_t-g_t^{\mathrm{IGT}},\theta_t-\theta^*\rangle}_{(II)}.
\end{align}

\textbf{Step 2 (bound term (I) via OMD lemma).}
Algorithm~\ref{alg:igt_omd} performs an OMD step with gradient $g_t^{\mathrm{IGT}}$ and step size
$\eta_t=\eta_0/\sqrt{1+\beta\bar\sigma_t}$.
Since $\bar\sigma_t$ is nondecreasing, $\eta_t$ is non-increasing and Lemma~\ref{lem:omd_approx}
applies with $h_t = g_t^{\mathrm{IGT}}$ and $\norm{g_t^{\mathrm{IGT}}}_*\leq G$:
\begin{equation}
\label{eq:term_I_bound}
(I) \;\leq\; \frac{D_\psi(\theta^*,\theta_1)}{\eta_1}
+ \sum_{t=2}^{T} D_\psi(\theta^*,\theta_t)\!\left(\frac{1}{\eta_t}-\frac{1}{\eta_{t-1}}\right)_{\!+}
+ \frac{G^2}{2}\sum_{t=1}^T\eta_t.
\end{equation}
Because $1/\eta_t$ is nondecreasing, the middle term telescopes over the range of
$1/\eta_t$: since $D_\psi(\theta^*,\theta_t) \leq D_\psi$,
$\sum_{t=2}^T D_\psi(1/\eta_t - 1/\eta_{t-1})_+ \leq D_\psi(1/\eta_T-1/\eta_1)$.
Combined with the first term: $(I) \leq D_\psi/\eta_T + (G^2/2)\sum_t\eta_t$.

We bound $\sum_t\eta_t$ and $1/\eta_T$ using the queue-length structure.
Since $\eta_t = \eta_0/\sqrt{1+\beta\bar\sigma_t} \leq \eta_0$, we have the trivial bound
$\sum_t \eta_t \leq T\eta_0$. Also $1/\eta_T \leq \sqrt{1+\beta\sigma_{\max}}/\eta_0$.
More precisely, we use the following estimate: since each $\eta_t \leq \eta_0$ and
there are at most $T$ terms,
\begin{equation}
\label{eq:sum_eta_bound}
\sum_{t=1}^T\eta_t
= \eta_0\sum_{t=1}^T\frac{1}{\sqrt{1+\beta\sigma_t}}
\;\leq\; \eta_0\,T.
\end{equation}
When $\sigma_t \asymp \sigma_{\max}$ for a constant fraction of rounds (the adversarial
worst case), we obtain the tighter bound
$\sum_t \eta_t \leq \eta_0 T/\sqrt{1+\beta\sigma_{\max}}$.
In the general case, let $T_0$ denote the number of rounds with $\sigma_t = 0$ and
$T_+ = T - T_0$ the rounds with $\sigma_t \geq 1$. Then
$\sum_t \eta_t \leq \eta_0 T_0 + \eta_0 T_+/\sqrt{1+\beta}$.
For the regret bound we use the worst case $\sum_t\eta_t \leq \eta_0 T$:
\begin{equation}
\label{eq:term_I_final}
(I) \;\leq\; \frac{2D_\psi\sqrt{1+\beta\sigma_{\max}}}{\eta_0} + \frac{\eta_0 G^2 T}{2}.
\end{equation}

\textbf{Step 3 (bound term (II) via Lemmas~\ref{lem:igt_transport} and~\ref{lem:inner_bias}).}
By Lemma~\ref{lem:inner_bias}:
\begin{equation}
\label{eq:term_II_bound}
(II) \;\leq\; 2T\,\epsilon_{\mathrm{inner}}^2
+ L_F\sum_{t=1}^T\sum_{s\in Q_t}\norm{\theta_{s+1}-\theta_s}^2
+ \rho_{\mathrm{cpl}}\,\mathrm{Regret}_T^{\mathrm{dec}}.
\end{equation}

\textbf{Step 4 (transport term).}
The last sum in~\eqref{eq:term_II_bound} is the aggregate IGT transport penalty.
Since the step-size bound gives $\norm{\theta_{t+1}-\theta_t}\leq\eta_t\norm{g_t^{\mathrm{IGT}}}\leq\eta_0 G$,
each summand satisfies $\norm{\theta_{s+1}-\theta_s}^2\leq\eta_0^2 G^2$.
We retain this as-is in the main bound; it appears explicitly in~\eqref{eq:main_bound}.

\textbf{Step 5 (combine and optimize $\eta_0$).}
Combining~\eqref{eq:convex_decomp}, \eqref{eq:term_I_final}, and~\eqref{eq:term_II_bound}, then moving $\rho_{\mathrm{cpl}}\mathrm{Regret}_T^{\mathrm{dec}}$ to the left:
\[
\mathrm{Regret}_T^{\mathrm{dec}}
\;\leq\;
C_\rho\!\left[
\frac{2D_\psi\sqrt{1+\beta\sigma_{\max}}}{\eta_0}
+ \frac{\eta_0 G^2 T}{2}
+ 2T\,\epsilon_{\mathrm{inner}}^2
+ L_F\sum_{t=1}^T\sum_{s\in Q_t}\norm{\theta_{s+1}-\theta_s}^2\right],
\]
where $C_\rho=(1-\rho_{\mathrm{cpl}})^{-1}$. This is~\eqref{eq:main_bound}.

Setting $\eta_0 = c/\sqrt{T}$ (for universal constant $c > 0$) gives the first term as
$O(D_\psi\sqrt{T\sigma_{\max}})$ and the second as $O(G^2\sqrt{T})$.
The last sum, using $\norm{\theta_{s+1}-\theta_s}^2\leq\eta_0^2 G^2=O(G^2/T)$ and
$\sum_t|Q_t|=\sum_t\sigma_t\leq d_{\mathrm{tot}}\leq T\sigma_{\max}$, contributes
$O(L_F G^2\sigma_{\max})$. Since $\Theta$ is bounded (so $D_\psi < \infty$), combining gives
$\mathrm{Regret}_T^{\mathrm{dec}} = O(\sqrt{T\sigma_{\max}}+T\epsilon_{\mathrm{inner}}^2)$.

If the inner solver runs $K=O(\ln T)$ steps with contraction rate $\rho<1$, then
$\epsilon_{\mathrm{inner}} = O(\rho^K) = O(1/\sqrt{T})$, giving $T\epsilon_{\mathrm{inner}}^2 = O(1)$,
and the total regret is $O(\sqrt{T\sigma_{\max}})$.
\end{proof}

\begin{corollary}[Comparison with non-delayed bilevel OMD]
\label{cor:no_delay}
Setting $\sigma_{\max}=0$ (no delay) and $\beta=0$, the bound reduces to
$O(D_\psi\sqrt{T}+T\epsilon_{\mathrm{inner}}^2)$, which is the standard $O(\sqrt{T})$
rate for convex OCO.
\cite{nikishin2022control} achieve the faster $O((G^2/\mu_F)\ln T)$ rate by exploiting
$\mu_F$-strong convexity of $F$; for the \emph{delay/transport} terms
(Lemma~\ref{lem:igt_transport}), our analysis uses convexity-only arguments
and does not exploit strong convexity.
(Strong convexity \emph{is} used to absorb the inner-bias cross-term in
Lemma~\ref{lem:inner_bias} and, more prominently, in the
pure-bias and interaction bounds of
Theorem~\ref{thm:inner_loop_apathy}.)
With exact inner solver ($\epsilon_{\mathrm{inner}}=0$) and adequate $K=O(\ln T)$
inner steps, the bound reduces to $O(\sqrt{T\sigma_{\max}})$, which is comparable
to the $O(\sqrt{Td_{\mathrm{tot}}})$ rate of single-level D-FTRL~\cite{joulani2013online}
with the queue-length $\sigma_{\max}$ replacing the total delay $d_{\mathrm{tot}}=T\sigma_{\max}$.
\end{corollary}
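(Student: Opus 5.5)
The plan is to specialize the proof of Theorem~\ref{thm:bilevel_convergence} to $\sigma_{\max}=0$, $\beta=0$, and reuse each step. With no delay every feedback arrives in its own round, so $Q_t=\emptyset$ and the buffer $\mathcal{B}$ never carries an outstanding round. The transport sum $\sum_t\sum_{k\in W_t}\norm{\theta_{k+1}-\theta_k}^2$ is then an empty sum, and the window $W_t$ is empty because $\sigma_t=0$. Likewise $\sqrt{1+\beta\sigma_{\max}}=1$, so $\eta_t\equiv\eta_0$ is constant. Lemma~\ref{lem:omd_approx} then applies with the middle term vanishing, giving $(I)\le D_\psi/\eta_0+\eta_0G^2T/2$.

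The inner-bias term is handled by Lemma~\ref{lem:inner_bias} unchanged, since that lemma never used delay. It gives $(II)\le 2T\epsilon_{\mathrm{inner}}^2+\rho_{\mathrm{cpl}}\mathrm{Regret}_T^{\mathrm{dec}}$, and absorbing the last term yields the factor $C_\rho$. Choosing $\eta_0=c/\sqrt{T}$ with $c=\sqrt{2D_\psi}/G$ balances the two OMD terms. The result is $\mathrm{Regret}_T^{\mathrm{dec}}\le C_\rho[2G\sqrt{2D_\psi T}+2T\epsilon_{\mathrm{inner}}^2]=O(D_\psi\sqrt{T}+T\epsilon_{\mathrm{inner}}^2)$. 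Here I treat $G$ as a constant and accept the $\sqrt{D_\psi}$ versus $D_\psi$ discrepancy as a loose, upper-bound reading of the stated form.

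For the remaining claims:
\begin{itemize}
\item With $\epsilon_{\mathrm{inner}}=0$, or with $K=O(\ln T)$ warm-started steps giving $\epsilon_{\mathrm{inner}}=O(\rho^K)=O(T^{-1/2})$ under Assumption~\ref{app:ass:inner_solver}, the bias term is $O(1)$, so the rate is $O(\sqrt{T})$.
\item The comparison with \cite{nikishin2022control} and with D-FTRL is expository. For D-FTRL I only need to note that the general bound $O(\sqrt{T\sigma_{\max}})$ is matched against $O(\sqrt{Td_{\mathrm{tot}}})$ using $\sum_t\sigma_t\le d_{\mathrm{tot}}$, which is the counting identity already used in Step~5.
\item The parenthetical about where strong convexity enters is verified by inspection. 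Lemma~\ref{lem:igt_transport} and Lemma~\ref{lem:omd_approx} use only smoothness and convexity. Strong convexity appears only in Lemma~\ref{lem:inner_bias}, through $\norm{\theta_t-\theta^*}^2\le(2/\mu_F)[F(\theta_t)-F(\theta^*)]$.
\end{itemize}

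The main obstacle is the word ``reduces'': the corollary should not overclaim. With $\sigma_{\max}=0$ the identification $d_{\mathrm{tot}}=T\sigma_{\max}$ is only an upper bound, $\sum_t\sigma_t\le T\sigma_{\max}$, and not an equality in general. I would phrase the D-FTRL comparison as an inequality chain rather than an identity. I would also note that $C_\rho$ persists even with no delay, since it comes from the inner-bias absorption and not from the delay.
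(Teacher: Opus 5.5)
Your proposal is correct and matches the paper's implicit argument. The paper gives no separate proof: the corollary is Theorem~\ref{thm:bilevel_convergence}'s bound \eqref{eq:main_bound} with $\sqrt{1+\beta\sigma_{\max}}=1$ and an empty transport window, followed by the Step~5 choice of $\eta_0$; the $\sqrt{D_\psi}$ versus $D_\psi$ mismatch you flag disappears if, as in Step~5, you take $c$ to be a universal constant, which gives $O(D_\psi\sqrt{T}+G^2\sqrt{T})$ directly and matches the stated form once $G$ is treated as a constant.
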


\begin{remark}[Why $\epsilon_{\mathrm{inner}}$ enters additively]
The $T\epsilon_{\mathrm{inner}}^2$ cost is unavoidable (Theorem~\ref{thm:staleness_amplification}(b)),
but it enters \emph{additively} rather than coupled with $\sigma_{\max}$.
This traces to Lemma~\ref{lem:inner_bias}, where Young's inequality decouples the inner
approximation error from the queue length.
The transport corrections $\sum_{s\in Q_t}\norm{\theta_{s+1}-\theta_s}^2$ involve only
the \emph{per-step} displacements (bounded by $\eta_t^2 G^2$) rather than the
\emph{accumulated displacement} $\norm{\theta_t-\theta_{t-\sigma_t}}^2 = O(\sigma_t^2\eta_t^2 G^2)$,
which would reintroduce a $\sigma_{\max}^2$ factor. IGT replaces this large term with
the sum of squares, gaining a factor of $\sigma_{\max}$ in the transport error
(Theorem~\ref{thm:staleness_amplification}(c)).
\end{remark}

\begin{remark}[Coupling condition and the non-vacuity threshold]
\label{rem:coupling_threshold}
The coupling condition $L_{w\theta}^2/(\mu_w^2\mu_F) < 1$
(Assumption~\ref{app:ass:bilevel_convex}) is not merely a proof convenience:
when it fails, the inner-solver bias term
$\frac{L_{w\theta}^2}{\mu_w^2\mu_F}[F(\theta_t)-F(\theta^*)]$ in
Lemma~\ref{lem:inner_bias} exceeds the regret it is meant to absorb.
Physically, this means the bilevel problem is ``too implicit'' --- the inner
solution $w^*(\theta)$ is so sensitive to $\theta$ that approximate solves
induce gradient errors larger than the true signal.
Similarly, the $O(\sqrt{T\sigma_{\max}})$ rate is non-vacuous
(i.e., sublinear) only when $\sigma_{\max} = o(T)$.
At $\sigma_{\max} = \Omega(T)$ --- corresponding to total disconnection where
feedback never arrives faster than the horizon --- the regret bound becomes
$O(T)$, which is no better than static play.
This threshold is intrinsic to delayed OCO: the same $O(T)$ vacuity
occurs in the single-level bounds of D-FTRL~\cite{joulani2013online}
when $d_{\mathrm{tot}} = \Omega(T^2)$.
\end{remark}

\begin{remark}[Step-size choice and dynamic stability]
\label{rem:eta_stability}
Theorem~\ref{thm:bilevel_convergence} is an OCO regret bound that holds
for \emph{any} $\eta_0 > 0$: no stability condition is imposed.
Separately, Proposition~\ref{prop:dde_stability} shows that the
continuous-time limit of IGT-OMD is asymptotically stable only when
$\eta_0 < 1/(\|H_F\|\sqrt{1+\beta\sigma_{\max}})$.
The regret-optimal choice $\eta_0 = c/\sqrt{T}$ satisfies
this stability threshold for all
$T \geq \|H_F\|^2(1+\beta\sigma_{\max})$, i.e., beyond a modest
horizon that depends on the problem conditioning.
For small $T$ or very large $\sigma_{\max}$,
practitioners should additionally impose
$\eta_0 \leq 1/(\|H_F\|\sqrt{1+\beta\sigma_{\max}})$ to ensure
the iterates remain in the stable regime;
this only improves the bound by preventing transient oscillations
that inflate $D_\psi(\theta^*,\theta_t)$.
\end{remark}

\subsection{Proof of Theorem~\ref{thm:staleness_amplification} (Staleness amplification)}
\label{app:proof_thm2}

The proof has three parts: Part~(a) derives the gradient error structure,
Part~(b) establishes the $\Omega(T\,\epsilon_{\mathrm{inner}}^2)$ lower bound
via a hard instance, and Part~(c) compares the transport error with and without IGT.
The key insight is that the bilevel gradient error contains a \emph{cross-term} coupling outer-parameter drift with inner-solver sensitivity---a structural feature absent in single-level OCO---and that IGT eliminates the quadratic growth of this cross-term by decomposing the total drift into per-step contributions.

\subsubsection*{Part (a): Gradient Error Structure}

We bound $\norm{\hat{g}(\theta_{t-d_t}, w_{t-d_t}) - \nabla F(\theta_t)}$ by decomposing
it into outer staleness and inner-solver error:
\begin{align}
\norm{\hat{g}(\theta_{t-d_t}, w_{t-d_t}) - \nabla F(\theta_t)}
&\leq \norm{\nabla F(\theta_{t-d_t}) - \nabla F(\theta_t)}
  + \norm{\hat{g}(\theta_{t-d_t}, w_{t-d_t}) - \nabla F(\theta_{t-d_t})} \notag\\
&\leq L_F\,\norm{\theta_t - \theta_{t-d_t}} + C_0\,\epsilon_{\mathrm{inner}},
\label{eq:grad_error_decomp}
\end{align}
where the first inequality is the triangle inequality, and the second uses
$L_F$-smoothness of $F$ (with $L_F = L_\theta + L_{w\theta}^2/\mu_w$;
see the bilevel Lipschitz bound $L_F = L_\theta + L_{w\theta}^2/\mu_w$ derived from Assumptions~\ref{app:ass:inner_convex}--\ref{app:ass:outer_smooth}) for the outer staleness term,
and the inner-solver bound of Lemma~\ref{lem:inner_bias} for the noise term.
The bilevel Lipschitz constant $L_F$ absorbs the implicit sensitivity
$L_{w\theta}^2/\mu_w$ of the inner solution to the outer parameters via the
Implicit Function Theorem.\looseness=-1

\medskip
\noindent\textbf{Contrast with single-level OCO.}\;
In single-level delayed optimization, the gradient oracle returns
$\nabla f(\theta_{t-d_t}) + \xi_t$ where $\norm{\xi_t}\leq\epsilon$.
The noise $\xi_t$ is bounded by a \emph{fixed constant} $\epsilon$ independent
of the staleness $\norm{\theta_t - \theta_{t-d_t}}$.
In the bilevel case, the Lipschitz constant $L_F$ itself depends on the
inner-solution sensitivity $L_{w\theta}/\mu_w$, so greater sensitivity
\emph{amplifies} the staleness component of the gradient error.
This structural difference---the staleness amplification mechanism---is absent
in single-level delayed optimization.\hfill$\square$

\subsubsection*{Part (b): Hard Instance and Lower Bound}

The lower bound is established via an explicit hard instance---a quadratic bilevel problem---on which no black-box delayed
optimizer can avoid $\Omega(\epsilon_{\mathrm{inner}}^2)$ per-round loss.

\subsubsection*{Step 1: Hard Instance Construction}

Fix $p=q=1$. Define the bilevel instance on $\Theta=[-B_0,B_0]$ by
\begin{align}
\mathcal{L}_{\mathrm{model}}(w;\theta) &= \tfrac{\mu_w}{2}(w - b\theta)^2, \label{eq:hard_inner}\\
\mathcal{L}_{\mathrm{true}}(w;\theta) &= \tfrac{1}{2}(w - a\theta)^2, \label{eq:hard_outer}
\end{align}
where $a,b\in\mathbb{R}$ with $a\neq b$ and $|a-b| = C_0 > 0$
(chosen by the adversary; the algorithm is blind to $a,b$).
When $a = b$, the model perfectly matches the true objective ($C_0 = 0$),
the bilevel problem reduces to single-level optimization, and
the staleness amplification mechanism disappears --- this is precisely
the regime where bilevel structure imposes no additional cost.
Assumption~\ref{app:ass:inner_convex} holds with strong convexity constant $\mu_w$ and
smoothness $L_w = \mu_w$.
Assumption~\ref{app:ass:outer_smooth} holds with $L_\theta = a^2$ and cross-partial Lipschitz
constant $L_{w\theta} = 1$.

The exact inner minimizer is $w^*(\theta) = b\theta$, giving the bilevel objective
\begin{equation}
\label{eq:hard_bilevel}
F(\theta) = \tfrac{1}{2}(b-a)^2\theta^2 = \tfrac{C_0^2}{2}\theta^2,
\end{equation}
with unique optimum $\theta^* = 0$ and gradient $\nabla F(\theta) = C_0^2\theta$.
By the implicit differentiation formula~\eqref{eq:hypergradient_adjoint}, the adjoint
satisfies $v^*(\theta) = (b-a)\theta/\mu_w$, and the hypergradient is
$g(\theta) = \nabla_\theta\mathcal{L}_{\mathrm{true}}|_w - (\nabla_\theta\nabla_w
\mathcal{L}_{\mathrm{model}})^\top v^*(\theta) = -a(w^*-a\theta)+(b\mu_w)v^*(\theta) = C_0^2\theta,$
consistent with~\eqref{eq:hard_bilevel}.

\subsubsection*{Step 2: Inner-Solver Error Introduces a Persistent Bias}

Suppose the approximate inner solver produces $w_t = b\theta_t + \epsilon_t$, with
$|\epsilon_t|\leq\epsilon_{\mathrm{inner}}$.
The algorithm evaluates the outer gradient using $w_t$ in place of $w^*(\theta_t)$.
Via formula~\eqref{eq:hypergradient_adjoint} with the approximate adjoint
$\tilde{v}_t = (w_t - a\theta_t)/\mu_w$:
\begin{align}
\hat{g}(\theta_t, w_t)
&= -a(w_t - a\theta_t) + b\mu_w \cdot \tilde{v}_t \notag\\
&= -a(b\theta_t + \epsilon_t - a\theta_t) + b(b\theta_t + \epsilon_t - a\theta_t) \notag\\
&= C_0^2\theta_t + C_0\epsilon_t. \label{eq:biased_hypergradient}
\end{align}
The bias term $C_0\epsilon_t$ is independent of $\theta_t$.
The adversary sets the inner-solver error to the \emph{constant} value
$\epsilon_t = +\epsilon_{\mathrm{inner}}$ for all~$t$
(no knowledge of the algorithm's iterate is required).
This produces a hypergradient with a \emph{constant additive bias}:
\begin{equation}
\label{eq:biased_constant}
\hat{g}(\theta_t, w_t) = C_0^2\theta_t + C_0\,\epsilon_{\mathrm{inner}}.
\end{equation}

\begin{lemma}[Steady-state displacement]
\label{lem:bias}
For the hard instance~\eqref{eq:hard_inner}--\eqref{eq:hard_outer} with
constant inner-solver error $\epsilon_t = +\epsilon_{\mathrm{inner}}$,
any algorithm using the biased gradient~\eqref{eq:biased_constant} converges
(when stable) to a steady state $\theta_{\mathrm{ss}}$ satisfying
\begin{equation}
\label{eq:steady_state}
\theta_{\mathrm{ss}} = -\frac{\epsilon_{\mathrm{inner}}}{C_0},
\qquad
F(\theta_{\mathrm{ss}}) = \frac{\epsilon_{\mathrm{inner}}^2}{2}.
\end{equation}
\end{lemma}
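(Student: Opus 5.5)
The plan is to treat the biased gradient \eqref{eq:biased_constant} as an affine map and identify $\theta_{\mathrm{ss}}$ as its unique zero. Any first-order delayed optimizer whose update is driven by $\hat g$ (stale or not, with any positive step-size schedule) has as its only possible fixed points the points where the driving gradient vanishes. At a steady state $\theta_t=\theta_{t-d_t}=\theta_{\mathrm{ss}}$ for all $t$, so staleness disappears and delay handling becomes irrelevant.

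\textbf{Step 1: the zero of the biased gradient.} Setting $C_0^2\theta_{\mathrm{ss}}+C_0\epsilon_{\mathrm{inner}}=0$ and using $C_0>0$ gives $\theta_{\mathrm{ss}}=-\epsilon_{\mathrm{inner}}/C_0$. Substituting into \eqref{eq:hard_bilevel} gives
\[
F(\theta_{\mathrm{ss}})=\tfrac{C_0^2}{2}\cdot\tfrac{\epsilon_{\mathrm{inner}}^2}{C_0^2}=\tfrac{\epsilon_{\mathrm{inner}}^2}{2}.
\]
We need $B_0\ge \epsilon_{\mathrm{inner}}/C_0$ so that $\theta_{\mathrm{ss}}\in\mathrm{relint}(\Theta)$ and the projection in \eqref{eq:omd_update} is inactive near it. This is a constraint on how the adversary chooses $B_0$.

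\textbf{Step 2: convergence ``when stable''.} I would make this precise in the Euclidean case. Write $e_t=\theta_t-\theta_{\mathrm{ss}}$. The update $\theta_{t+1}=\theta_t-\eta_t\hat g(\theta_{t-d_t},w_{t-d_t})$ becomes the linear delayed recursion
\[
e_{t+1}=e_t-\eta_tC_0^2e_{t-d_t},
\]
because the constant bias cancels exactly against $C_0^2\theta_{\mathrm{ss}}$. This is a homogeneous delayed linear system. Its stability in the sense of $e_t\to0$ is exactly the hypothesis ``when stable''. For constant $d$ and $\eta$, stability is the usual characteristic-root condition, which mirrors Proposition~\ref{prop:dde_stability}. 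Under this hypothesis $\theta_t\to\theta_{\mathrm{ss}}$, and by continuity $F(\theta_t)\to\epsilon_{\mathrm{inner}}^2/2$. For a general mirror map, I would argue the same way through the optimality condition of \eqref{eq:omd_update}: a limit point in the interior must satisfy $\hat g(\theta)=0$.

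\textbf{Main obstacle.} The difficulty is the generality of ``any algorithm''. A black-box optimizer might average its iterates or use other combinations, so the fixed-point characterization has to be stated as an assumption on the algorithm class. The assumption I would use is that every stationary point of the update map is a zero of the supplied gradient, which holds for all gradient-based updates of the form $\theta_{t+1}=\Phi_t(\theta_t,\hat g)$ with $\Phi_t(\theta,0)=\theta$. I would state this explicitly. I would then note that the regret consequence used afterwards, $F(\theta_t)-F(\theta^*)\to\epsilon_{\mathrm{inner}}^2/2$ per round and hence $\Omega(T\epsilon_{\mathrm{inner}}^2)$ overall, follows directly from Step 1 once convergence holds.
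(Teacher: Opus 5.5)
This is correct and is essentially the paper's argument: at a steady state the delayed and current iterates coincide, so a nonzero step forces $C_0^2\theta_{\mathrm{ss}}+C_0\epsilon_{\mathrm{inner}}=0$, and substituting into $F$ gives $\epsilon_{\mathrm{inner}}^2/2$. Your error recursion $e_{t+1}=e_t-\eta_tC_0^2e_{t-d_t}$ is the same one the paper uses in Lemma~\ref{lem:iterate_lb}. Your requirement $B_0\ge\epsilon_{\mathrm{inner}}/C_0$ is a genuine condition the paper leaves implicit: otherwise the projected iterates stall at $-B_0$ with $F(-B_0)<\epsilon_{\mathrm{inner}}^2/2$.

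One slip in your last paragraph: $\Phi_t(\theta,0)=\theta$ only says that zeros of $\hat g$ are fixed points, whereas you need the converse, $\Phi_t(\theta,g)=\theta\Rightarrow g=0$. For the plain update this is just $\eta_t>0$, which the paper uses implicitly when it divides by $\eta$.
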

\begin{proof}
At steady state, $\theta_{t+1} = \theta_t = \theta_{\mathrm{ss}}$ and
$\theta_{t-\sigma_{\max}} = \theta_{\mathrm{ss}}$, so
$0 = -\eta(C_0^2\theta_{\mathrm{ss}} + C_0\epsilon_{\mathrm{inner}})$,
giving $\theta_{\mathrm{ss}} = -\epsilon_{\mathrm{inner}}/C_0$.
Then $F(\theta_{\mathrm{ss}}) = C_0^2\theta_{\mathrm{ss}}^2/2 = \epsilon_{\mathrm{inner}}^2/2$.
\end{proof}

\subsubsection*{Step 3: Convergence to the Biased Steady State}

For any algorithm $\mathcal{A}$ with constant delay $d_t = \sigma_{\max}$ and step
size $\eta$, the iterates satisfy
\begin{equation}
\label{eq:iterate_update}
\theta_{t+1} = \theta_t - \eta C_0^2\theta_{t-\sigma_{\max}} - \eta C_0\epsilon_{\mathrm{inner}}.
\end{equation}
This is a linear recurrence with delay $\sigma_{\max}$ and a constant additive drive
$-\eta C_0\epsilon_{\mathrm{inner}}$.
For step sizes satisfying $\eta \leq 1/(2C_0^2\sigma_{\max})$ --- a \emph{sufficient}
condition for all roots of the discrete characteristic polynomial
$z^{\sigma_{\max}+1} - z^{\sigma_{\max}} + \eta C_0^2 = 0$ to lie inside the
unit disk (cf.\ Proposition~\ref{prop:dde_stability}) --- the homogeneous part
$\theta_{t+1} = \theta_t - \eta C_0^2\theta_{t-\sigma_{\max}}$ is stable,
and the iterates converge to $\theta_{\mathrm{ss}} = -\epsilon_{\mathrm{inner}}/C_0$
within $O(\sigma_{\max})$ rounds.

\begin{lemma}[Iterate lower bound]
\label{lem:iterate_lb}
For any stable algorithm $\mathcal{A}$ with step size
$\eta \leq 1/(2C_0^2\sigma_{\max})$, after an initial transient of
$O(\sigma_{\max})$ rounds, the iterates satisfy
\begin{equation}
\label{eq:iterate_lb}
|\theta_t| \;\geq\; \frac{1}{2}\cdot\frac{\epsilon_{\mathrm{inner}}}{C_0}.
\end{equation}
\end{lemma}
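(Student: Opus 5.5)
The plan is to work with the error variable $e_t := \theta_t - \theta_{\mathrm{ss}}$, where $\theta_{\mathrm{ss}} = -\epsilon_{\mathrm{inner}}/C_0$ is the fixed point from Lemma~\ref{lem:bias}. Substituting $\theta_t = \theta_{\mathrm{ss}} + e_t$ into the recurrence~\eqref{eq:iterate_update}, the constant drive cancels: $-\eta C_0^2\theta_{\mathrm{ss}} - \eta C_0\epsilon_{\mathrm{inner}} = 0$. This leaves the homogeneous delayed recurrence
\[
e_{t+1} = e_t - \eta C_0^2 e_{t-\sigma_{\max}}.
\]
The claim~\eqref{eq:iterate_lb} then follows from the reverse triangle inequality, $|\theta_t| \geq |\theta_{\mathrm{ss}}| - |e_t|$. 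So it suffices to show $|e_t| \leq \tfrac12 \epsilon_{\mathrm{inner}}/C_0$ once the transient has passed.

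For decay of $e_t$, I would use the stability condition $\eta \leq 1/(2C_0^2\sigma_{\max})$. Set $\alpha := \eta C_0^2 \leq 1/(2\sigma_{\max})$ and rewrite the recurrence as $e_{t+1} = (1-\alpha)e_t + \alpha(e_t - e_{t-\sigma_{\max}})$. The history term satisfies $|e_t - e_{t-\sigma_{\max}}| \leq \sum_{j} \alpha |e_{j-\sigma_{\max}}|$, summed over a window of length $\sigma_{\max}$. Let $M_t := \max_{t-2\sigma_{\max} \leq j \leq t} |e_j|$. Then
\[
|e_{t+1}| \leq \bigl(1-\alpha+\alpha^2\sigma_{\max}\bigr) M_t \leq \bigl(1-\tfrac{\alpha}{2}\bigr) M_t .
\]
This is a standard Razumikhin/Halanay-type argument, and it gives geometric contraction of the windowed maximum: $M_t \leq (1-\alpha/2)^{\lfloor t/(2\sigma_{\max}+1)\rfloor} M_0$.

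The initial history is bounded because $\Theta = [-B_0, B_0]$, so $|e_0| \leq B_0 + \epsilon_{\mathrm{inner}}/C_0$. Hence $|e_t| \leq \tfrac12 \epsilon_{\mathrm{inner}}/C_0$ holds once $t \gtrsim (\sigma_{\max}/\alpha)\log(2C_0 B_0/\epsilon_{\mathrm{inner}} + 2)$. This is the explicit meaning of the ``initial transient'': $O(\sigma_{\max})$ rounds up to the factor $1/\alpha$ and a logarithm, which I would absorb in the $O(\cdot)$ as the statement does.

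The main obstacle is that the statement says ``any stable algorithm'', while the recurrence~\eqref{eq:iterate_update} fixes a specific stale-gradient update. I would handle this by restricting, as the section does, to black-box delayed first-order updates whose only information is the biased oracle~\eqref{eq:biased_constant}. Every such method has $\theta_{\mathrm{ss}}$ as its only fixed point, because its stationarity condition is $\hat g = 0$. The contraction above then applies verbatim to the error dynamics.

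A second subtlety is projection onto $\Theta$. I would take $B_0 > \epsilon_{\mathrm{inner}}/C_0$ so that $\theta_{\mathrm{ss}}$ is interior; the projection is nonexpansive toward an interior point, so it does not affect the bound.
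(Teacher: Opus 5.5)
Your skeleton is the paper's: pass to $\xi_t=\theta_t-\theta_{\mathrm{ss}}$, cancel the constant drive in \eqref{eq:iterate_update} to get $\xi_{t+1}=\xi_t-\eta C_0^2\xi_{t-\sigma_{\max}}$, and conclude from $|\theta_t|\ge|\theta_{\mathrm{ss}}|-|\xi_t|$. The difference is in the decay step.

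The paper only asserts that $z^{\sigma_{\max}+1}-z^{\sigma_{\max}}+\eta C_0^2$ has all roots in the unit disk. Its pointer is to Proposition~\ref{prop:dde_stability}, but that certificate needs an instantaneous restoring term, which this purely delayed recurrence lacks. It then infers $|\xi_t|\to 0$ with no rate.

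Your windowed-maximum (Halanay-type) contraction has several advantages:
\begin{itemize}
\item It is correct and self-contained.
\item It survives the projection onto $\Theta$: both $|\Pi_\Theta(x)-\theta_{\mathrm{ss}}|\le|x-\theta_{\mathrm{ss}}|$ and $|\xi_{j+1}-\xi_j|\le\eta C_0^2|\xi_{j-\sigma_{\max}}|$ follow from nonexpansiveness once $\theta_{\mathrm{ss}}\in\Theta$.
\item It produces an explicit transient.
\end{itemize}
The cost is that it contracts only once per window of $2\sigma_{\max}+1$ rounds. Its rate is therefore a factor $\sigma_{\max}$ slower than that of the dominant characteristic root, whose modulus is $1-\Theta(\eta C_0^2)$.

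Absorbing $1/\alpha=1/(\eta C_0^2)\ge 2\sigma_{\max}$ and the logarithm into ``$O(\sigma_{\max})$'' is not legitimate. It is, however, unavoidable, so the defect lies in the statement rather than in your argument. Starting from $\theta_1=B_0$, each round moves $\theta$ by at most $\eta(C_0^2B_0+C_0\epsilon_{\mathrm{inner}})$. For small $\eta$ the iterate therefore cannot jump over the band $|\theta|<\epsilon_{\mathrm{inner}}/(2C_0)$ on its way to $\theta_{\mathrm{ss}}<0$. It also cannot reach that band before roughly $(\eta C_0^2)^{-1}$ rounds. Hence the transient is unbounded as $\eta\to 0$. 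The paper's proof never quantifies the transient, so it supports nothing stronger than your explicit $\eta$-dependent version.

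Finally, like the paper, your argument covers exactly the recurrence \eqref{eq:iterate_update}. The observation that every oracle-based method has $\theta_{\mathrm{ss}}$ as its unique fixed point does not make the contraction apply verbatim to other update rules. Drop that sentence rather than rely on it.
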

\begin{proof}
By Lemma~\ref{lem:bias}, $\theta_{\mathrm{ss}} = -\epsilon_{\mathrm{inner}}/C_0$.
The deviation $\xi_t = \theta_t - \theta_{\mathrm{ss}}$ satisfies the homogeneous
recurrence $\xi_{t+1} = \xi_t - \eta C_0^2\xi_{t-\sigma_{\max}}$, which is stable
for $\eta \leq 1/(2C_0^2\sigma_{\max})$.
Hence $|\xi_t| \to 0$, so $|\theta_t| \to |\theta_{\mathrm{ss}}| =
\epsilon_{\mathrm{inner}}/C_0$.
After the transient, $|\theta_t| \geq \tfrac{1}{2}\epsilon_{\mathrm{inner}}/C_0$.
\end{proof}

\subsubsection*{Step 4: Accumulating Regret}

Since $F(\theta) = C_0^2\theta^2/2$ and $\theta^* = 0$, the per-round regret is
$F(\theta_t) - F(\theta^*) = C_0^2\theta_t^2/2$.
Using Lemma~\ref{lem:iterate_lb}, after an $O(\sigma_{\max})$-round transient
we have $\theta_t^2 \geq \epsilon_{\mathrm{inner}}^2/(4C_0^2)$ for
$t \geq t_0 = O(\sigma_{\max})$. Thus:
\begin{align}
\sum_{t=1}^T \bigl[F(\theta_t)-F(\theta^*)\bigr]
&\geq \sum_{t=t_0}^T \frac{C_0^2\theta_t^2}{2}
\;\geq\; (T - t_0)\cdot\frac{C_0^2}{2}\cdot\frac{\epsilon_{\mathrm{inner}}^2}{4C_0^2}
\;=\; \frac{(T-t_0)\,\epsilon_{\mathrm{inner}}^2}{8}. \label{eq:regret_lower_direct}
\end{align}
For $T \gg \sigma_{\max}$ (i.e.\ $T - t_0 \geq T/2$), this gives
$\mathrm{Regret}_T(\mathcal{A}) \geq \Omega(T\,\epsilon_{\mathrm{inner}}^2)$.

\subsubsection*{Step 5: Independence from Algorithm Design}

The lower bound holds for \emph{any} black-box delayed optimizer --- that is,
any algorithm whose only access to $F$ is through the biased gradient
oracle~\eqref{eq:biased_constant}, without knowledge of the inner solver's
error distribution, sign, or magnitude beyond the bound $|\epsilon_t|\leq
\epsilon_{\mathrm{inner}}$.  In particular, a bias-correction scheme that
estimates and subtracts the mean error would require access to the
\emph{unbiased} gradient $\nabla F(\theta_t)$, which is precisely what the
black-box model excludes.
\begin{enumerate}
\item The constant bias $C_0\epsilon_{\mathrm{inner}}$ in~\eqref{eq:biased_constant}
is independent of $\eta$, $\sigma_{\max}$, and the algorithm's iterate sequence.
No step-size choice eliminates this bias.
\item Any stable algorithm converges to $\theta_{\mathrm{ss}} = -\epsilon_{\mathrm{inner}}/C_0$,
incurring $F(\theta_{\mathrm{ss}}) = \epsilon_{\mathrm{inner}}^2/2$ per round.
The bound requires a \emph{fixed precision floor}: $\epsilon_{\mathrm{inner}}$
must remain bounded away from zero uniformly over all rounds $t$
(Assumption~\ref{app:ass:inner_solver}).  If the inner solver were warm-started
such that $\epsilon_t \to 0$, the steady-state offset would vanish and the
$\Omega(T)$ lower bound would collapse.  Our result characterizes the
cost of a \emph{fixed approximation budget}, which is the standard
setting in bilevel optimization~\cite{ghadimi2018approximation}.
\item An unstable algorithm ($\eta > 1/(2C_0^2\sigma_{\max})$) has
$|\theta_t| \to \infty$ (clipped at $B_0$, since $\Theta = [-B_0,B_0]$
is compact), incurring $F(\theta_t) \geq C_0^2 B_0^2/2$
per round, which is strictly worse.
\item The adversary chooses $\theta_1 = B_0 \neq 0$, guaranteeing a
non-trivial transient; the $O(\sigma_{\max})$ burn-in cost is absorbed
into the $\Omega(T)$ rate for $T \gg \sigma_{\max}$.
\end{enumerate}
The hard instance satisfies Assumptions~\ref{app:ass:inner_convex} and~\ref{app:ass:outer_smooth}
with $\mu_w = L_w$ and $L_\theta = a^2 < \infty$, completing Part~(b).

\subsubsection*{Part (c): Transport Error Separation}

Without bilevel-aware correction, the per-round staleness mismatch contributes
\begin{equation}
\label{eq:transport_no_igt}
\bigl|\bigl\langle \nabla F(\theta_t) - \nabla F(\theta_{t-\sigma_t}),\,
\theta_t - \theta^*\bigr\rangle\bigr|
\;\leq\; L_F\,\norm{\theta_t - \theta_{t-\sigma_t}}\cdot\mathrm{diam}(\Theta).
\end{equation}
Since $\norm{\theta_t - \theta_{t-\sigma_t}} \leq \sigma_t\,\eta\,G$, summing
over $T$~rounds gives
\[
\sum_{t=1}^T L_F\,\norm{\theta_t - \theta_{t-\sigma_t}}\cdot\mathrm{diam}(\Theta)
\;\leq\; L_F\,\mathrm{diam}(\Theta)\,\sigma_{\max}\,G\,\eta\,T
\;=\; O\!\bigl(L_F\,\mathrm{diam}\,\sigma_{\max}\,G\,\sqrt{T}\bigr)
\]
with $\eta = c/\sqrt{T}$.  This transport cost grows as $\sqrt{T}$.

Under IGT, the corresponding term is the sum of squared per-step changes
(Theorem~\ref{thm:inner_loop_apathy}):
\[
\sum_{t=1}^T\sum_{s\in Q_t} L_F\,\norm{\theta_{s+1}-\theta_s}^2
\;\leq\; L_F\,T\,\sigma_{\max}\,\eta^2\,G^2
\;=\; O\!\bigl(L_F\,\sigma_{\max}\,G^2\bigr),
\]
which is \emph{constant in~$T$}.  The ratio of the non-IGT transport cost
to the IGT transport cost is
$\mathrm{diam}(\Theta)\,\sqrt{T}/G$, demonstrating a $\sqrt{T}$ factor
improvement.  In terms of the per-round transport error,
the non-IGT contribution is $O(\sigma_{\max}^2\eta^2 G^2)$ (from the squared
total drift $\norm{\theta_t-\theta_{t-\sigma_t}}^2$), while IGT yields
$O(\sigma_{\max}\eta^2 G^2)$ (sum of per-step squares)---a factor
$\sigma_{\max}$ improvement, as claimed.\hfill$\square$

\begin{corollary}[Vector-valued extension]
\label{cor:vector}
The hard instance of Theorem~\ref{thm:staleness_amplification}(b) extends to $p,q>1$:
replace $\theta$ with a vector and construct $A,B\in\R^{q\times p}$ such that
$(A-B)^\top(A-B) \succeq C_0^2 I$. All scalar norms become Euclidean norms, and
the lower-bound calculation carries through unchanged. Hence
$\mathrm{Regret}_T(\mathcal{A}) \geq \Omega(T\epsilon_{\mathrm{inner}}^2)$ holds
in full generality.
\end{corollary}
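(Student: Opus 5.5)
The statement is Corollary~\ref{cor:vector}, so the plan is to lift the scalar construction of Theorem~\ref{thm:staleness_amplification}(b) coordinatewise, mirroring Steps~1--4 of that proof.

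\textbf{Instance.} I take $\Theta$ to be a Euclidean ball of radius $B_0$ in $\R^p$ and define
\[
\mathcal{L}_{\mathrm{model}}(w;\theta)=\tfrac{\mu_w}{2}\norm{w-B\theta}^2, \qquad \mathcal{L}_{\mathrm{true}}(w;\theta)=\tfrac12\norm{w-A\theta}^2 .
\]
Set $M=B-A$ and choose $A,B$ with $M^\top M\succeq C_0^2 I$; this needs $q\geq p$, e.g.\ $M=C_0[I_p;0]$. Then $w^*(\theta)=B\theta$ and $F(\theta)=\tfrac12\theta^\top M^\top M\theta$, so $\theta^*=0$. Assumptions A1--A2 hold with $L_w=\mu_w$, $L_\theta=\norm{A}^2$ and $L_{w\theta}=\mu_w\norm{B}$. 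Repeating the computation leading to \eqref{eq:biased_hypergradient}, with $w_t=B\theta_t+\epsilon_t$ the biased hypergradient is
\[
\hat g = M^\top M\theta_t + M^\top\epsilon_t .
\]
The adversary fixes $\epsilon_t\equiv\epsilon$ with $\norm{\epsilon}=\epsilon_{\mathrm{inner}}$. I pick $\epsilon$ in the range of $M$ along a direction where $\norm{M^\top\epsilon}\geq C_0\epsilon_{\mathrm{inner}}$. Choosing the simple $M$ above, $M^\top M=C_0^2I$, everything decouples into $p$ scalar copies plus null directions, and the scalar lemmas apply verbatim.

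\textbf{Steady state and convergence.} As in Lemma~\ref{lem:bias}, a fixed point requires $M^\top M\theta_{\mathrm{ss}}=-M^\top\epsilon$. This gives
\[
F(\theta_{\mathrm{ss}})=\tfrac12\norm{P_M\epsilon}^2=\tfrac12\epsilon_{\mathrm{inner}}^2,
\]
where $P_M$ is the projection onto the range of $M$ and $\epsilon$ lies in that range. For convergence, I diagonalize $M^\top M$ and apply the scalar stability argument of Lemma~\ref{lem:iterate_lb} to each eigencoordinate of the delayed linear recurrence. The step condition becomes $\eta\leq 1/(2\lambda_{\max}(M^\top M)\sigma_{\max})$. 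Convergence then yields $F(\theta_t)\geq \epsilon_{\mathrm{inner}}^2/8$ after the transient, and summing as in Step~4 gives $\Omega(T\epsilon_{\mathrm{inner}}^2)$. The unstable and clipped case follows Step~5 items 3--4 unchanged.

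\textbf{Main obstacle.} The scalar argument covers only linear delayed-gradient dynamics. In vector form, the phrase ``any black-box optimizer'' has to be interpreted within the same class, namely stable linear updates with the biased oracle. I would state the result in that same restricted sense, with no stronger claim than the scalar case. A secondary point is the projection onto $\Theta$: I would take $B_0>\epsilon_{\mathrm{inner}}/C_0$ so that $\theta_{\mathrm{ss}}$ is interior and the projection is inactive near steady state.
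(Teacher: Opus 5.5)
Your proposal is correct and follows the paper's route. The paper's entire argument is the one-line assertion that the scalar instance lifts via $A,B$ with $(A-B)^\top(A-B)\succeq C_0^2 I$, and that the steady-state and delayed-recurrence calculation then carries over verbatim in Euclidean norm. The details you add are accurate refinements of what the paper leaves implicit: that this condition forces $q\ge p$; that the adversary must place $\epsilon$ in the range of $M=B-A$ so that $F(\theta_{\mathrm{ss}})=\tfrac12\norm{P_M\epsilon}^2=\tfrac12\epsilon_{\mathrm{inner}}^2$, since otherwise $M^\top\epsilon$ can vanish when $q>p$; that the step condition should use $\lambda_{\max}(M^\top M)$; and that, like the paper's Step~3, the argument only covers stable linear delayed-gradient updates.
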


\subsection{Proof of Theorem~\ref{thm:inner_loop_apathy} (Inner-Loop Apathy)}
\label{app:proof_thm3}

The key insight is that the regret decomposes into three independent terms---delay error, inner bias, and interaction---because IGT re-evaluation uses stored $(w_s, v_s^*)$ from past rounds, making the transport correction \emph{independent} of the current inner-solver quality. This decoupling is what we term ``inner-loop apathy'': the transport mechanism is agnostic to how well the inner problem is solved.

We retain the notation from Appendix~\ref{app:proof_thm1}.
Define the set of outstanding rounds at time $t$ as $Q_t = \{s \leq t : s + d_s > t\}$
with $|Q_t| \leq \sigma_{\max}$.  Let
\begin{align*}
C_1 &:= \frac{L_{\theta w}}{\mu_w},\qquad
L_g  := \underbrace{L_{w\theta}}_{\text{direct}}
       + \underbrace{\frac{L_{w\theta}\,G_w}{\mu_w}}_{\text{cross-partial}}
       + \underbrace{\frac{L_{w\theta}\,L_w\,G_w}{\mu_w^2}}_{\text{Hessian var.}}
       + \underbrace{\frac{L_{\theta w}\,L_{w\theta}}{\mu_w}}_{\text{Lip.\ drift}},\qquad
\kappa := L_g\,C_1,
\end{align*}
where $G_w := \sup_{w,\theta}\|\nabla_w\mathcal{L}_{\mathrm{true}}(w;\theta)\|$
(bounded on $\mathcal{W}\times\Theta$ by Assumption~\ref{app:ass:bounded_gradients}
and compactness).
The constant $L_g$ accounts for four contributions to the
Lipschitz constant of the hypergradient map $w \mapsto g_s(\theta,w)$:
(0)~the explicit gradient $\nabla_\theta\mathcal{L}_{\mathrm{true}}$,
which is $L_{w\theta}$-Lipschitz in $w$ (Assumption~\ref{app:ass:outer_smooth}),
(i)~the cross-partial $\nabla^2_{\theta w}\mathcal{L}_{\mathrm{model}}$
acting on the inverse Hessian, contributing $L_{w\theta}G_w/\mu_w$
(Assumption~\ref{app:ass:cross_lip}(ii), $\|H^{-1}\|\leq 1/\mu_w$, $\|b\|\leq G_w$),
(ii)~the variation of $[\nabla^2_{ww}]^{-1}$ in $w$, contributing $L_{w\theta}L_w G_w/\mu_w^2$
(Assumption~\ref{app:ass:cross_lip}(iii)),
and (iii)~the Lipschitz dependence of $\nabla_w\mathcal{L}_{\mathrm{true}}$
on $w$ via Assumption~\ref{app:ass:cross_lip}(ii), contributing $L_{\theta w}L_{w\theta}/\mu_w$.
The product $\kappa = L_g C_1$ gives
the composite sensitivity of the hypergradient to parameter drift via
the inner solution.
We shall write $w^*_t := w^*(\theta_t)$ and $\tilde{g}_t$ for the
IGT-corrected hypergradient actually computed by Algorithm~\ref{alg:igt_omd}
(which uses the stored inner solution $w_s$ for each outstanding gradient).
Write $g_t^{\mathrm{ideal}}$ for the same IGT estimator but evaluated with the
ideal inner solution $w^*_t$ at every transported gradient.

\begin{lemma}[Inner-solution sensitivity]
\label{lem:inner_sens}
Under Assumptions~\ref{app:ass:inner_convex} and~\ref{app:ass:cross_lip},
for any rounds $s, t$:
\begin{equation}
\label{eq:inner_sens}
\norm{w_s - w^*_t}
\;\leq\; \epsilon_{\mathrm{inner}} + C_1\norm{\theta_s - \theta_t}.
\end{equation}
\end{lemma}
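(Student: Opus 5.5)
The plan is to split the error with the triangle inequality through the exact inner minimizer at the earlier round:
\[
\norm{w_s - w^*_t} \;\leq\; \norm{w_s - w^*(\theta_s)} + \norm{w^*(\theta_s) - w^*(\theta_t)}.
\]
The first term is at most $\epsilon_{\mathrm{inner}}$ by Assumption~\ref{app:ass:inner_solver}, applied at round $s$. So the whole statement reduces to showing that the exact solution map $\theta \mapsto w^*(\theta)$ is $C_1$-Lipschitz with $C_1 = L_{\theta w}/\mu_w$.

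For the Lipschitz bound I would use the Implicit Function Theorem. By Assumption~\ref{app:ass:inner_convex}, $\mathcal{L}_{\text{model}}$ is $C^2$ and $\mu_w$-strongly convex in $w$, so $w^*(\theta)$ is characterized by the first-order condition $\nabla_w \mathcal{L}_{\text{model}}(w^*(\theta);\theta) = 0$. The IFT then gives that $w^*$ is differentiable with
\[
\nabla_\theta w^*(\theta) = -\bigl[\nabla^2_{ww}\mathcal{L}_{\text{model}}\bigr]^{-1}\nabla^2_{w\theta}\mathcal{L}_{\text{model}} .
\]
Strong convexity gives $\norm{[\nabla^2_{ww}\mathcal{L}_{\text{model}}]^{-1}} \leq 1/\mu_w$. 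The cross-partial bound of Assumption~\ref{app:ass:outer_smooth} gives $\norm{\nabla^2_{w\theta}\mathcal{L}_{\text{model}}} \leq L_{w\theta}$, which is the same constant as $L_{\theta w}$ since the operator norm is invariant under transposition. Hence $\norm{\nabla_\theta w^*} \leq C_1$ everywhere.

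To pass from this Jacobian bound to a Lipschitz bound, I would integrate along the segment from $\theta_t$ to $\theta_s$. If $\Theta$ is not convex, I would instead extend to its convex hull, using the fact that the assumptions hold for all $\theta$. This yields $\norm{w^*(\theta_s) - w^*(\theta_t)} \leq C_1\norm{\theta_s-\theta_t}$.

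If the inner feasible set $\mathcal{W}$ is constrained, the IFT formula may fail at boundary points. In that case I would use a monotonicity argument, which avoids derivatives:
\begin{itemize}
\item Write the optimality conditions of $w^*(\theta_s)$ and $w^*(\theta_t)$ as variational inequalities.
\item Add the two inequalities.
\item Apply strong monotonicity of $\nabla_w \mathcal{L}_{\text{model}}(\cdot;\theta_t)$.
\item Bound $\norm{\nabla_w \mathcal{L}_{\text{model}}(w;\theta_s) - \nabla_w \mathcal{L}_{\text{model}}(w;\theta_t)}$ by $L_{w\theta}\norm{\theta_s-\theta_t}$ using the mean value theorem in $\theta$.
\end{itemize}
This gives $\mu_w\norm{w^*(\theta_s)-w^*(\theta_t)}^2 \leq L_{w\theta}\norm{\theta_s-\theta_t}\,\norm{w^*(\theta_s)-w^*(\theta_t)}$, and dividing through gives the same constant.

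The main obstacle is this boundary issue, together with the notational mismatch between $L_{\theta w}$ and $L_{w\theta}$. The monotonicity route handles both cleanly, so I would present it as the primary argument and the IFT computation as intuition.
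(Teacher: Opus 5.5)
Your proposal is correct. Its skeleton matches the paper's proof: the same triangle split through $w^*(\theta_s)$, and the same IFT Jacobian $-\bigl[\nabla^2_{ww}\mathcal{L}_{\text{model}}\bigr]^{-1}\nabla^2_{w\theta}\mathcal{L}_{\text{model}}$ with norm at most $L_{w\theta}/\mu_w$.

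The paper uses only the IFT computation. It jumps from the Jacobian bound straight to $\norm{w^*_s - w^*_t}\le C_1\norm{\theta_s-\theta_t}$, leaving the integration along a segment implicit. It also tacitly assumes the unconstrained stationarity condition $\nabla_w\mathcal{L}_{\text{model}}(w^*(\theta);\theta)=0$.

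Your primary route is genuinely different: you add the two variational inequalities, then use strong monotonicity of $\nabla_w\mathcal{L}_{\text{model}}(\cdot;\theta_t)$ and a mean-value bound in $\theta$ at the fixed point $w^*(\theta_s)$. This route never differentiates $w^*$ and needs no IFT regularity. It stays valid when $\mathcal{W}$ is a constrained convex set with boundary solutions. What it gives up is the explicit Jacobian formula, which is the object behind the paper's adjoint hypergradient and its coupling matrix $L_{\mathrm{IGT}}$.

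Both routes need the segment between $\theta_s$ and $\theta_t$ to lie where the cross-partial bound holds (convex $\Theta$, or the bound on its convex hull). You flag this; the paper does not.

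Your assumption bookkeeping is also more accurate than the paper's:
\begin{itemize}
\item The $\epsilon_{\mathrm{inner}}$ term comes from Assumption~\ref{app:ass:inner_solver}; the paper's proof cites Assumption~\ref{app:ass:inner_convex}.
\item The norm bound on $\nabla^2_{w\theta}\mathcal{L}_{\text{model}}$ comes from Assumption~\ref{app:ass:outer_smooth}, not the Lipschitz condition of Assumption~\ref{app:ass:cross_lip} named in the lemma.
\item Identifying $L_{\theta w}=L_{w\theta}$ through transpose-invariance of the operator norm is what makes $C_1$ well defined.
\end{itemize}
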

\begin{proof}
By the triangle inequality,
$\norm{w_s - w^*_t} \leq \norm{w_s - w^*_s} + \norm{w^*_s - w^*_t}$.
The first term is at most $\epsilon_{\mathrm{inner}}$ by Assumption~\ref{app:ass:inner_convex}
(approximate inner solver).  For the second, apply the implicit function theorem
to the inner optimality condition
$\nabla_w \mathcal{L}_{\mathrm{model}}(w^*(\theta);\theta) = 0$:
\[
\nabla_w w^*(\theta) = -\bigl[\nabla^2_{ww}\mathcal{L}_{\mathrm{model}}\bigr]^{-1}
                       \nabla^2_{w\theta}\mathcal{L}_{\mathrm{model}},
\]
whose spectral norm is bounded by $L_{\theta w}/\mu_w = C_1$ under
Assumptions~\ref{app:ass:inner_convex}--\ref{app:ass:cross_lip}.
Hence $\norm{w^*_s - w^*_t} \leq C_1\norm{\theta_s - \theta_t}$, giving~\eqref{eq:inner_sens}.
\end{proof}

\begin{lemma}[Hypergradient sensitivity to the inner solution]
\label{lem:hyp_stale}
Under Assumptions~\ref{app:ass:outer_smooth}--\ref{app:ass:cross_lip},
for any $\theta_t, w, w'$:
\begin{equation}
\label{eq:hyp_sens}
\norm{g_s(\theta_t, w') - g_s(\theta_t, w)}
\;\leq\; L_g\norm{w' - w}.
\end{equation}
Combining Lemma~\ref{lem:inner_sens} with~\eqref{eq:hyp_sens},
the per-step inner-staleness error satisfies, for each $s \in Q_t$:
\begin{equation}
\label{eq:per_step_inner}
\norm{g_s(\theta_t, w_s) - g_s(\theta_t, w^*_t)}
\;\leq\; L_g\epsilon_{\mathrm{inner}} + \kappa\norm{\theta_s - \theta_t}.
\end{equation}
\end{lemma}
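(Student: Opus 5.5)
The plan has two parts: first bound the Lipschitz dependence of the hypergradient map on the stored inner solution, then chain that bound with Lemma~\ref{lem:inner_sens}. Freeze $\theta=\theta_t$ and view
\[
g_s(\theta,w)=\nabla_\theta\mathcal{L}_{\mathrm{true}}(w;\theta)-\bigl[\nabla^2_{w\theta}\mathcal{L}_{\mathrm{model}}(w;\theta)\bigr]^\top v^*(w;\theta),
\qquad v^*(w;\theta)=H(w;\theta)^{-1}b(w;\theta),
\]
where $H=\nabla^2_{ww}\mathcal{L}_{\mathrm{model}}$ and $b=\nabla_w\mathcal{L}_{\mathrm{true}}$.

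We then split $g_s(\theta_t,w')-g_s(\theta_t,w)$ by adding and subtracting intermediate terms, so that only one factor changes at a time. This yields four pieces, matching the four summands of $L_g$.

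\begin{itemize}
\item \emph{Explicit term.} The difference $\nabla_\theta\mathcal{L}_{\mathrm{true}}(w')-\nabla_\theta\mathcal{L}_{\mathrm{true}}(w)$ is bounded by $L_{w\theta}\|w'-w\|$, using the cross-partial bound of Assumption~\ref{app:ass:outer_smooth}.
\item \emph{Cross-partial variation.} The term $[\nabla^2_{w\theta}(w')-\nabla^2_{w\theta}(w)]^\top v^*(w')$ is bounded using Assumption~\ref{app:ass:cross_lip} together with $\|v^*\|\le\|H^{-1}\|\|b\|\le G_w/\mu_w$. This gives $L_{w\theta}G_w/\mu_w$.
\item \emph{Inverse-Hessian variation.} The term $\nabla^2_{w\theta}(w)^\top[H(w')^{-1}-H(w)^{-1}]b(w')$ uses the $(L_w/\mu_w^2)$-Lipschitzness of $w\mapsto H^{-1}$ (Assumption~\ref{app:ass:cross_lip}). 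Multiplying by $\|\nabla^2_{w\theta}\|\le L_{w\theta}$ and $\|b\|\le G_w$ gives $L_{w\theta}L_wG_w/\mu_w^2$.
\item \emph{Right-hand-side variation.} The term $\nabla^2_{w\theta}(w)^\top H(w)^{-1}[b(w')-b(w)]$ uses that $\nabla_w\mathcal{L}_{\mathrm{true}}$ is Lipschitz in $w$ with the constant $L_{\theta w}$ recorded in the definition of $L_g$. This gives $L_{\theta w}L_{w\theta}/\mu_w$.
\end{itemize}

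Summing the four pieces by the triangle inequality gives \eqref{eq:hyp_sens}.

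For \eqref{eq:per_step_inner}, take $w'=w_s$ and $w=w^*_t$ in \eqref{eq:hyp_sens}, and then apply Lemma~\ref{lem:inner_sens}:
\[
L_g\|w_s-w^*_t\|\le L_g\epsilon_{\mathrm{inner}}+L_gC_1\|\theta_s-\theta_t\|=L_g\epsilon_{\mathrm{inner}}+\kappa\|\theta_s-\theta_t\|,
\]
since $\kappa=L_gC_1$.

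The main obstacle is the inverse-Hessian piece. Its Lipschitz constant must come from the stated Lipschitz property of $w\mapsto H^{-1}$, rather than being derived from a third-derivative bound. We also need uniform bounds $\|H^{-1}\|\le1/\mu_w$ from Assumption~\ref{app:ass:inner_convex} and $\|b\|\le G_w$, which holds by compactness of $\mathcal{W}\times\Theta$. Because $v^*$ is evaluated at different arguments in the different pieces, the intermediate terms must be ordered so that every leftover factor is one of these uniformly bounded quantities.
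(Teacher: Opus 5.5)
Your proposal is correct and is essentially the paper's proof. You use the same three-term product-rule telescoping of $[\nabla^2_{w\theta}\mathcal{L}_{\mathrm{model}}]^\top H^{-1} b$ plus the explicit term, giving exactly the four summands of $L_g$ (only the order of holding factors at $w'$ versus $w$ is mirrored), and then chain with Lemma~\ref{lem:inner_sens} through $\kappa = L_g C_1$; the one caveat, shared with the paper, is that the explicit-term and $b$-term bounds need Lipschitzness in $w$ of $\nabla_\theta\mathcal{L}_{\mathrm{true}}$ and $\nabla_w\mathcal{L}_{\mathrm{true}}$, which (A2) and (A5) as written supply only for $\mathcal{L}_{\mathrm{model}}$, so these (like $\|H^{-1}\|\le 1/\mu_w$ from (A1) and $G_w<\infty$) are implicit extra hypotheses rather than consequences of the cited assumptions.
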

\begin{proof}
The bilevel hypergradient is
$g_s(\theta, w) = \nabla_\theta \mathcal{L}_{\mathrm{true}}(w;\theta)
 - \underbrace{\nabla^2_{\theta w}\mathcal{L}_{\mathrm{model}}(w;\theta)}_{=:A(w)}\,
   \underbrace{[\nabla^2_{ww}\mathcal{L}_{\mathrm{model}}(w;\theta)]^{-1}}_{=:H(w)^{-1}}\,
   \underbrace{\nabla_w \mathcal{L}_{\mathrm{true}}(w;\theta)}_{=:b(w)}$.
To bound $\|g_s(\theta,w') - g_s(\theta,w)\|$, note that
$A(w)H(w)^{-1}b(w) - A(w')H(w')^{-1}b(w')
= [A(w)-A(w')]H(w)^{-1}b(w)
+ A(w')[H(w)^{-1}-H(w')^{-1}]b(w)
+ A(w')H(w')^{-1}[b(w)-b(w')]$.
By Assumption~\ref{app:ass:cross_lip}:
(i)~$\|A(w)-A(w')\| \leq L_{w\theta}\|w-w'\|$, with $\|H(w)^{-1}\|\leq 1/\mu_w$ and
$\|b(w)\|\leq G_w$, giving a first contribution $L_{w\theta}G_w/(\mu_w)\cdot\|w-w'\|$;
(ii)~the matrix-inversion Lipschitz bound gives
$\|H(w)^{-1}-H(w')^{-1}\| \leq (L_w/\mu_w^2)\|w-w'\|$, contributing
$L_{w\theta}L_w G_w/\mu_w^2 \cdot \|w-w'\|$;
(iii)~$\|b(w)-b(w')\| \leq L_{w\theta}\|w-w'\|$ (Assumption~\ref{app:ass:cross_lip}(ii)), contributing
$L_{\theta w}L_{w\theta}/\mu_w \cdot \|w-w'\|$.
Including the explicit-gradient term $\nabla_\theta\mathcal{L}_{\mathrm{true}}$
(which is $L_{w\theta}$-Lipschitz in $w$ by Assumption~\ref{app:ass:outer_smooth}),
the total Lipschitz constant is $L_g$ as defined above, giving~\eqref{eq:hyp_sens}.
Applying Lemma~\ref{lem:inner_sens} yields~\eqref{eq:per_step_inner}.
\end{proof}

\paragraph{Main proof of Theorem~\ref{thm:inner_loop_apathy}.}
\begin{proof}[Proof of Theorem~\ref{thm:inner_loop_apathy}]

\textbf{Step 1: Two-part error decomposition.}
Decompose the gradient error at round $t$ as
\[
\nabla F(\theta_t) - \tilde{g}_t
\;=\; \underbrace{\nabla F(\theta_t) - g_t^{\mathrm{ideal}}}_{=:\,\delta_t^{(1)}}
\;+\; \underbrace{g_t^{\mathrm{ideal}} - \tilde{g}_t}_{=:\,\delta_t^{(2)}}.
\]
The first term $\delta_t^{(1)}$ is the IGT outer-transport error (outer parameters are stale,
but the inner solution is ideal).  The second term $\delta_t^{(2)}$ is the inner-staleness
error (the ideal inner solution $w^*_t$ is replaced by the stored $w_s$).

By convexity of $F$ and the OMD analysis from Lemmas~\ref{lem:omd_approx}--\ref{lem:inner_bias}:
\begin{equation}
\label{eq:apathy_decomp}
\mathrm{Regret}_T^{\mathrm{dec}}
\;\leq\; \underbrace{\sum_{t=1}^T \bigl\langle \delta_t^{(1)},\,\theta_t - \theta^*\bigr\rangle}_{=:\,R_1}
\;+\; \underbrace{\sum_{t=1}^T \bigl\langle \delta_t^{(2)},\,\theta_t - \theta^*\bigr\rangle}_{=:\,R_2 + R_3}
\;+\; (\text{OMD base terms bounded in Theorem~\ref{thm:bilevel_convergence}}).
\end{equation}

\textbf{Step 2: Bounding $R_1$ (delay error).}
The term $R_1$ is identical in structure to the IGT transport error analyzed in
Lemma~\ref{lem:igt_transport}:
$\|\delta_t^{(1)}\| \leq L_F\sum_{s\in Q_t}\|\theta_{s+1}-\theta_s\|^2
\leq L_F\,\sigma_t\,\eta_0^2 G^2$.
By Cauchy--Schwarz~\cite{steele2004cauchy}, the inner product satisfies
$\langle\delta_t^{(1)},\theta_t-\theta^*\rangle
\leq \|\delta_t^{(1)}\|\cdot\|\theta_t-\theta^*\|
\leq L_F\,\sigma_t\,\eta_0^2 G^2\,\mathrm{diam}(\Theta)$,
where $\mathrm{diam}(\Theta) = \max_{\theta\in\Theta}\|\theta-\theta^*\|$ (bounded domain).
Summing over $t$ and using $\sum_{t=1}^T\sigma_t \leq T\,\sigma_{\max}$
(since $\sigma_t \leq \sigma_{\max}$ for every $t$):
\begin{equation}
\label{eq:R1_bound}
R_1
\;\leq\; L_F\,\mathrm{diam}(\Theta)\,\eta_0^2 G^2 \sum_{t=1}^T \sigma_t
\;\leq\; L_F\,\mathrm{diam}(\Theta)\,\eta_0^2 G^2\,T\,\sigma_{\max}
\;=\; O(\eta_0^2 G^2 T\sigma_{\max}),
\end{equation}
which is sublinear in $T$ when $\eta_0 = O(1/\sqrt{T})$, giving $R_1 = O(G^2\sigma_{\max})$.

\textbf{Step 3: Bounding $R_2$ (pure inner bias).}
The inner-staleness error at time $t$ aggregates over outstanding rounds:
\[
\delta_t^{(2)}
= \sum_{s\in Q_t}\bigl[g_s(\theta_t, w_s) - g_s(\theta_t, w^*_t)\bigr].
\]
By Lemma~\ref{lem:hyp_stale}, $\|\delta_t^{(2)}\| \leq \sum_{s\in Q_t}(L_g\epsilon_{\mathrm{inner}} + \kappa\|\theta_s - \theta_t\|)$.
Isolate the pure-bias part: $\|L_g\sigma_t\epsilon_{\mathrm{inner}}\|$ contributes
a gradient error of norm $L_g\sigma_t\epsilon_{\mathrm{inner}}$ at each round.
By Cauchy--Schwarz~\cite{steele2004cauchy} and then Young's inequality with
parameter $\mu_F$ (using $\mu_F$-strong convexity of $F$ from Assumption~\ref{app:ass:bilevel_convex}):
\[
L_g\sigma_t\epsilon_{\mathrm{inner}}\,\|\theta_t-\theta^*\|
\;\leq\; \frac{L_g^2\sigma_{\max}^2\epsilon_{\mathrm{inner}}^2}{2\mu_F}
\;+\; \frac{\mu_F}{2}\norm{\theta_t - \theta^*}^2.
\]
The left-hand side regret absorbs the second term via strong convexity
($F(\theta_t) - F(\theta^*) \geq \frac{\mu_F}{2}\|\theta_t - \theta^*\|^2$,
Assumption~\ref{app:ass:bilevel_convex}).
Summing over $t$:
\begin{equation}
\label{eq:R2_bound}
R_2 \;:=\; \sum_{t=1}^T L_g\sigma_t\epsilon_{\mathrm{inner}}\,\|\theta_t-\theta^*\|
\;\leq\; \frac{L_g^2\sigma_{\max}^2 T \epsilon_{\mathrm{inner}}^2}{2\mu_F}
\;=\; O(T\epsilon_{\mathrm{inner}}^2).
\end{equation}

\textbf{Step 4: Bounding $R_3$ (interaction term) via Cauchy--Schwarz~\cite{steele2004cauchy} and sum-of-squares.}
The interaction part of $\delta_t^{(2)}$ is $\kappa\sum_{s\in Q_t}\|\theta_s - \theta_t\|$.
By the triangle inequality and Cauchy--Schwarz~\cite{steele2004cauchy}:
\begin{align}
\norm{\theta_s - \theta_t}
&\;\leq\; \sum_{k=s}^{t-1}\norm{\theta_{k+1}-\theta_k}
\label{eq:telesc_cs}\\
\norm{\theta_s - \theta_t}^2
&\;\leq\; (t-s)\sum_{k=s}^{t-1}\norm{\theta_{k+1}-\theta_k}^2
\;\leq\; \sigma_t\sum_{k=s}^{t-1}\norm{\theta_{k+1}-\theta_k}^2.
\label{eq:sq_drift_bound}
\end{align}
Apply Young's inequality to the interaction inner product
(parameter $2\mu_F$, using $\mu_F$-strong convexity from Assumption~\ref{app:ass:bilevel_convex}):
\begin{align}
\kappa\sum_{s\in Q_t}\|\theta_s-\theta_t\|\cdot\|\theta_t-\theta^*\|
&\;\leq\;
\frac{\kappa^2}{4\mu_F}\left(\sum_{s\in Q_t}\norm{\theta_s-\theta_t}\right)^2
+ \mu_F\norm{\theta_t-\theta^*}^2.
\label{eq:young_interaction}
\end{align}
The $\mu_F\|\theta_t-\theta^*\|^2$ term is again absorbed by the strong-convexity gain.
By Cauchy--Schwarz~\cite{steele2004cauchy} applied to the sum over $Q_t$:
\[
\left(\sum_{s\in Q_t}\norm{\theta_s-\theta_t}\right)^2
\;\leq\; \sigma_t\sum_{s\in Q_t}\norm{\theta_s-\theta_t}^2
\;\leq\; \sigma_t^2 \sum_{s\in Q_t}\sum_{k=s}^{t-1}\norm{\theta_{k+1}-\theta_k}^2,
\]
where the last step uses~\eqref{eq:sq_drift_bound}.
To count multiplicities, let $I_t = \{k : s \leq k \leq t-1,\; s \in Q_t\}$
denote the set of all step indices spanned by the outstanding rounds.
For a fixed $k \in I_t$, the term $\|\theta_{k+1}-\theta_k\|^2$ appears in the
inner sum for every $s \in Q_t$ with $s \leq k$, which is at most
$\min(k-\min Q_t+1,\,\sigma_t) \leq \sigma_t$ times.  Therefore:
\[
\sum_{s\in Q_t}\sum_{k=s}^{t-1}\norm{\theta_{k+1}-\theta_k}^2
\;\leq\; \sigma_t \sum_{k \in I_t}\norm{\theta_{k+1}-\theta_k}^2.
\]
Combining with the $\sigma_t^2$ factor from the two preceding Cauchy--Schwarz~\cite{steele2004cauchy}
applications:
\[
\sigma_t^2\sum_{s\in Q_t}\sum_{k=s}^{t-1}\norm{\theta_{k+1}-\theta_k}^2
\;\leq\; \sigma_t^3 \sum_{k \in I_t}\norm{\theta_{k+1}-\theta_k}^2
\;\leq\; \sigma_{\max}^3\sum_{k \in I_t}\norm{\theta_{k+1}-\theta_k}^2.
\]
Since $I_t \supseteq Q_t$ and each $k \in I_t$ contributes at most
one term $\|\theta_{k+1}-\theta_k\|^2$, we can relax to a sum over all
steps in the window $[\min Q_t, t-1]$.
Substituting back and summing over $t$:
\begin{equation}
\label{eq:R3_bound}
R_3 \;:=\; \sum_{t=1}^T\kappa\sum_{s\in Q_t}\|\theta_s-\theta_t\|\cdot\|\theta_t-\theta^*\|
\;\leq\; \frac{\kappa^2\sigma_{\max}^3}{4\mu_F}
\sum_{t=1}^T\sum_{k \in I_t}\norm{\theta_{k+1}-\theta_k}^2
\;=\; O\!\Bigl(\tfrac{\kappa^2\sigma_{\max}^3}{\mu_F}\sum_{t=1}^T\sum_{k\in W_t}\norm{\theta_{k+1}-\theta_k}^2\Bigr).
\end{equation}
The $\sigma_{\max}^3$ prefactor is honest: two Cauchy--Schwarz applications each contribute one $\sigma_t$ factor, and the multiplicity count over $I_t$ contributes the third. The bound is non-vacuous because the per-step changes $\|\theta_{k+1}-\theta_k\|^2$ are themselves $O(\eta_0^2 G^2 / (1+\beta\bar\sigma_t))$ under the queue-length-adaptive step size~\eqref{eq:adaptive_step}, so the $\sigma_{\max}^3$ is partially absorbed.

\textbf{Step 5: Aggregating $R_1 + R_2 + R_3$.}
Combining~\eqref{eq:R1_bound}, \eqref{eq:R2_bound}, and~\eqref{eq:R3_bound}
with the OMD base regret (Lemma~\ref{lem:omd_approx}):
\[
\mathrm{Regret}_T^{\mathrm{dec}}
\;\leq\;
\underbrace{O(\eta_0^2 G^2 T\sigma_{\max})}_{R_1}
+
\underbrace{O(T\epsilon_{\mathrm{inner}}^2)}_{R_2}
+
\underbrace{O\!\Bigl(\sum_{t=1}^T\sum_{s\in Q_t}\norm{\theta_{s+1}-\theta_s}^2\Bigr)}_{R_3}.
\]
This is exactly the decomposition~\eqref{eq:apathy} in the theorem statement.
Setting $\eta_0 = O(1/\sqrt{T})$ gives $R_1 = O(G^2\sigma_{\max})$, which is
$O(1)$ in $T$, consistent with the $O(\sqrt{T\sigma_{\max}})$
rate from Theorem~\ref{thm:bilevel_convergence}.

\textbf{Step 6: Comparison with squared-total-drift coupling.}
Under a single-level delayed optimizer (no IGT), the interaction error is measured by
the total drift $\|\theta_t - \theta_{t-d_t}\|$, contributing
$O(\sigma_{\max}^2\eta^2 G^2 T)$ to the transport error.
Under IGT-OMD, the interaction is measured by $\sum_k\|\theta_{k+1}-\theta_k\|^2$
(per-step changes) rather than $\|\theta_t - \theta_{\tau}\|^2 = (\sum_k\|\theta_{k+1}-\theta_k\|)^2$.
By Cauchy--Schwarz~\cite{steele2004cauchy} (applied as $\|\sum_k a_k\|^2 \leq n\sum_k\|a_k\|^2$
with $n = t-s \leq \sigma_t$ terms):
\[
\|\theta_s - \theta_t\|^2
= \Bigl\|\sum_{k=s}^{t-1}(\theta_{k+1}-\theta_k)\Bigr\|^2
\;\leq\; \sigma_t\sum_{k=s}^{t-1}\norm{\theta_{k+1}-\theta_k}^2.
\]
Equivalently, $\sum_{k=s}^{t-1}\|\theta_{k+1}-\theta_k\|^2 \geq \frac{1}{\sigma_t}\|\theta_s-\theta_t\|^2$.
Note that the pointwise inequality goes in the \emph{opposite} direction
(SoS~$\geq$~drift$^2/\sigma_t$); it is the \emph{worst-case bounds} that differ
by a factor of~$\sigma_t$.
Specifically, the trivial bound
$\sum_{k=s}^{t-1}\|\theta_{k+1}-\theta_k\|^2 \leq \sigma_t\,\eta_0^2 G^2$
shows the IGT interaction is at most $\sigma_t\eta_0^2 G^2$ per round,
whereas the squared-total-drift bound gives
$\|\theta_s - \theta_t\|^2 \leq \sigma_t^2\,\eta_0^2 G^2$,
which is $\sigma_t$ times larger.
This \emph{worst-case gap} is IGT's $1/\sigma_t$ improvement.
\end{proof}

\begin{remark}[On the constant $C_{\mathrm{ap}}$]
\label{rem:cap_honest}
The constant $C_{\mathrm{ap}}$ in the statement of
Theorem~\ref{thm:inner_loop_apathy} absorbs the
$\kappa^2 \sigma_{\max}^3 / (4\mu_F)$ prefactor of~\eqref{eq:R3_bound}.
This is not vacuous: under the queue-adaptive schedule,
$\|\theta_{k+1}-\theta_k\|^2 \leq \eta_0^2 G^2 / (1+\beta\sigma_{\max})$
absorbs one factor of $\sigma_{\max}$, and the leading-order rate
$O(\sqrt{T\sigma_{\max}})$ from Theorem~\ref{thm:bilevel_convergence}
is determined by the $R_1$ and OMD-base terms, neither of which carries
this prefactor.
\end{remark}

\begin{remark}[Young's-inequality budget]
\label{rem:young_budget}
The proof above applies Young's inequality twice---once in Step~3 and once in
Step~4---each absorbing a multiple of $\|\theta_t-\theta^*\|^2$ into the
strong-convexity gain. Strictly, only $(\mu_F/2)\|\theta_t-\theta^*\|^2$
is available per round; distributing the budget evenly across the two
applications gives the same asymptotic rates with the constants in
$R_2$ and $R_3$ inflated by a constant factor. The leading-order
conclusion of Theorem~\ref{thm:inner_loop_apathy} is unchanged.
\end{remark}

\begin{remark}[Why ``Inner-Loop Apathy'']
\label{rem:apathy_name}
The name \emph{Inner-Loop Apathy} reflects the structural decoupling
Theorem~\ref{thm:inner_loop_apathy} establishes: as the cross-partial
Lipschitz constant $L_{\theta w} \to 0$, the composite sensitivity
$\kappa = L_g\,C_1 \to 0$ (since $C_1 = L_{\theta w}/\mu_w$),
and the interaction term $R_3 \to 0$.
In this limit the outer optimizer becomes ``apathetic'' to the
inner solver's approximation quality---the delay-error and inner-bias
contributions decouple completely, and inner-solver imprecision
affects the regret only through the additive $O(T\epsilon_{\mathrm{inner}}^2)$ term.
In the predict-then-optimise context, $L_{\theta w} \approx 0$ corresponds
to the downstream decision problem being insensitive to the prediction
model's parameters, so that the staleness of the inner (prediction) solution
does not propagate to the outer (decision) loss.
\end{remark}

\begin{remark}[Why $\epsilon_{\mathrm{inner}}$ enters quadratically in $R_2$]
The quadratic scaling $O(T\epsilon_{\mathrm{inner}}^2)$ in $R_2$ is not an artifact of the proof.
Informally, the inner-solver error $\epsilon_{\mathrm{inner}}$ contributes an additive
bias of $L_g\sigma_t\epsilon_{\mathrm{inner}}$ to each gradient, which by Young's inequality
splits into $\epsilon_{\mathrm{inner}}^2$ (squared bias) and $\|\theta_t - \theta^*\|^2$
(deviation from optimum).  The deviation is absorbed by the regret definition,
leaving only the squared bias.  In contrast, single-level methods (Theorem~\ref{thm:staleness_amplification})
cannot absorb the bias this way because the inner error covaries with the outer drift,
preventing the Young's decomposition from decoupling the two.
\end{remark}

\begin{remark}[Limitation: strong convexity requirement for bias absorption]
\label{rem:strong_convex_limitation}
Steps~3 and~4 of the proof above rely on $\mu_F$-strong convexity of $F$
(Assumption~\ref{app:ass:bilevel_convex}) to absorb the
$\|\theta_t - \theta^*\|^2$ terms via Young's inequality.
If $F$ is merely convex ($\mu_F = 0$), the pure-bias term $R_2$
cannot be absorbed leading to an uncontrolled linear drift.
Two mitigation strategies exist:
(i)~dynamically increasing the inner-solver budget so that
$\epsilon_{\mathrm{inner},t} = O(1/\sqrt{t})$, which makes the
bias summable without strong convexity; or
(ii)~accepting a residual neighborhood of size
$O(\sigma_{\max}\epsilon_{\mathrm{inner}})$ around the optimum.
Our transport-error bounds (Theorem~\ref{thm:bilevel_convergence})
hold for a general convex $F$; it is only the separation of the
inner-bias and interaction terms that require $\mu_F > 0$.
\end{remark}

\begin{remark}[Practical implication for inner-solver budget]
Inequality~\eqref{eq:R3_bound} reveals a concrete guideline: with adaptive step size
$\eta_t = \eta_0/\sqrt{1+\beta\bar\sigma_t}$, each step satisfies
$\|\theta_{t+1}-\theta_t\|^2 \leq \eta_0^2 G^2/(1+\beta\bar\sigma_t)$.
Therefore
$R_3 \leq O\bigl(\eta_0^2 G^2\sigma_{\max}^3 d_{\mathrm{tot}}/\beta\bigr)$,
where $d_{\mathrm{tot}} = \sum_t d_t$ is the total delay mass.
Setting $\eta_0 = O(1/\sqrt{T})$ keeps $R_3 = O(\sigma_{\max}^3 d_{\mathrm{tot}}/T) \to 0$,
so the interaction term is \emph{sublinear} in $T$ even when the inner solver is imperfect,
provided $\sigma_{\max}$ and $d_{\mathrm{tot}}/T$ are bounded.
\end{remark}

\subsection{Proof of Proposition~\ref{prop:dde_stability} (DDE Stability)}
\label{app:proof_prop1}

\paragraph{Scope of the dynamical analysis.}
Unlike the global worst-case regret bounds of
Theorem~\ref{thm:bilevel_convergence}, Proposition~\ref{prop:dde_stability}
analyzes the \emph{local}, asymptotic behaviour of the IGT-OMD
\gls{dde}.  Specifically, we study the \emph{linearized, homogeneous}
system obtained by Taylor-expanding around the equilibrium
$\theta^*$ and setting $\epsilon_{\mathrm{inner}}=0$
(inner-solver bias is accounted for separately
by Theorem~\ref{thm:inner_loop_apathy}).
Step~2 derives the characteristic-root condition under constant
worst-case queue length $\sigma_{\max}$;
Step~3 extends to arbitrary time-varying $\sigma(t)$ via a
Razumikhin argument; and
Proposition~\ref{prop:discrete_consistency} bridges the
continuous analysis of the discrete algorithm through
z-domain stability inheritance
(Lemma~\ref{lem:stability_inherit}).

We prove local asymptotic stability of the linearized IGT-OMD \gls{dde}, verify that the
stability boundary depends on the queue length $\sigma_{\max}$ rather than the
raw delay $d_{\max}$, and confirm that the adaptive schedule
$\eta_t$ places every characteristic root in the open left
half-plane.

\paragraph{Setup.}
Define the deviation $\xi(t) = \theta(t) - \theta^*$.
Linearizing the continuous-time limit of Algorithm~\ref{alg:igt_omd}
about the equilibrium $\theta^*$ yields the linear vector \gls{dde}
\begin{equation}
\label{eq:dde_linear}
\dot{\xi}(t)
= -\eta(t)\,H_F\,\xi(t)
  + \eta(t)\,L_{\mathrm{IGT}}\!\sum_{s \in Q(t)}\xi(t - \tau_s),
\end{equation}
where:
\begin{itemize}
\item $H_F = \nabla^2 F(\theta^*)$ is the outer bilevel Hessian
      (positive definite by Assumption~\ref{app:ass:bilevel_convex}, $\mu_F$-strong convexity
      of $F$, which gives $\lambda_{\min}(H_F) \geq \mu_F > 0$).
      Because $F$ is a scalar $C^2$ function (Assumption~\ref{app:ass:inner_convex}),
      $H_F$ is \emph{symmetric} by Schwarz's theorem, so its eigenvalues
      are guaranteed to be real and positive.
      The complex roots $\lambda = \alpha + j\omega$ encountered in
      Lemma~\ref{lem:char_eq} arise from the \emph{temporal delay}
      operator, not from the spatial geometry of $H_F$;
\item $L_{\mathrm{IGT}} = \nabla^2_{\theta w}\mathcal{L}_{\mathrm{model}}(w^*;\theta^*)\,
      [\nabla^2_{ww}\mathcal{L}_{\mathrm{model}}(w^*;\theta^*)]^{-1}\,
      \nabla^2_{w\theta}\mathcal{L}_{\mathrm{model}}(w^*;\theta^*)$
      is the effective coupling matrix capturing how stale-gradient
      corrections feed back into the dynamics (bounded as
      $\norm{L_{\mathrm{IGT}}} \leq L_{\theta w}L_{w\theta}/\mu_w$
      by Assumptions~\ref{app:ass:inner_convex} and~\ref{app:ass:cross_lip});
\item $Q(t)$ is the set of outstanding observations (delays $\tau_s > 0$,
      $|Q(t)| = \sigma(t) \leq \sigma_{\max}$);
\item the adaptive step-size schedule is
      $\eta_t = \eta_0/\sqrt{1+\beta\sigma(t)}$ with $\beta > 0$.
\end{itemize}

\begin{lemma}[Reduction to worst-case spectral bound]
\label{lem:dde_scalar}
To establish asymptotic stability of the vector \gls{dde}~\eqref{eq:dde_linear},
it suffices to verify stability for the worst-case scalar mode.
Let $\mu_{\min} = \lambda_{\min}(H_F) > 0$ and $\ell_{\max} = \|L_{\mathrm{IGT}}\|$.
If the scalar \gls{dde}
\begin{equation}
\label{eq:dde_scalar}
\dot{z}(t) = -\eta(t)\,\mu_{\min}\,z(t)
               + \eta(t)\,\ell_{\max}\!\sum_{s\in Q(t)} z(t-\tau_s)
\end{equation}
is asymptotically stable, then so is~\eqref{eq:dde_linear}.
\end{lemma}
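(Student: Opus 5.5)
The plan is a comparison argument rather than a modal decomposition. $H_F$ is symmetric but $L_{\mathrm{IGT}}$ need not commute with it, so~\eqref{eq:dde_linear} cannot be diagonalized into independent scalar modes. Instead I will show that the norm $r(t):=\norm{\xi(t)}$ is dominated by a solution of the scalar DDE~\eqref{eq:dde_scalar}. Asymptotic stability of the scalar system then forces $r(t)\to 0$.

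\textbf{Step 1 (differential inequality for the norm).} Because $r$ may fail to be differentiable where $\xi=0$, I will work with the upper right Dini derivative $D^+r(t)$. Wherever $\xi(t)\neq 0$,
\[
D^+ r(t)=\frac{\langle \xi(t),\dot\xi(t)\rangle}{\norm{\xi(t)}}
\le -\eta(t)\frac{\langle \xi,H_F\xi\rangle}{\norm{\xi}}+\eta(t)\norm{L_{\mathrm{IGT}}}\sum_{s\in Q(t)}\norm{\xi(t-\tau_s)} .
\]
Here $\langle\xi,H_F\xi\rangle\ge\mu_{\min}\norm{\xi}^2$ because $H_F$ is symmetric positive definite (stated in the Setup), and the delayed term is bounded by Cauchy--Schwarz. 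Where $\xi(t)=0$, we have $D^+r(t)=\norm{\dot\xi(t)}$, and the same bound holds with the first term equal to zero. The result is
\[
D^+r(t)\le -\eta(t)\mu_{\min}r(t)+\eta(t)\ell_{\max}\sum_{s\in Q(t)} r(t-\tau_s)
\]
for all $t$.

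\textbf{Step 2 (comparison principle).} The scalar right-hand side $f(t,z_t)=-\eta\mu_{\min}z(t)+\eta\ell_{\max}\sum_s z(t-\tau_s)$ is nondecreasing in every delayed argument, since $\eta(t)>0$ and $\ell_{\max}\ge 0$. This quasi-monotonicity is exactly the hypothesis of the standard comparison theorem for functional differential inequalities (Kamke/Lakshmikantham--Leela type). I will apply it as follows. Let $z$ solve~\eqref{eq:dde_scalar} with initial history $z(\vartheta)=\norm{\xi(\vartheta)}+\varepsilon$ on the initial interval, for some $\varepsilon>0$. A first-crossing argument then gives $r(t)<z(t)$ for all $t\ge 0$. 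To run it, suppose $t^\star$ is the first time at which $r=z$. At $t^\star$ the delayed values of $r$ are at most those of $z$, so monotonicity gives $D^+r(t^\star)\le f(t^\star,r_{t^\star})\le f(t^\star,z_{t^\star})=\dot z(t^\star)$. The strict margin $\varepsilon$ converts this into a contradiction, and I then let $\varepsilon\downarrow 0$. The time-varying delays $\tau_s$ and the time-varying index sets $Q(t)$ cause no difficulty, because the argument is pointwise in $t$.

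\textbf{Step 3 (conclude).} By hypothesis the scalar system is asymptotically stable. Since it is linear, every solution tends to $0$, and solutions depend continuously on the initial history in the sup norm. Combining this with $0\le\norm{\xi(t)}\le z(t)$ gives $\xi(t)\to 0$. Lyapunov stability transfers as well, since a small history for $\xi$ yields a small history for $z$. Hence~\eqref{eq:dde_linear} is asymptotically stable.

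I expect the main obstacle to be Step 2: the comparison principle for a DDE with switching delay sets and only a Dini-derivative inequality needs care at the first-crossing time and at points where $\xi=0$. If citing a cooperative-system comparison theorem proves awkward, my fallback is a Razumikhin or Halanay argument applied to $V=\norm{\xi}^2$, using $\sigma_{\max}\ell_{\max}<\mu_{\min}$ as the decay margin. That route gives a sufficient condition of the same form as the scalar one.
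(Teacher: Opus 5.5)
Your proposal is correct and is essentially the paper's argument: the paper also differentiates $\tfrac12\norm{\xi(t)}^2$, uses $\xi^\top H_F\xi\ge\mu_{\min}\norm{\xi}^2$ and $\norm{L_{\mathrm{IGT}}}$ to obtain the scalar inequality for $v=\norm{\xi}$, and then cites a comparison theorem for DDEs, precisely so that $H_F$ and $L_{\mathrm{IGT}}$ need not commute. Your version fills in what the paper leaves to the citation: the Dini derivative at $\xi=0$, the quasi-monotonicity, and the first-crossing argument. At the crossing, draw the contradiction from the bound $z(t)-r(t)\ge\varepsilon\exp\bigl(-\mu_{\min}\int_0^t\eta(u)\,\mathrm{d}u\bigr)$, which holds while the delayed differences are nonnegative; the pointwise derivative comparison alone can hold with equality, for example when $Q(t^\star)=\emptyset$.
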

\begin{proof}
The proof proceeds by a Lyapunov energy argument that
avoids simultaneous diagonalizability of $H_F$ and $L_{\mathrm{IGT}}$.
Define $W(t) = \frac{1}{2}\|\xi(t)\|^2$.  Then:
\begin{align*}
\dot{W}(t)
&= \xi(t)^\top\dot{\xi}(t)
= -\eta(t)\,\xi(t)^\top H_F\,\xi(t)
  + \eta(t)\,\xi(t)^\top L_{\mathrm{IGT}}\sum_{s\in Q(t)}\xi(t-\tau_s) \\
&\leq -\eta(t)\,\mu_{\min}\|\xi(t)\|^2
  + \eta(t)\,\ell_{\max}\|\xi(t)\|\sum_{s\in Q(t)}\|\xi(t-\tau_s)\|,
\end{align*}
where we used $\xi^\top H_F\xi \geq \mu_{\min}\|\xi\|^2$ and
$\xi^\top L_{\mathrm{IGT}}\xi(t-\tau_s) \leq \ell_{\max}\|\xi\|\|\xi(t-\tau_s)\|$.
Setting $v(t) = \|\xi(t)\|$, we obtain the scalar comparison inequality
$\dot{v}(t) \leq -\eta(t)\mu_{\min}\,v(t)
+ \eta(t)\ell_{\max}\sum_{s\in Q(t)}v(t-\tau_s)$
(using $\dot{W} = v\dot{v}$ when $v > 0$).
This is exactly the scalar \gls{dde}~\eqref{eq:dde_scalar} applied to $v(t)$.
By the comparison principle for \gls{dde}s
(\cite{kolmanovskii1999stability}, Theorem~5.1.1),
if~\eqref{eq:dde_scalar} is asymptotically stable then $v(t) \to 0$,
hence $\|\xi(t)\| \to 0$.
Stability of~\eqref{eq:dde_linear} thus reduces to stability of the
single worst-case scalar \gls{dde}~\eqref{eq:dde_scalar}, without requiring
$H_F$ and $L_{\mathrm{IGT}}$ to commute.
\end{proof}

\begin{lemma}[Constant-queue characteristic equation]
\label{lem:char_eq}
For constant queue length $\sigma(t) \equiv \sigma$ and constant delay
$\tau_s \equiv \bar\tau$, the characteristic equation of~\eqref{eq:dde_scalar}
(with $\eta(t) \equiv \eta$) is
\begin{equation}
\label{eq:char_eq}
\lambda + \eta\mu_i - \eta\ell_i\,\sigma\,e^{-\lambda\bar\tau} = 0.
\end{equation}
A sufficient condition for all roots $\lambda \in \C$ to have $\mathrm{Re}(\lambda) < 0$ is
\begin{equation}
\label{eq:stab_cond}
\eta\,|\ell_i|\,\sigma\,\bar\tau < 1 \qquad \text{and} \qquad
\eta(\mu_i - |\ell_i|\sigma) > 0.
\end{equation}
\end{lemma}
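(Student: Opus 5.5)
The plan is to handle the two claims in turn: first derive the characteristic equation by an exponential ansatz, then show the root-location claim by a direct modulus argument rather than by tracing root crossings.

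\textbf{Characteristic equation.} With $\sigma(t)\equiv\sigma$, $\tau_s\equiv\bar\tau$ and $\eta(t)\equiv\eta$, the sum over $Q(t)$ in~\eqref{eq:dde_scalar} collapses to $\sigma\,z(t-\bar\tau)$. The scalar \gls{dde} for a mode $(\mu_i,\ell_i)$ is then the autonomous linear equation
\[
\dot z(t) = -\eta\mu_i z(t) + \eta\ell_i\sigma z(t-\bar\tau).
\]
Substituting the ansatz $z(t)=e^{\lambda t}$ and dividing by $e^{\lambda t}\neq 0$ gives~\eqref{eq:char_eq}. By the standard theory of linear autonomous retarded \gls{dde}s (e.g.\ \cite{kolmanovskii1999stability}), the zero solution is asymptotically stable if and only if every root of this quasi-polynomial satisfies $\mathrm{Re}(\lambda)<0$. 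It is here that one uses that the roots have real parts bounded above and that only finitely many lie in any right half-plane.

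\textbf{Root location.} Write $a=\eta\mu_i$ and $b=\eta\ell_i\sigma$, and suppose for contradiction that some root has $\mathrm{Re}(\lambda)\ge 0$.
\begin{enumerate}
\item Because $\bar\tau\ge 0$, we have $|e^{-\lambda\bar\tau}| = e^{-\bar\tau\,\mathrm{Re}\lambda}\le 1$.
\item The equation $\lambda+a=b\,e^{-\lambda\bar\tau}$ then yields $|\lambda+a|\le |b|$.
\item On the other side, $|\lambda+a|\ge \mathrm{Re}(\lambda)+a\ge a$.
\item Hence $a\le|b|$, i.e.\ $\eta\mu_i\le \eta|\ell_i|\sigma$, which contradicts the second condition in~\eqref{eq:stab_cond}.
\end{enumerate}
So every root lies in the open left half-plane, for every $\bar\tau\ge 0$.

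\textbf{Status of the first condition.} The argument never uses $\eta|\ell_i|\sigma\bar\tau<1$. The stability obtained is delay-independent, so~\eqref{eq:stab_cond} holds a fortiori, and I would say explicitly that the first condition is a convenient, harmless strengthening. It can be motivated by the alternative small-gain estimate in the regime $\mu_i\approx|\ell_i|\sigma$: expand $e^{-\lambda\bar\tau}\approx 1-\lambda\bar\tau$ and require that the coefficient $1+\eta\ell_i\sigma\bar\tau$ of $\lambda$ keep its sign. I would only record that remark, not rely on it.

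\textbf{Main obstacle.} The delicate point is not the algebra but justifying that root location decides stability of the scalar comparison equation used in Lemma~\ref{lem:dde_scalar}. For this I would cite the spectral characterization of linear retarded \gls{dde}s. I would also note that the $\mathrm{Re}(\lambda)\ge 0$ contradiction rules out roots on the imaginary axis, so asymptotic rather than merely Lyapunov stability follows.
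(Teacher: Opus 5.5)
Your proposal is correct and takes essentially the same route as the paper: the exponential ansatz yields the quasi-polynomial, and a modulus argument then gives a contradiction with the second condition. For $\mathrm{Re}\,\lambda\ge 0$ one has $|e^{-\lambda\bar\tau}|\le 1$, so $\eta\mu_i\le|\lambda+\eta\mu_i|\le\eta|\ell_i|\sigma$. Your observation that the first condition $\eta|\ell_i|\sigma\bar\tau<1$ is never used also matches the paper, which notes that the modulus argument alone rules out right-half-plane roots.
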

\begin{proof}
Following \cite{yu2025role} (Theorem 1 and Corollary 2) and the
characteristic-root analysis in \cite{yu2025role} (equations (26)--(28)),
substitute the exponential ansatz $z_i(t) = Ae^{\lambda t}$ into~\eqref{eq:dde_scalar}
to obtain~\eqref{eq:char_eq}.

\textbf{Sufficient condition for $\mathrm{Re}(\lambda)<0$.}
Write $\lambda = \alpha + j\omega$ with $\alpha,\omega \in \R$.
A root with $\alpha \geq 0$ would require
$|\lambda + \eta\mu_i| = \eta|\ell_i|\sigma e^{-\alpha\bar\tau}
 \leq \eta|\ell_i|\sigma$,
but $|\lambda + \eta\mu_i| \geq \eta\mu_i - |\alpha|$ and at $\alpha=0$
we need $\eta\mu_i \leq \eta|\ell_i|\sigma$, i.e.\ $\mu_i \leq |\ell_i|\sigma$, which
is ruled out by the second condition in~\eqref{eq:stab_cond}.
For $\alpha > 0$, the right-hand side decays while the left-hand side grows,
giving a contradiction.
More precisely, for $\alpha \geq 0$ take the modulus of~\eqref{eq:char_eq}:
\[
|\alpha + j\omega + \eta\mu_i|
= \eta|\ell_i|\sigma e^{-\alpha\bar\tau}
\leq \eta|\ell_i|\sigma.
\]
Since $|\alpha + j\omega + \eta\mu_i|^2 = (\alpha+\eta\mu_i)^2+\omega^2 \geq (\eta\mu_i)^2$,
we would need $\eta\mu_i \leq \eta|\ell_i|\sigma$, contradicting $\mu_i > |\ell_i|\sigma$
(the second condition in~\eqref{eq:stab_cond} evaluated at the worst mode).
Hence, all roots satisfy $\alpha < 0$.

\textbf{No converse is used.}
We use~\eqref{eq:stab_cond} only as a sufficient certificate. When the instantaneous restoring term satisfies $\mu_i>|\ell_i|\sigma$, the modulus argument above already rules out right-half-plane roots for the comparison equation, so the delay-duration inequality should not be read as a necessary instability boundary.
\end{proof}

\begin{lemma}[Queue-length certificate in the active-window embedding]
\label{lem:queue_vs_delay}
In the event-time embedding used for the linearized IGT transport model, if outstanding observations occupy a consecutive active update window, the sufficient stability certificate can be expressed in terms of the queue length $\sigma$ by representing the delay span $\bar\tau$ through that active window.
\end{lemma}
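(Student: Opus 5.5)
The plan is to make the ``active-window embedding'' precise and then substitute it into the certificate of Lemma~\ref{lem:char_eq}. First, fix the event-time clock used in the continuous-time limit. Each update of Algorithm~\ref{alg:igt_omd} advances continuous time by one step size $\eta$, so round $k$ is mapped to $t_k=\sum_{r\le k}\eta_r$. Under the standing hypothesis, the outstanding rounds $Q_t$ form a consecutive block $\{t-\sigma_t+1,\dots,t\}$ of update indices, which is the window $W_t$ of Theorem~\ref{thm:bilevel_convergence} up to an index shift. Consequently, the oldest outstanding observation lags the current iterate by at most $\sigma_t$ update events. Its continuous-time lag is therefore $\tau_s=t_t-t_s\le\sigma_t\,\eta_{\max}$, and every delay in~\eqref{eq:dde_scalar} satisfies $\bar\tau\le\sigma\eta$, matching the notation-table convention $\bar\tau\le\sigma\eta$.

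Second, we replace the heterogeneous delays by the single representative $\bar\tau:=\max_{s\in Q(t)}\tau_s$. Lemma~\ref{lem:dde_scalar} already reduces the problem to the scalar comparison equation with $v\ge0$. Replacing each $v(t-\tau_s)$ by a bound over the history window $[t-\bar\tau,t]$ then preserves the comparison direction in a Razumikhin sense. This gives the constant-queue characteristic equation~\eqref{eq:char_eq} with $\bar\tau$ expressed through the active window.

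Third, we substitute $\bar\tau\le\sigma\eta$ into the first condition of~\eqref{eq:stab_cond}. This turns it into $\eta^2\ell_{\max}\sigma^2<1$, a condition involving only the queue length and no raw delay $d_{\max}$. With the adaptive rate $\eta=\eta_0/\sqrt{1+\beta\sigma}$, it becomes $\eta_0<\sqrt{1+\beta\sigma}/(\sigma\sqrt{\ell_{\max}})$, which is the second branch of the bound in Proposition~\ref{prop:dde_stability}. The second condition, $\mu_{\min}>\ell_{\max}\sigma$, does not involve $\bar\tau$ at all and is unchanged by the embedding. Since the resulting certificate is monotone in $\sigma$, evaluating it at $\sigma_{\max}$ covers all $\sigma\le\sigma_{\max}$.

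The main obstacle is the first step: justifying that raw delay can be replaced by queue length. Two rounds with long but overlapping delays occupy only $\sigma$ slots in the window, yet their wall-clock lag can be as large as $d_{\max}$. My plan is to resolve this by using the event-driven clock, as in Remark~\ref{rem:practical}. Time then advances only on update events, so lag is measured in updates within the consecutive active window rather than in rounds. If this bookkeeping fails for non-consecutive queues, I will state the lemma only under the consecutive-window hypothesis, as the statement already does, and note that $\bar\tau$ then reverts to $d_{\max}\eta$.
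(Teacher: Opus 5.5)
Your core step is the paper's. Under the event-time, consecutive-window embedding you represent the delay span as $\bar\tau=\sigma\eta$, substitute it into the first condition of \eqref{eq:stab_cond} to get $\eta^2\ell_{\max}\sigma^2<1$, and note that the second condition $\mu_{\min}>\ell_{\max}\sigma$ is $\bar\tau$-free. Your caveat about non-consecutive queues matches the paper's bounded-age restriction.

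Where you differ is the closing step. The paper's proof of this lemma fixes $\beta=\ell_{\max}/\mu_{\min}$ and $\eta_0\le\mu_{\min}/(\|L_{\mathrm{IGT}}\|\sigma_{\max})$. That chain, however, only yields $\eta^2\ell_{\max}\sigma^2\le\mu_{\min}/\beta=\mu_{\min}^2/\|L_{\mathrm{IGT}}\|$, which is not $\le 1$ in general; the final ``$=1$'' there is an algebra slip. Your route rewrites the condition as $\eta_0<\sqrt{1+\beta\sigma}/(\sigma\sqrt{\ell_{\max}})$. Since this bound is decreasing in $\sigma$, its value at $\sigma_{\max}$ certifies every $\sigma\le\sigma_{\max}$. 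That argument is correct, and it is exactly how the paper closes Step~2 of Proposition~\ref{prop:dde_stability} via \eqref{eq:eta_suf}, so it is the version worth keeping.

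Your Step~2 is unnecessary, and as written its conclusion does not follow. Bounding each $v(t-\tau_s)$ by a supremum over $[t-\bar\tau,t]$ gives a Razumikhin-type inequality, not the point-delay characteristic equation \eqref{eq:char_eq}. Lemma~\ref{lem:char_eq} is already stated for a single constant lag $\bar\tau$, so you can simply invoke it with $\bar\tau$ taken as the active-window span. Its modulus argument in fact uses only $\mu_{\min}>\ell_{\max}\sigma$, so nothing in the certificate depends on how the heterogeneous delays are collapsed.
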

\begin{proof}
Condition~\eqref{eq:stab_cond} reads $\eta|\ell_i|\sigma\bar\tau < 1$.
In the active-window embedding, each update event occupies time~$\eta$ and the represented window of $\sigma$ consecutive outstanding observations has span $\bar\tau = \sigma\eta$.
Substituting this represented span into the first condition:
\[
\eta|\ell_i|\sigma\bar\tau
\;\leq\;
\eta^2|\ell_i|\sigma^2
\;=\;
\frac{\eta_0^2\,|\ell_i|\,\sigma^2}{1+\beta\sigma},
\]
where we used the adaptive schedule $\eta = \eta_0/\sqrt{1+\beta\sigma}$.
Choosing $\beta = |\ell_{\max}|/\mu_{\min}$ (with $\mu_{\min}=\lambda_{\min}(H_F)$,
$|\ell_{\max}|=\|L_{\mathrm{IGT}}\|$) and
$\eta_0 \leq \mu_{\min}/(\|L_{\mathrm{IGT}}\|\sigma_{\max})$, we obtain
\[
\frac{\eta_0^2\|L_{\mathrm{IGT}}\|\sigma^2}{1+\beta\sigma}
\;\leq\;
\frac{\mu_{\min}^2\sigma^2}{\|L_{\mathrm{IGT}}\|\sigma_{\max}^2(1+\beta\sigma)}
\;\leq\;
\frac{\mu_{\min}^2}{\|L_{\mathrm{IGT}}\|(1+\beta\sigma)}
\;\leq\;
\frac{\mu_{\min}}{\beta\cdot\|L_{\mathrm{IGT}}\|^{-1}\cdot\|L_{\mathrm{IGT}}\|}
= 1,
\]
using $\sigma \leq \sigma_{\max}$ and $1+\beta\sigma \geq 1$.
Hence, within this event-time active-window model, the stability condition depends only on $\sigma$ and $\eta$ (which itself adapts to~$\sigma$), not on the raw wall-clock age of the oldest observation.

This is the structural scope of the DDE result: for constant-throughput or consecutive-window embeddings, the certificate is governed by how many observations are simultaneously outstanding ($\sigma_{\max}$), not by $d_{\max}$ alone. Arbitrary sparse feedback processes with the same queue length but much older outstanding observations require an additional bounded-age assumption and are not covered by this lemma.
Following \cite{ryabchenko2026reduction}, the quantities
$\sigma_{\max}$ and $d_{\max}$ can differ by up to a factor of $T$, so
$\sigma_{\max}$-based analysis can be far tighter when the active-window embedding is appropriate.
\end{proof}

\paragraph{Main proof of Proposition~\ref{prop:dde_stability}.}
\begin{proof}[Proof of Proposition~\ref{prop:dde_stability}]

\textbf{Step 1: Reduction to scalar modes.}
By Lemma~\ref{lem:dde_scalar}, it suffices to show each eigenmode is stable.
For the worst mode, the effective scalar parameters are
$\mu_{\min} = \lambda_{\min}(H_F)$ and $|\ell_{\max}| = \|L_{\mathrm{IGT}}\|$.

\textbf{Step 2: Constant-$\sigma$ stability.}
For each fixed queue level $\sigma \leq \sigma_{\max}$, the adaptive step
$\eta = \eta_0/\sqrt{1+\beta\sigma}$ is constant (on the timescale of
the linearized analysis).  Apply Lemma~\ref{lem:char_eq} with $\bar\tau = \sigma\eta$
(Lemma~\ref{lem:queue_vs_delay}): condition~\eqref{eq:stab_cond}
requires
\begin{equation}
\label{eq:sigma_cond}
\eta^2\,\|L_{\mathrm{IGT}}\|\,\sigma^2
= \frac{\eta_0^2\,\|L_{\mathrm{IGT}}\|\,\sigma^2}{1+\beta\sigma} < 1
\qquad\text{and}\qquad
\mu_{\min} > \|L_{\mathrm{IGT}}\|\,\sigma.
\end{equation}
The second inequality holds for all $\sigma < 1/\beta = \mu_{\min}/\|L_{\mathrm{IGT}}\|$.

Setting $\beta = \|L_{\mathrm{IGT}}\|/\mu_{\min}$ and taking $\eta_0$ to satisfy the two-part bound in Proposition~\ref{prop:dde_stability}, the first condition is verified as
\begin{align}
\label{eq:eta_suf}
\frac{\eta_0^2\|L_{\mathrm{IGT}}\|\sigma^2}{1+\beta\sigma}
&\;\leq\;
\frac{(1+\beta\sigma_{\max})\,\|L_{\mathrm{IGT}}\|\sigma^2}{\sigma_{\max}^2\|L_{\mathrm{IGT}}\|(1+\beta\sigma)}
\;\leq\; 1,
\end{align}
where the last inequality uses $\sigma\leq\sigma_{\max}$ and monotonicity of $\sigma^2/(1+\beta\sigma)$ on $\sigma\geq0$.
Hence both conditions in~\eqref{eq:sigma_cond} are satisfied for $\sigma < 1/\beta$.

By Lemma~\ref{lem:char_eq}, all characteristic roots satisfy $\mathrm{Re}(\lambda_k) < 0$,
so the linearized \gls{dde} is asymptotically stable for each fixed $\sigma$.

\textbf{Step 3: Time-varying $\sigma(t)$ (Razumikhin argument).}
When the queue length $\sigma(t)$ varies over time, the step size
$\eta(t) = \eta_0/\sqrt{1+\beta\sigma(t)}$ co-varies and the system is
non-autonomous.  We use the Razumikhin approach (see, e.g.,
\cite{kolmanovskii1999stability}, Ch.~5), which avoids constructing a
Lyapunov--Krasovskii functional explicitly.

Define the candidate Lyapunov function
\begin{equation}
\label{eq:lyapunov}
V(\xi) = \tfrac{1}{2}\|\xi\|^2.
\end{equation}
By Lemma~\ref{lem:dde_scalar} (comparison principle), the deviation
$\xi(t)=\theta(t)-\theta^*$ satisfies the norm inequality
\[
\frac{d}{dt}\|\xi(t)\|
\;\leq\;
-\eta(t)\,\mu_{\min}\|\xi(t)\|
+ \eta(t)\,\|L_{\mathrm{IGT}}\|\!\sum_{s\in Q(t)}\|\xi(t-\tau_s)\|.
\]
Apply the \emph{Razumikhin condition}: suppose there exists $q > 1$ such
that $V(\xi(s)) < q\,V(\xi(t))$, i.e.\
$\|\xi(s)\| < \sqrt{q}\,\|\xi(t)\|$, for all $s \in [t-\bar\tau,\,t]$.
Then each delayed term satisfies
$\|\xi(t-\tau_s)\| < \sqrt{q}\,\|\xi(t)\|$,
so with $|Q(t)| \leq \sigma_{\max}$:
\begin{align}
\dot{V}
&= \xi(t)^\top \dot\xi(t)
\;\leq\; -\eta(t)\mu_{\min}\|\xi(t)\|^2
  + \eta(t)\|L_{\mathrm{IGT}}\|\,\sigma_{\max}\,\sqrt{q}\,\|\xi(t)\|^2 \notag\\
&= -\eta(t)\bigl(\mu_{\min} - \sqrt{q}\,\|L_{\mathrm{IGT}}\|\,\sigma_{\max}\bigr)
   \|\xi(t)\|^2.
\label{eq:vdot}
\end{align}
Since $\sigma_{\max}\beta < 1$ (equivalently
$\sigma_{\max}\|L_{\mathrm{IGT}}\| < \mu_{\min}$, which is guaranteed by
the second condition in~\eqref{eq:sigma_cond}), we may choose any
$q \in \bigl(1,\;(\mu_{\min}/(\|L_{\mathrm{IGT}}\|\sigma_{\max}))^2\bigr)$,
yielding a constant
\[
c \;:=\; \mu_{\min} - \sqrt{q}\,\|L_{\mathrm{IGT}}\|\,\sigma_{\max} \;>\; 0
\]
such that $\dot{V} \leq -c\,\eta(t)\,\|\xi(t)\|^2 = -2c\,\eta(t)\,V < 0$
whenever $V(\xi(s)) < q V(\xi(t))$ on $[t-\bar\tau,t]$.
By Razumikhin's theorem (see, e.g.,
\cite{kolmanovskii1999stability}, Theorem~5.4.1), the equilibrium
$\xi = 0$ is uniformly asymptotically stable, i.e.\ $\theta(t) \to \theta^*$.

\textbf{Step 4: Stability depends on $\sigma_{\max}$, not $d_{\max}$.}
By Lemma~\ref{lem:queue_vs_delay}, the stability condition involves
$\sigma_{\max}$ exclusively.
Two delay processes with the same $\sigma_{\max}$ but different $d_{\max}$
(e.g.\ constant delays vs.\ bursty delays) yield the same characteristic
equation~\eqref{eq:char_eq} and the same Razumikhin bound in Step 3.
Idle time — periods when no outstanding gradients are pending ($\sigma(t)=0$)
— does not destabilize the system: the coupling term in~\eqref{eq:dde_scalar}
vanishes when $Q(t) = \emptyset$, and the system reduces to the
non-delayed stable ODE $\dot\xi = -\eta\mu_i\xi$.

\textbf{Step 5: Convergence rate.}
From $\dot{V} \leq -2c\,\eta(t)\,V$ (established in Step~3)
and $\eta(t) \geq \eta_0/\sqrt{1+\beta\sigma_{\max}} =: \bar\eta$:
\[
V(\xi_t) \leq V(\xi_0)\,e^{-2c\bar\eta t},
\]
corresponding to a convergence rate $O(e^{-c\bar\eta t}) =
O\!\left(e^{-c\eta_0 t/\sqrt{1+\beta\sigma_{\max}}}\right)$,
exactly as stated in Proposition~\ref{prop:dde_stability}.

\textbf{Conclusion.}
All characteristic roots of the linearized IGT-OMD \gls{dde} satisfy
$\mathrm{Re}(\lambda) < 0$ whenever $\eta_0 < 1/(\|H_F\|\sqrt{1+\beta\sigma_{\max}})$,
the system is asymptotically stable by the Razumikhin argument
in Step~3, and the stability boundary depends on
$\sigma_{\max}$ (queue length), not $d_{\max}$ (delay duration). \hfill$\square$
\end{proof}

\begin{corollary}[Stability phase transition]
\label{cor:phase_transition}
Proposition~\ref{prop:dde_stability}'s criterion $\mu_{\min} > \|L_{\mathrm{IGT}}\|\sigma$
implies a phase transition at $\sigma_{\max}^{*} = \mu_{\min}/\|L_{\mathrm{IGT}}\| = 1/\beta$:
(i)~for $\sigma_{\max} < 1/\beta$, the non-delayed step size $\eta_0 = 1/\|H_F\|$ remains
stable; (ii)~for $\sigma_{\max} \geq 1/\beta$, the adaptive schedule must damp $\eta$
as $1/\sqrt{\beta\sigma_{\max}}$, trading convergence speed for stability.
This corollary explains the LQR result (Section~\ref{sec:exp_lqr}) that single-level
methods lose stability near $\sigma\approx15$: without adaptive damping, the coupling
term $\eta\|L_{\mathrm{IGT}}\|\sigma$ eventually exceeds $\eta\mu_{\min}$.
\end{corollary}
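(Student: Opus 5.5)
The corollary is essentially a re-reading of the two conditions in \eqref{eq:sigma_cond}, so I would prove it by splitting on the sign of $\mu_{\min}-\|L_{\mathrm{IGT}}\|\sigma_{\max}$. With $\beta=\|L_{\mathrm{IGT}}\|/\mu_{\min}$, this is the same as splitting on whether $\sigma_{\max}<1/\beta$. The threshold $\sigma^*_{\max}=\mu_{\min}/\|L_{\mathrm{IGT}}\|=1/\beta$ is read off directly from the second condition of Lemma~\ref{lem:char_eq}, with the worst mode supplied by Lemma~\ref{lem:dde_scalar}.

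\emph{Case (i): $\sigma_{\max}<1/\beta$.} For every $\sigma\le\sigma_{\max}$ the instantaneous restoring term dominates: $\eta\mu_{\min}>\eta\|L_{\mathrm{IGT}}\|\sigma$, and this holds for every $\eta>0$. The modulus argument of Lemma~\ref{lem:char_eq} then excludes roots with $\mathrm{Re}(\lambda)\ge 0$, for any $\eta$. I would then invoke the Razumikhin step (Step~3 of the proof of Proposition~\ref{prop:dde_stability}). That step uses only $\sigma_{\max}\beta<1$ to produce $c>0$, and gives $\dot V\le-2c\,\eta(t)V$ regardless of the magnitude of $\eta_0$. So the undamped choice $\eta_0=1/\|H_F\|$ is admissible. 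The adaptive factor $(1+\beta\sigma)^{-1/2}$ is bounded below by $(1+\beta\sigma_{\max})^{-1/2}>2^{-1/2}$, so it costs at most a constant factor in the rate of Step~5. This is the ``non-delayed step size remains stable'' claim.

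\emph{Case (ii): $\sigma_{\max}\ge 1/\beta$.} Here the second condition of \eqref{eq:stab_cond} fails at $\sigma=\sigma_{\max}$ for every $\eta$, since both sides scale linearly in $\eta$. The only remaining lever in the certificate is therefore the delay-span condition $\eta^2\|L_{\mathrm{IGT}}\|\sigma^2<1$ from Lemma~\ref{lem:queue_vs_delay}. Substituting $\eta=\eta_0/\sqrt{1+\beta\sigma}$ and using $1+\beta\sigma\asymp\beta\sigma$ for $\sigma\ge1/\beta$ shows that this condition holds exactly when $\eta_0$ obeys the second branch of the minimum in Proposition~\ref{prop:dde_stability}. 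In that regime the effective step is then $\eta\asymp\eta_0/\sqrt{\beta\sigma_{\max}}$. The convergence rate $c\,\eta_0/\sqrt{1+\beta\sigma_{\max}}$ from Step~5 exhibits the speed-for-stability trade.

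The LQR sentence is interpretive rather than a mathematical claim. I would justify it by noting that methods with constant $\eta$ have no $\sigma$-dependent damping, so $\eta\|L_{\mathrm{IGT}}\|\sigma$ crosses $\eta\mu_{\min}$ once $\sigma$ exceeds $1/\beta$. Matching this to $\sigma\approx15$ requires an empirical estimate of $\beta$ for the LQR instance, which I would cite from the experiments rather than derive.

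\textbf{Main obstacle.} Case (ii) is the weak point. Lemma~\ref{lem:char_eq} offers only a \emph{sufficient} certificate requiring both conditions, so failure of the second condition does not prove that damping is \emph{necessary}. Showing that the adaptive damping alone restores stability would need a genuine analysis of the roots of $\lambda+\eta\mu-\eta\ell\sigma e^{-\lambda\bar\tau}=0$. My first attempt would use the classical stability region for the scalar delay equation $\dot z=-az+bz(t-\tau)$, with $b>a$ and $\tau=\sigma\eta$. Failing that, I would state (ii) explicitly as a heuristic consequence of the certificate rather than as a theorem.
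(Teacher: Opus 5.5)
Your case (i) is correct, and it is more careful than reading (i) off Proposition~\ref{prop:dde_stability} as stated. The choice $\eta_0=1/\|H_F\|$ is excluded by the first branch $1/(\|H_F\|\sqrt{1+\beta\sigma_{\max}})$ of that proposition's bound. Routing through the modulus argument of Lemma~\ref{lem:char_eq} and the Razumikhin step, which use only $\mu_{\min}>\|L_{\mathrm{IGT}}\|\sigma_{\max}$, is therefore the right move. For the discrete iterates it is even simpler: for $\eta\le 1/\|H_F\|$ one has $\|I-\eta H_F\|+\eta\sigma_{\max}\|L_{\mathrm{IGT}}\|=1-\eta(\mu_{\min}-\sigma_{\max}\|L_{\mathrm{IGT}}\|)<1$, so the maximum of $\|\xi\|$ over the delay window contracts.

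The gap is case (ii). You rightly flag it, but no sharper root analysis can close it. The step ``the only remaining lever in the certificate is the delay-span condition'' is a non sequitur. \eqref{eq:stab_cond} is a conjunction, so once its second conjunct fails for every $\eta$, no step size produces the certificate.

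Worse, the repair you propose, the classical chart for $\dot z=-az+bz(t-\tau)$, returns the opposite verdict. The delayed term enters with a plus sign, and $L_{\mathrm{IGT}}=A^\top H_w^{-1}A\succeq 0$ with $A=\nabla^2_{w\theta}\mathcal{L}_{\text{model}}$. Take the worst scalar mode, which is exactly the comparison system of Lemma~\ref{lem:dde_scalar}, with $\ell\sigma>\mu$. Then $h(\lambda)=\lambda+\eta\mu-\eta\ell\sigma e^{-\lambda\bar\tau}$ has $h(0)=\eta(\mu-\ell\sigma)<0$ and $h(\lambda)\to+\infty$ along the positive reals. So there is a real root $\lambda>0$ for every $\eta>0$ and every $\bar\tau\ge 0$. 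A delay-dependent stability window for that equation exists only when $b<-a$.

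The matrix case behaves the same way. The map $\lambda\mapsto\lambda I+\eta H_F-\eta L_{\mathrm{IGT}}\sum_{s}e^{-\lambda\tau_s}$ is increasing in the Loewner order, so a negative eigenvalue of $H_F-\sigma L_{\mathrm{IGT}}$ forces a positive real root. Conversely, if $H_F-\sigma L_{\mathrm{IGT}}\succ 0$, the functional $\|\xi(t)\|^2+\eta\sum_{s}\int_{t-\tau_s}^{t}\xi(r)^\top L_{\mathrm{IGT}}\,\xi(r)\,dr$ has derivative at most $-2\eta\,\xi^\top(H_F-\sigma L_{\mathrm{IGT}})\xi$. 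That gives stability for every $\eta$ and every delay.

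So in the constant-queue linearization, stability is decided by $H_F\succ\sigma L_{\mathrm{IGT}}$ alone. Damping $\eta$ merely rescales time and shrinks the effective delay $\sigma\eta^2$; it slows divergence without removing it. ``Trading convergence speed for stability'' is therefore not provable in this model. In the aligned worst case, $\sigma^*_{\max}=1/\beta$ is a hard boundary, not the point where damping takes over.

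Two smaller points. First, even if the delay-span condition were the operative constraint, $\eta^2\|L_{\mathrm{IGT}}\|\sigma_{\max}^2<1$ caps the effective step at $1/(\sigma_{\max}\|L_{\mathrm{IGT}}\|^{1/2})$, i.e.\ damping of order $1/\sigma_{\max}$. Your ``effective step $\asymp\eta_0/\sqrt{\beta\sigma_{\max}}$'' is just the definition of the schedule, and it hides that $\eta_0$ itself must shrink like $\sigma_{\max}^{-1/2}$.

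Second, there is no estimate of $\beta$ in the experiments to cite for the LQR sentence. The LQR runs simply set $\beta=1$ as a default, with no estimate of $\|L_{\mathrm{IGT}}\|/\mu_{\min}$ reported. Table~\ref{tab:lqr} shows the single-level methods holding $\eta_{\max}=0.093$ up to $\sigma=40$. And the mechanism above predicts divergence at every learning rate past the threshold, not a shrinking $\eta_{\max}$.

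The paper itself supplies only the assertion here; Proposition~\ref{prop:dde_stability} explicitly assumes $\sigma_{\max}<1/\beta$. Your fallback is the right instinct, but it should be sharper than ``heuristic.'' Prove (i) as you do, and replace (ii) by the negative statement: for $\sigma_{\max}\ge 1/\beta$ the certificate admits no step size at all, and in the aligned case the linearized system is not asymptotically stable for any $\eta$.
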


\begin{remark}[Deterministic vs.\ stochastic delays]
\label{rem:stochastic_delay}
Proposition~\ref{prop:dde_stability} provides a \emph{deterministic worst-case}
stability guarantee: the queue length $\sigma(t)$ may vary arbitrarily within
$[0,\sigma_{\max}]$, but is treated as a deterministic signal.
In networked implementations where delay is stochastic, the fluctuations
act as multiplicative noise on the gradient update: even if the
mean system is stable, large delay variance (``jitter'') can cause
the second moment $\E[\|\xi(t)\|^2]$ to diverge
(\cite{kolmanovskii1999stability}, Ch.~9 on stochastic \gls{dde}s).
Extending Proposition~\ref{prop:dde_stability} to \emph{mean-square stability}
would require bounding the delay variance in addition to $\sigma_{\max}$
and is left for future work.
\end{remark}

\begin{remark}[Comparison with Yu et al.\ (2025) and Ryabchenko et al.\ (2026)]
\cite{yu2025role} establish characteristic-root stability for \emph{single-level}
ASGD via the scalar \gls{dde} $\dot x = -vx(t-\tau)$, with stability margin $v\tau < \pi/2$.
Our Lemma~\ref{lem:char_eq} generalizes this to the bilevel-corrected dynamics by
introducing the separating term $-\eta\mu_i z_i(t)$ (missing in single-level ASGD),
which expands the stability region from $\eta|\ell_i|\tau < \pi/2$ to
$\eta|\ell_i|\sigma < \mu_i$ (queue-length, not delay-time, criterion).
\cite{ryabchenko2026reduction} establishes that queue length is the right
complexity measure for regret; our Lemma~\ref{lem:queue_vs_delay}
establishes the same for \emph{stability}.
\end{remark}

\begin{remark}[Interior Equilibrium Assumption]
Propositions~\ref{prop:dde_stability} and~\ref{prop:discrete_consistency} linearize around the Stackelberg equilibrium $\theta^*$, which requires $\theta^* \in \mathrm{relint}(\Theta)$ so that the projection operator $\Pi_\Theta$ is locally inactive and the unconstrained first-order condition $\nabla F(\theta^*) = 0$ holds. If $\theta^*$ lies on the boundary of $\Theta$---as may occur when $\Theta$ is a simplex and the optimal policy is pure---the projection introduces non-smooth dynamics (boundary chattering) that the linear \gls{dde} analysis cannot capture. In the bilevel predict-then-optimize setting with neural network parameters $\theta \in \R^p$ and $\mu_F$-strong convexity (Assumption~\ref{app:ass:bilevel_convex}), the unconstrained minimizer of $F$ is unique and typically interior when $\Theta$ is sufficiently large; the assumption is mild in practice but necessary for the linearization. The regret bounds of Theorems~\ref{thm:bilevel_convergence}--\ref{thm:inner_loop_apathy} do not require this condition.
\end{remark}

\subsection{Proof of Proposition~\ref{prop:discrete_consistency} (Discrete-Time Consistency)}
\label{app:proof_prop2}

We prove that Algorithm~\ref{alg:igt_omd} is a first-order Euler discretization of the
continuous-time \gls{dde}~\eqref{eq:dde_linear} with global error $O(\eta_{\max}\sqrt{T})$,
and that stability of the \gls{dde} implies stability of the discrete algorithm.
The proof combines standard Euler error accumulation via the Gronwall lemma
with a z-domain argument for stability inheritance, following the Euler--Maruyama
bridge established in \cite{yu2025role} (Section~II-D) and the discrete/continuous
staleness correspondence in \cite{yu2025role} (equations (3)--(5)).

\paragraph{Notation.}
Write $t_k = \sum_{s=1}^k \eta_s$ for the continuous time elapsed after $k$ discrete steps.
For fixed horizon $T$ steps with $\eta_{\max} = O(1/\sqrt{T})$, we have $t_T = O(\sqrt{T})$.
Denote the continuous-time trajectory by $\theta^{\mathrm{c}}(t)$ and the discrete iterates by
$\theta^{\mathrm{d}}_k := \theta^{\mathrm{disc}}_k$.
Define the global discretization error at step $k$ as
$e_k := \theta^{\mathrm{d}}_k - \theta^{\mathrm{c}}(t_k)$.

\begin{lemma}[Local truncation error]
\label{lem:lte}
Let $f(t,\theta) = -\eta H_F\theta(t)
    + \eta L_{\mathrm{IGT}}\sum_{s\in Q(t)}\theta(t-\tau_s)$
denote the right-hand side of the linearized \gls{dde}~\eqref{eq:dde_linear}.
If $\theta^{\mathrm{c}}$ is the exact solution and step $k$ uses step size $\eta_k$,
the Euler step $\theta^{\mathrm{d}}_{k+1} = \theta^{\mathrm{d}}_k + \eta_k f(t_k,\theta^{\mathrm{d}}_k)$
satisfies
\begin{equation}
\label{eq:lte}
\bigl\|\theta^{\mathrm{c}}(t_{k+1}) - \bigl[\theta^{\mathrm{c}}(t_k) + \eta_k f(t_k,\theta^{\mathrm{c}}(t_k))\bigr]\bigr\|
\;\leq\; \frac{\eta_k^2}{2}\,M,
\end{equation}
where $M := \sup_t \|\ddot\theta^{\mathrm{c}}(t)\| < \infty$ is the bound on the second
time-derivative of the exact solution, which exists and is finite in the stable regime.
\end{lemma}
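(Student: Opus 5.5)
This local truncation bound should follow from Taylor's theorem with integral remainder for the exact solution on the interval $[t_k,t_{k+1}]$, where $t_{k+1}-t_k=\eta_k$. Before expanding, I need one identification. The Euler increment uses $f(t_k,\theta^{\mathrm{c}}(t_k))$, which involves the delayed history $\{\theta^{\mathrm{c}}(t_k-\tau_s)\}_{s\in Q(t_k)}$. The exact solution is what supplies these history values, so the Euler increment evaluated along the exact solution is exactly $\eta_k\,\dot\theta^{\mathrm{c}}(t_k)$ whenever the DDE holds at $t_k$.

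The key identity I will establish first is therefore $\dot\theta^{\mathrm{c}}(t_k)=f(t_k,\theta^{\mathrm{c}}(t_k))$. This requires the DDE right-hand side to be continuous at $t_k$. That is automatic for constant queue and delays. For time-varying $Q(t)$ I would restrict to intervals on which $Q(\cdot)$ and $\eta(\cdot)$ are constant: in the event-time embedding, updates happen only at event times, so the $t_k$ can be taken as breakpoints and each $[t_k,t_{k+1})$ lies in one such interval.

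Given that identity, I write
\[
\theta^{\mathrm{c}}(t_{k+1})=\theta^{\mathrm{c}}(t_k)+\eta_k\dot\theta^{\mathrm{c}}(t_k)+\int_{t_k}^{t_{k+1}}(t_{k+1}-u)\,\ddot\theta^{\mathrm{c}}(u)\,du .
\]
The remainder has norm at most $M\int_0^{\eta_k}r\,dr=M\eta_k^2/2$, which gives \eqref{eq:lte}.

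The real obstacle is justifying that $M<\infty$, i.e.\ that $\ddot\theta^{\mathrm{c}}$ exists and is bounded. Solutions of DDEs are not $C^2$ at the initial instant or at multiples of the delays: the derivative jumps propagate from any mismatch between the initial history and the equation, and $\sigma(t)$ switches also create jumps. I would handle this in two steps.

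\textbf{Differentiability.} I would differentiate the linear DDE on each piece. This gives $\ddot\theta^{\mathrm{c}}=-\eta H_F\dot\theta^{\mathrm{c}}+\eta L_{\mathrm{IGT}}\sum_s\dot\theta^{\mathrm{c}}(t-\tau_s)$. I would assume a compatible $C^1$ initial history, so that the method of steps yields piecewise $C^2$ regularity.

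\textbf{Boundedness.} In the stable regime of Proposition~\ref{prop:dde_stability}, the Razumikhin argument gives $\|\theta^{\mathrm{c}}(t)\|\le C\|\phi\|$ uniformly in $t$. Substituting this into the DDE bounds $\dot\theta^{\mathrm{c}}$ by $\eta(\|H_F\|+\sigma_{\max}\|L_{\mathrm{IGT}}\|)C\|\phi\|$, and substituting into the differentiated equation bounds $\ddot\theta^{\mathrm{c}}$ by $\eta^2(\|H_F\|+\sigma_{\max}\|L_{\mathrm{IGT}}\|)^2C\|\phi\|$.

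At the finitely many breakpoints, a one-sided version of the Taylor bound on each subinterval still gives the same estimate, because only $\sup\|\ddot\theta^{\mathrm{c}}\|$ over the open pieces enters the remainder.
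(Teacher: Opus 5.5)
Your proposal is correct and follows the paper's route: Taylor-expand $\theta^{\mathrm{c}}$ about $t_k$, use the DDE to identify $\dot\theta^{\mathrm{c}}(t_k)$ with $f(t_k,\theta^{\mathrm{c}}(t_k))$, and bound the second-order remainder by $M\eta_k^2/2$. Your integral-form remainder is the cleaner choice for a vector-valued $\theta^{\mathrm{c}}$, since the paper's single-point Lagrange form $\tfrac{\eta_k^2}{2}\ddot\theta^{\mathrm{c}}(\zeta_k)$ is not generally valid for vector functions. Your discussion of piecewise regularity and of aligning breakpoints with grid points supplies detail that the paper only sketches in its remark on the finiteness of $M$.
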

\begin{proof}
By Taylor's theorem applied to $\theta^{\mathrm{c}}(t_{k+1}) = \theta^{\mathrm{c}}(t_k + \eta_k)$:
\[
\theta^{\mathrm{c}}(t_{k+1})
= \theta^{\mathrm{c}}(t_k)
+ \eta_k\dot\theta^{\mathrm{c}}(t_k)
+ \frac{\eta_k^2}{2}\ddot\theta^{\mathrm{c}}(\zeta_k)
\]
for some $\zeta_k \in (t_k, t_{k+1})$.  Since $\dot\theta^{\mathrm{c}}(t_k) = f(t_k,\theta^{\mathrm{c}}(t_k))$
by the \gls{dde}, the local truncation error is $\frac{\eta_k^2}{2}\|\ddot\theta^{\mathrm{c}}(\zeta_k)\|
\leq \frac{\eta_k^2}{2}M$, giving~\eqref{eq:lte}.
\end{proof}

\begin{remark}[Finiteness of $M$]
In the stable regime (Proposition~\ref{prop:dde_stability}), $\theta^{\mathrm{c}}(t)\to\theta^*$
exponentially.  The derivative $\dot\theta^{\mathrm{c}}(t) = f(t,\theta^{\mathrm{c}}(t))$ is bounded
since $\|f\| \leq (\|H_F\| + \sigma_{\max}\|L_{\mathrm{IGT}}\|)\|\theta^{\mathrm{c}}(t) - \theta^*\|$,
and differentiating again gives $\ddot\theta^{\mathrm{c}}(t) = \nabla f \cdot\dot\theta^{\mathrm{c}}$,
also bounded.  Hence $M < \infty$.
\end{remark}

\begin{lemma}[Global error via discrete Gronwall]
\label{lem:gronwall}
Let $L_f = \|H_F\| + \sigma_{\max}\|L_{\mathrm{IGT}}\|$ be the Lipschitz constant of
$f(\cdot,\cdot)$ in its second argument.
The global discretization error satisfies
\begin{equation}
\label{eq:gronwall}
\|e_k\| \;\leq\;
\frac{M\eta_{\max}}{2L_f}\bigl(e^{L_f t_k}-1\bigr).
\end{equation}
\textbf{Remark on the Gronwall factor.}\;
Since $t_T = O(\sqrt{T})$ with $\eta_{\max}=O(1/\sqrt{T})$, the naive
Gronwall bound~\eqref{eq:gronwall} incurs an exponential factor
$e^{L_f t_T} = e^{O(L_f\sqrt{T})}$ that grows with $T$.
This growth mechanism is an artifact of the classical Gronwall lemma~\cite{pachpatte1998inequalities} ignoring the system's
stability.  For stable linear \gls{dde}s, the error does not actually accumulate because
the homogeneous part contracts.
The {\em stability-aware} refinement proceeds as follows:
in the stable regime (all $\mathrm{Re}(\lambda)<0$ by
Proposition~\ref{prop:dde_stability}), both the exact and discrete solutions
converge to $\theta^*$.  The error equation
$e_{k+1} = (1-\eta_k\mu_{\min})e_k + O(\eta_k\ell_{\max}\sigma_{\max}\sup_{j\leq k}|e_j|)
+ O(\eta_k^2 M)$
is itself a stable recurrence when $\eta_k\mu_{\min}<1$ and
$\eta_k\ell_{\max}\sigma_{\max}<\mu_{\min}$ (both guaranteed by
Proposition~\ref{prop:dde_stability}).
By a contraction argument, the error converges to the {\em fixed point}
$|e_\infty| \leq O(\eta_{\max}^2 M/\mu_{\min})$.
Hence, for the {\em entire trajectory}:
\begin{equation}
\label{eq:global_error}
\sup_{k \leq T}\|e_k\| \;=\; O(\eta_{\max}) \;=\; O(1/\sqrt{T}).
\end{equation}
\end{lemma}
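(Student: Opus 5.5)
The plan is to prove the global discretization bound \eqref{eq:gronwall} by the classical forward-Euler argument: derive a one-step recursion for $e_k$, then unroll it with a discrete Gronwall inequality. The stability-aware refinement \eqref{eq:global_error} is then a separate contraction argument layered on top.

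\textbf{Step 1: one-step error recursion.} Subtract the exact-solution identity
\[
\theta^{\mathrm{c}}(t_{k+1})=\theta^{\mathrm{c}}(t_k)+\eta_k f(t_k,\theta^{\mathrm{c}}(t_k))+r_k
\]
from the Euler step. By Lemma~\ref{lem:lte}, the residual satisfies $\|r_k\|\le \eta_k^2M/2$. This gives
\[
e_{k+1}=e_k+\eta_k\bigl[f(t_k,\theta^{\mathrm{d}}_k)-f(t_k,\theta^{\mathrm{c}}(t_k))\bigr]-r_k .
\]

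\textbf{Step 2: Lipschitz bound on the drift.} The drift $f$ depends on the current state through $H_F$ and on up to $\sigma_{\max}$ delayed states through $L_{\mathrm{IGT}}$. In the active-window embedding (Lemma~\ref{lem:queue_vs_delay}) these delayed states are themselves earlier grid points, with the same initial history for both trajectories. Using $\|H_F\|$ for the current term and $\sigma_{\max}\|L_{\mathrm{IGT}}\|$ for the delayed terms, each bounded by $\max_{j\le k}\|e_j\|$, yields
\[
\|f(t_k,\theta^{\mathrm{d}})-f(t_k,\theta^{\mathrm{c}})\|\le L_f\,E_k,\qquad E_k:=\max_{j\le k}\|e_j\| .
\]
Substituting into Step 1 gives $E_{k+1}\le(1+\eta_kL_f)E_k+\eta_k\eta_{\max}M/2$, where one factor $\eta_k$ in the residual has been replaced by $\eta_{\max}$.

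\textbf{Step 3: discrete Gronwall.} Unroll the recursion from $E_0=0$ and use $1+x\le e^x$ together with $\sum_{j<k}\eta_j=t_k$. The inhomogeneous sum is a Riemann sum for $\int_0^{t_k}e^{L_f(t_k-s)}\,ds=(e^{L_ft_k}-1)/L_f$, which gives exactly
\[
\|e_k\|\le \frac{M\eta_{\max}}{2L_f}\bigl(e^{L_ft_k}-1\bigr).
\]

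\textbf{Step 4: stability-aware refinement (main obstacle).} The Gronwall factor $e^{L_ft_T}$ grows with $T$, so \eqref{eq:global_error} needs a separate argument. Here I would not use the crude Lipschitz bound on the current-state term. Instead, split $H_F$ off exactly, so that
\[
e_{k+1}=(I-\eta_kH_F)e_k+\eta_kL_{\mathrm{IGT}}\sum_{s}e_{k-s}-r_k .
\]
For $\eta_k\|H_F\|\le1$ we have $\|I-\eta_kH_F\|\le 1-\eta_k\mu_{\min}$. Taking suprema then gives
\[
E_{k+1}\le\bigl(1-\eta_k(\mu_{\min}-\ell_{\max}\sigma_{\max})\bigr)E_k+\eta_k^2M/2 .
\]
The bracketed coefficient is a strict contraction by the condition $\sigma_{\max}<1/\beta$ of Proposition~\ref{prop:dde_stability}, together with $\eta_0<1/(\|H_F\|\sqrt{1+\beta\sigma_{\max}})$.

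The delicate point in this step is that the supremum recursion does not by itself prevent $E_k$ from being non-decreasing. To handle this, I would argue by induction that $E_k\le \eta_{\max}M/\bigl(2(\mu_{\min}-\ell_{\max}\sigma_{\max})\bigr)$ for all $k$. This bound is a fixed point of the recursion: if $E_k$ is below it, so is $E_{k+1}$. It is uniform in $k$, which yields $\sup_{k\le T}\|e_k\|=O(\eta_{\max})$ with constant independent of the discretization. Finiteness of $M$ is taken from the preceding remark.
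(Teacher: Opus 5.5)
Your proposal is correct. Steps 1--3 are the paper's own argument: the same split of the error into a Lipschitz part plus the local truncation error of Lemma~\ref{lem:lte}, then unrolling with $1+x\le e^x$ and a Riemann-sum comparison with $\int_0^{t_k}e^{L_f(t_k-s)}\,ds$. Your running maximum $E_k$ is more careful than the paper's version. The paper writes the Lipschitz step directly as $(1+\eta_kL_f)\|e_k\|$, even though $f$ also depends on the delayed errors $e_{k-s}$.

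You genuinely part ways with the paper on \eqref{eq:global_error}. The paper's proof body tries to read $O(\eta_{\max})$ off the Gronwall bound itself by declaring $c_G=L_ft_T$ to be $O(1)$. That contradicts the $t_T=O(\sqrt{T})$ it states in the same line. The contraction argument appears only as a sketch inside the statement, and there the fixed point is given as $O(\eta_{\max}^2M/\mu_{\min})$. Your route (splitting $H_F$ off exactly, using $\|I-\eta_kH_F\|\le 1-\eta_k\mu_{\min}$ for symmetric $H_F$ with $\eta_k\|H_F\|\le 1$, then an invariant-level induction) turns that sketch into an actual proof. It also shows the correct invariant level is $\eta_{\max}M/(2c)$ with $c:=\mu_{\min}-\ell_{\max}\sigma_{\max}$. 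That level is first order in $\eta_{\max}$, which is consistent with the lemma's conclusion rather than the second-order figure in the sketch. It also degrades as $\sigma_{\max}\to 1/\beta$, a dependence the sketch's denominator $\mu_{\min}$ hides.

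One small repair. The display $E_{k+1}\le(1-\eta_kc)E_k+\eta_k^2M/2$ cannot hold for a running maximum: whenever $E_k>\eta_kM/(2c)$ its right-hand side is below $E_k\le E_{k+1}$. State that inequality for $\|e_{k+1}\|$ instead, and let your induction control $E_{k+1}=\max(E_k,\|e_{k+1}\|)$. In the induction step, note that $\eta_kc\le\eta_k\mu_{\min}\le\eta_k\|H_F\|\le 1$. This keeps the coefficient $1-\eta_kc$ nonnegative, so the bound is monotone in $E_k$ and the induction goes through.
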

\begin{proof}
We use the discrete Gronwall lemma~\cite[Lemma~2.1]{iserles2009first}.
Split the error recursion:
\begin{align}
e_{k+1}
&= \theta^{\mathrm{d}}_{k+1} - \theta^{\mathrm{c}}(t_{k+1}) \notag\\
&= \underbrace{\theta^{\mathrm{d}}_k + \eta_k f(t_k,\theta^{\mathrm{d}}_k)}_{\text{Euler step}}
   \;-\;
   \underbrace{\Bigl[\theta^{\mathrm{c}}(t_k) + \eta_k f(t_k,\theta^{\mathrm{c}}(t_k))\Bigr]}_{\text{Euler applied to exact}}
   \;+\;
   \underbrace{\Bigl[\theta^{\mathrm{c}}(t_k)+\eta_k f(t_k,\theta^{\mathrm{c}}(t_k))\Bigr]-\theta^{\mathrm{c}}(t_{k+1})}_{\text{local truncation error}}.
\label{eq:error_split}
\end{align}
By $L_f$-Lipschitz continuity of $f$ in $\theta$:
$\|e_k + \eta_k[f(t_k,\theta^{\mathrm{d}}_k) - f(t_k,\theta^{\mathrm{c}}(t_k))]\|
 \leq (1 + \eta_k L_f)\|e_k\|$.
By Lemma~\ref{lem:lte}, the local truncation term has norm $\leq \frac{\eta_k^2}{2}M$.
Hence:
\[
\|e_{k+1}\| \leq (1+\eta_k L_f)\|e_k\| + \tfrac{\eta_k^2}{2}M.
\]
With $e_0 = 0$ (both trajectories start at $\theta_0$), unrolling gives:
\begin{align}
\|e_k\|
&\leq \tfrac{M}{2}\sum_{j=0}^{k-1}\eta_j^2\prod_{i=j+1}^{k-1}(1+\eta_i L_f) \notag\\
&\leq \tfrac{M\eta_{\max}}{2}\sum_{j=0}^{k-1}\eta_j\,\exp\!\Bigl(L_f\sum_{i=j+1}^{k-1}\eta_i\Bigr) \notag\\
&\leq \tfrac{M\eta_{\max}}{2}\int_0^{t_k} e^{L_f(t_k - s)}\,\mathrm{d}s
= \frac{M\eta_{\max}}{2L_f}\bigl(e^{L_f t_k}-1\bigr),
\label{eq:gronwall_bound}
\end{align}
where we used $1+x \leq e^x$, converted the sum to an integral,
and used $\eta_j \leq \eta_{\max}$ throughout.  Setting $k=T$:
\[
\sup_{k\leq T}\|e_k\|
\leq \frac{M\eta_{\max}}{2L_f}(e^{L_f t_T}-1)
\leq \frac{M\eta_{\max}}{2L_f}(e^{c_G} - 1)
= O(\eta_{\max})
= O(\eta_{\max}\sqrt{T}),
\]
where $c_G := L_f\,t_T$ is bounded by the total integrated learning rate
(which is $O(1)$ since $\eta_{\max} = O(1/\sqrt{T})$ and $t_T = \sum_{k=0}^{T-1}\eta_k \leq \eta_{\max} T = O(\sqrt{T})$),
and the last equality absorbs the constant $e^{c_G} - 1$.
\end{proof}

\begin{lemma}[Stability inheritance via z-domain analysis]
\label{lem:stability_inherit}
If the continuous-time \gls{dde}~\eqref{eq:dde_linear} is asymptotically stable with
convergence rate $\gamma > 0$ (i.e.\ all characteristic roots satisfy
$\mathrm{Re}(\lambda) \leq -\gamma < 0$), then, for step sizes satisfying
\begin{equation}
\label{eq:disc_eta_bound}
\eta \;\leq\; \min\!\left(\frac{1}{\|H_F\|},\;
  \frac{2\gamma}{\gamma^2 + \omega_{\max}^2}\right)
=: \eta^*,
\end{equation}
where $|\mathrm{Im}(\lambda)| \leq \omega_{\max}$,
the discrete-time system
(Algorithm~\ref{alg:igt_omd} applied to the linearization) is also
asymptotically stable: all roots $z$ of the discrete characteristic polynomial
satisfy $|z| < 1$.

\textbf{Scope of the scalar polynomial.}
The scalar polynomial below is not the literal characteristic polynomial of
the matrix system; it is the \emph{worst-case spectral envelope} obtained
via the comparison principle in Lemma~\ref{lem:dde_scalar}.
That lemma bounds the vector \gls{dde} through a scalar comparison inequality
using $\mu_{\min} = \lambda_{\min}(H_F) \geq \mu_F$ and
$\ell_{\max} = \|L_{\mathrm{IGT}}\|$, without requiring
$H_F$ and $L_{\mathrm{IGT}}$ to commute or be simultaneously
diagonalizable.
Collapsing the distributed delay $\sum_{s\in Q_t}\xi(t-\tau_s)$
to a single worst-case lag further reduces the distributed buffer to a
conservative point-delay bound: for constant queue length $\sigma$,
$\sum_{s\in Q}|v(t-\tau_s)| \leq \sigma\sup_{s}|v(t-\tau_s)|$, so
the point-delay scalar envelope dominates the multi-lag system.
The resulting polynomial is:
\begin{equation}
\label{eq:disc_char}
z^{\sigma+1} - (1-\eta\mu_{\min})z^{\sigma} + \eta\sigma\|L_{\mathrm{IGT}}\| = 0.
\end{equation}
\end{lemma}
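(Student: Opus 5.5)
The plan is to avoid any genuine continuous-to-discrete root-tracking and instead argue directly on the envelope polynomial~\eqref{eq:disc_char}. The argument is a discrete analogue of the modulus argument in Lemma~\ref{lem:char_eq}. The continuous hypotheses (stability with rate $\gamma$, and the bound $\eta\le\eta^*$) will be used in two places: to fix the sign and size of $1-\eta\mu_{\min}$, and to import the restoring-dominates-coupling inequality $\mu_{\min}>\sigma\|L_{\mathrm{IGT}}\|$.

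\textbf{Step 1: reduce the discrete system to the scalar envelope.} I will repeat the argument of Lemma~\ref{lem:dde_scalar} at the level of iterates. Take norms in the Euler recursion $\xi_{k+1}=(I-\eta H_F)\xi_k+\eta L_{\mathrm{IGT}}\sum_{s\in Q_k}\xi_{k-\tau_s}$. Since $\eta\le 1/\|H_F\|$, the matrix $I-\eta H_F$ is symmetric positive semidefinite with norm at most $1-\eta\mu_{\min}$. This gives
\[
v_{k+1}\le(1-\eta\mu_{\min})v_k+\eta\|L_{\mathrm{IGT}}\|\sum_{s\in Q_k}v_{k-\tau_s},\qquad v_k=\|\xi_k\|.
\]
I then bound the multi-lag sum by $\sigma$ times the worst lag, as in the scope paragraph. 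A discrete comparison principle then applies: the right-hand side is monotone in past values because all coefficients are nonnegative. Hence $v_k$ is dominated by the solution of the linear recursion whose characteristic polynomial is~\eqref{eq:disc_char}, and it suffices to place all roots of~\eqref{eq:disc_char} strictly inside the unit disk.

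\textbf{Step 2: root location by a modulus bound.} Suppose some root satisfies $|z|\ge 1$. Rewriting~\eqref{eq:disc_char} as $z^{\sigma}\bigl(z-(1-\eta\mu_{\min})\bigr)=-\eta\sigma\|L_{\mathrm{IGT}}\|$ and taking moduli gives
\[
\eta\sigma\|L_{\mathrm{IGT}}\|=|z|^{\sigma}\,|z-(1-\eta\mu_{\min})|\ge |z|-(1-\eta\mu_{\min})\ge \eta\mu_{\min}.
\]
Here I use $0\le 1-\eta\mu_{\min}<1$, which follows from $\eta\le 1/\|H_F\|\le 1/\mu_{\min}$. The chain forces $\mu_{\min}\le\sigma\|L_{\mathrm{IGT}}\|$. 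This contradicts the second condition of~\eqref{eq:stab_cond}, so every root satisfies $|z|<1$.

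\textbf{Step 3: role of the second term of $\eta^*$.} I will add a consistency check explaining the bound $2\gamma/(\gamma^2+\omega_{\max}^2)$. Forward Euler maps a continuous root $\lambda=\alpha+j\omega$ to $z\approx 1+\eta\lambda$. The condition $|1+\eta\lambda|^2<1$ is equivalent to $\eta<2|\alpha|/(\alpha^2+\omega^2)$, and this holds uniformly once $\eta\le 2\gamma/(\gamma^2+\omega_{\max}^2)$. So the stated $\eta^*$ also keeps the $O(\eta)$-perturbed images of the continuous roots inside the disk. This links the conclusion to the continuous rate $\gamma$, but it is not logically needed once Step 2 is in hand.

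\textbf{Main obstacle.} I expect the hardest step to be justifying that $\mu_{\min}>\sigma\|L_{\mathrm{IGT}}\|$ is actually available from the hypothesis ``the DDE is stable with rate $\gamma$''. Stability of the continuous equation alone does not obviously imply this dominance inequality. I would first take it from the standing assumption $\sigma\le\sigma_{\max}<1/\beta$ of Proposition~\ref{prop:dde_stability}, under which the continuous certificate was proved. A secondary point needing care is the discrete comparison principle in Step 1. Its monotonicity requires $1-\eta\mu_{\min}\ge 0$, which is exactly why the first term $1/\|H_F\|$ appears in $\eta^*$.
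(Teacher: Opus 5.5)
Your proof is correct once the dominance inequality is granted, and it takes a genuinely different route from the paper's.

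\textbf{The paper's route.} The paper argues perturbatively from $\eta=0$, where~\eqref{eq:disc_char} reduces to $z^{\sigma}(z-1)$.
\begin{itemize}
\item It tracks the principal root as $z_0(\eta)=1+\eta\lambda^{\mathrm{c}}+O(\eta^2)$, with $\lambda^{\mathrm{c}}=-\gamma+j\omega$ a continuous root.
\item It expands $|z_0|^2\approx 1-2\eta\gamma+\eta^2(\gamma^2+\omega^2)$, which produces the second term of $\eta^*$.
\item It uses $\eta\le 1/\|H_F\|$ only as a guard against the top eigenmode reaching $z=-1$.
\item It asserts that the $\sigma$ parasitic roots, which start at $z=0$, stay inside the disk by continuity for small $\eta$.
\end{itemize}
This explains where $2\gamma/(\gamma^2+\omega_{\max}^2)$ comes from and ties the discrete threshold to the continuous rate $\gamma$. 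However, it is only first order: the discarded $O(\eta^2)$ in $z_0$ is the same order as the $\eta^2(\gamma^2+\omega^2)$ term that defines the threshold. It also never shows that $\eta^*$ lies below the step at which a parasitic root reaches $|z|=1$.

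\textbf{Your route.} Your modulus bound is the standard $|a|+|b|<1$ criterion for $z^{\sigma+1}-az^{\sigma}\pm b$.
\begin{itemize}
\item It is nonperturbative and controls all $\sigma+1$ roots at once for every $\eta\le 1/\|H_F\|$.
\item It makes the second term of $\eta^*$ superfluous.
\item The cost is the hypothesis $\mu_{\min}>\sigma\|L_{\mathrm{IGT}}\|$, so what you prove is inheritance of the certificate~\eqref{eq:stab_cond} rather than inheritance of the root locations.
\end{itemize}

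\textbf{Closing your main obstacle.} It closes if you read the continuous hypothesis on the scalar envelope, which the scope paragraph says is what~\eqref{eq:disc_char} represents. The function $h(\lambda)=\lambda+\eta\mu_{\min}-\eta\sigma\|L_{\mathrm{IGT}}\|e^{-\lambda\bar\tau}$ is strictly increasing on $\R$ with $h(0)=\eta(\mu_{\min}-\sigma\|L_{\mathrm{IGT}}\|)$. So $\mu_{\min}\le\sigma\|L_{\mathrm{IGT}}\|$ would give a real root $\lambda\ge 0$, contradicting $\mathrm{Re}(\lambda)\le-\gamma$. For the vector system~\eqref{eq:dde_linear} there is no such implication: in the commuting case, stability only requires $\mu_i>\sigma\ell_i$ mode by mode. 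There you must import $\sigma_{\max}<1/\beta$ from Proposition~\ref{prop:dde_stability}, as you proposed.

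\textbf{Two repairs in Step~1.}
\begin{itemize}
\item A nonnegative-coefficient comparison recursion $u_{k+1}=(1-\eta\mu_{\min})u_k+\eta\sigma\|L_{\mathrm{IGT}}\|u_{k-\sigma}$ has characteristic polynomial with $-\eta\sigma\|L_{\mathrm{IGT}}\|$, not the $+$ of~\eqref{eq:disc_char}.
\item Bounding the lag sum by $\sigma$ times the worst lag gives a max-type recursion, which the point-delay linear recursion does not dominate.
\end{itemize}
Neither hurts you. Your modulus step is insensitive to the sign of the constant term. A discrete Halanay step also does the reduction directly. Let $M_k=\max_{k-D\le j\le k}v_j$, with $D$ the largest lag. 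Then $v_{k+1}\le(1-\eta\mu_{\min}+\eta\sigma\|L_{\mathrm{IGT}}\|)M_k$, hence $M_{k+D+1}\le(1-\eta\mu_{\min}+\eta\sigma\|L_{\mathrm{IGT}}\|)M_k$. This gives geometric decay under exactly the same condition.

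\textbf{A caveat on Step~3.} The claimed uniformity fails for roots with $|\mathrm{Re}\,\lambda|\gg\gamma$, because $x\mapsto 2x/(x^2+\omega^2)$ decreases for $x>|\omega|$. You correctly mark that step as inessential, so this does not affect your argument.
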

\begin{proof}
The polynomial~\eqref{eq:disc_char} has degree $\sigma+1$ and therefore $\sigma+1$
roots (counted with multiplicity).

\textbf{Principal root.}
The first condition $\eta \leq 1/\|H_F\|$ ensures that
$\|I - \eta H_F\| = \max(|1-\eta\mu_{\min}|,\,|1-\eta\lambda_{\max}(H_F)|) \leq 1$.
Without this bound, the highest eigenmode maps to
$z = 1 - \eta\lambda_{\max}(H_F) < -1$ when $\eta > 2/\lambda_{\max}(H_F)$, causing period-2
oscillations at $z = -1$ that the scalar envelope
(which tracks only $\mu_{\min}$) cannot detect.

At $\eta = 0$, the polynomial~\eqref{eq:disc_char} reduces to
$z^{\sigma}(z-1) = 0$, placing the principal root at $z=1$.
By the implicit function theorem applied to the polynomial in $(z,\eta)$,
this root is an analytic function of~$\eta$ near $\eta=0$.
A first-order perturbation expansion yields
$z_0(\eta) = 1 + \eta\lambda^{\mathrm{c}} + O(\eta^2)$,
where $\lambda^{\mathrm{c}} = -\gamma + j\omega$ ($\gamma>0$) is the corresponding root
of the continuous characteristic equation~\eqref{eq:char_eq}.
The $O(\eta^2)$ discrepancy arises because the discrete delay operator
$z^{-\sigma}$ and the continuous operator $e^{-\lambda\tau}$ differ at
second order:
$(1+\eta\lambda)^{-\sigma} = e^{-\lambda\tau}\bigl(1 + O(\eta)\bigr)$
for fixed $\sigma$ and $\tau = \sigma\eta$.
This first-order agreement is commensurate with the Euler method's
$O(\eta^2)$ local truncation error, so the stability threshold
derived from the first-order location is accurate to the same order.
Setting $z_0 \approx (1 - \eta\gamma) + j\eta\omega$,
\[
|z_0|^2
\approx (1-\eta\gamma)^2 + \eta^2\omega^2
= 1 - 2\eta\gamma + \eta^2(\gamma^2 + \omega^2).
\]
For $|z_0|^2 < 1$ we need $\eta < 2\gamma/(\gamma^2+\omega^2)$.
Since we require this for all characteristic roots,
the most restrictive constraint is at the root with largest $|\omega|$,
giving $\eta < 2\gamma/(\gamma^2 + \omega_{\max}^2)$.
Combined with the $z=-1$ guard, the composite threshold
$\eta^*$ in~\eqref{eq:disc_eta_bound}
is a \emph{sufficient} condition ensuring $|z_0| < 1$.

\textbf{Parasitic roots.}
The remaining $\sigma$ roots of~\eqref{eq:disc_char} have no continuous-time
counterparts and are artifacts of the discretization.
By Schur--Cohn theory \cite{jury1964theory}, at $\eta=0$
the polynomial reduces to $z^{\sigma+1} - z^{\sigma} = z^{\sigma}(z-1) = 0$,
whose roots are $z=1$ (principal, mapping to $\lambda^{\mathrm{c}}=0$) and
$z=0$ with multiplicity $\sigma$ (all strictly inside the unit circle).
As $\eta$ increases from zero, these roots move continuously.
By the implicit function theorem, for $\eta$ sufficiently small
the parasitic roots remain inside the unit circle, since they
start at $z=0$ and can only reach $|z|=1$ at a finite $\eta_{\mathrm{crit}} > 0$.
The threshold $\eta^*$ above is chosen small enough that no parasitic root
has reached the unit circle.
The exact critical step size $\eta_{\mathrm{crit}}$ at which a parasitic
root first touches $|z|=1$ depends on $\sigma$ and the polynomial
coefficients; $\eta^*$ is a conservative sufficient bound that
simultaneously controls both principal and parasitic roots.

With the adaptive schedule $\eta_t \leq \eta_0 \leq 1/(\|H_F\|\sqrt{1+\beta\sigma_{\max}})$
(from Proposition~\ref{prop:dde_stability}), both conditions in~\eqref{eq:disc_eta_bound}
are satisfied:
the first because $1/(\|H_F\|\sqrt{1+\beta\sigma_{\max}}) \leq 1/\|H_F\|$,
and the second by the principal-root modulus argument above.
Substituting $\gamma = c\eta_0/\sqrt{1+\beta\sigma_{\max}}$
(from Proposition~\ref{prop:dde_stability}, Step~5)
into~\eqref{eq:disc_eta_bound}, the explicit delay-dependent threshold is
\begin{equation}
\label{eq:eta_discrete}
\eta \;\leq\; \min\!\left(
  \frac{1}{\|H_F\|},\;
  \frac{2c\,\eta_0}{c^2\eta_0^2 + \omega_{\max}^2(1+\beta\sigma_{\max})}
\right),
\end{equation}
which tightens with increasing $\sigma_{\max}$ --- confirming that
the discrete stability region is strictly contained in the
continuous one and that the \emph{discretization penalty} scales
with the delay.
Hence all $|z| < 1$ uniformly.
\end{proof}

\begin{lemma}[Delayed-coordinate discretization error]
\label{lem:delay_disc}
The discrete algorithm uses the stored iterate $\theta^{\mathrm{d}}_{k-\sigma_k}$ to
approximate the continuous delayed term $\theta^{\mathrm{c}}(t_k - \tau_k)$.
The additional error from this delay discretization satisfies
\begin{equation}
\label{eq:delay_disc_error}
\bigl\|\theta^{\mathrm{d}}_{k - \sigma_k} - \theta^{\mathrm{c}}(t_k - \tau_k)\bigr\|
\;\leq\; \|e_{k-\sigma_k}\| + O(\eta_{\max}\sigma_{\max}).
\end{equation}
\end{lemma}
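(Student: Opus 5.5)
The bound splits into two pieces with a triangle inequality through the intermediate point $\theta^{\mathrm{c}}(t_{k-\sigma_k})$:
\[
\bigl\|\theta^{\mathrm{d}}_{k-\sigma_k}-\theta^{\mathrm{c}}(t_k-\tau_k)\bigr\|
\le \bigl\|\theta^{\mathrm{d}}_{k-\sigma_k}-\theta^{\mathrm{c}}(t_{k-\sigma_k})\bigr\|
+\bigl\|\theta^{\mathrm{c}}(t_{k-\sigma_k})-\theta^{\mathrm{c}}(t_k-\tau_k)\bigr\|.
\]
By definition of the global discretization error, the first term is exactly $\|e_{k-\sigma_k}\|$. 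This is already controlled by Lemma~\ref{lem:gronwall} and needs no further work here. The remaining task is to show the second term is $O(\eta_{\max}\sigma_{\max})$.

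For the second term, use a Lipschitz-in-time bound on the continuous trajectory: $\|\theta^{\mathrm{c}}(a)-\theta^{\mathrm{c}}(b)\|\le \sup_t\|\dot\theta^{\mathrm{c}}(t)\|\,|a-b|$. The velocity bound comes from the remark on finiteness of $M$: $\|\dot\theta^{\mathrm{c}}\|\le L_f\sup_t\|\theta^{\mathrm{c}}(t)-\theta^*\|$. In the stable regime of Proposition~\ref{prop:dde_stability} the deviation is nonincreasing up to the Razumikhin constant, so this supremum is bounded by a constant times the initial deviation $\|\xi(0)\|$. Call the resulting velocity constant $M_1$.

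It remains to bound the time mismatch $|t_{k-\sigma_k}-(t_k-\tau_k)|$. In the active-window embedding of Lemma~\ref{lem:queue_vs_delay}, the continuous delay $\tau_k$ represents the span of the $\sigma_k$ most recent update events. Concretely, $\tau_k=\sum_{j=k-\sigma_k+1}^{k}\eta_j$, or $\tau_k=\sigma_k\bar\eta$ for a representative step size. In the first case $t_k-\tau_k=t_{k-\sigma_k}$ exactly and the mismatch vanishes. In the second case $|t_k-t_{k-\sigma_k}-\tau_k|\le \sigma_k\,\max_j|\eta_j-\bar\eta|\le \sigma_{\max}\eta_{\max}$. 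Either way the mismatch is at most $\sigma_{\max}\eta_{\max}$, and the second term is bounded by $M_1\sigma_{\max}\eta_{\max}$.

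The main obstacle is making the definition of $\tau_k$ precise enough that this mismatch bound is legitimate. The DDE in Proposition~\ref{prop:dde_stability} uses delays $\tau_s$ that are not a priori tied to the discrete step sizes. I would therefore fix the embedding convention explicitly, either as cumulative step-size time or as $\sigma\eta$, consistent with Lemma~\ref{lem:queue_vs_delay}. I would also note that a crude triangle bound without this convention gives $\tau_k\le\sigma_{\max}\eta_{\max}$ and $t_k-t_{k-\sigma_k}\le\sigma_{\max}\eta_{\max}$, so the mismatch is at most $2\sigma_{\max}\eta_{\max}$ regardless. That still yields $O(\eta_{\max}\sigma_{\max})$, which is all the statement claims.
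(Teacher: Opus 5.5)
Your proposal is correct and follows the paper's proof essentially step for step: the same triangle inequality through $\theta^{\mathrm{c}}(t_{k-\sigma_k})$, the observation that the time mismatch vanishes when $\tau_k=\sum_{s=k-\sigma_k+1}^{k}\eta_s$, and otherwise an $O(\sigma_{\max}\eta_{\max})$ mismatch multiplied by the velocity bound $(\|H_F\|+\sigma_{\max}\|L_{\mathrm{IGT}}\|)\|\xi\|_\infty$. Your explicit treatment of the embedding convention for $\tau_k$, including the crude $2\sigma_{\max}\eta_{\max}$ fallback, is if anything slightly more careful than the paper's.
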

\begin{proof}
By the triangle inequality:
\[
\|\theta^{\mathrm{d}}_{k-\sigma_k} - \theta^{\mathrm{c}}(t_k - \tau_k)\|
\leq \underbrace{\|\theta^{\mathrm{d}}_{k-\sigma_k} - \theta^{\mathrm{c}}(t_{k-\sigma_k})\|}_{= \|e_{k-\sigma_k}\|}
   + \underbrace{\|\theta^{\mathrm{c}}(t_{k-\sigma_k}) - \theta^{\mathrm{c}}(t_k - \tau_k)\|}_{\text{time-alignment error}}.
\]
The time-alignment error arises because the discrete algorithm uses a fixed integer
lag $\sigma_k$, while the continuous \gls{dde} uses the exact delay $\tau_k$.
Recall that $t_{k-\sigma_k} = \sum_{s=1}^{k-\sigma_k} \eta_s$ and
$t_k - \tau_k = \sum_{s=1}^k \eta_s - \tau_k$.
In the Euler embedding, $\tau_k = \sum_{s=k-\sigma_k+1}^{k} \eta_s$
(the $\sigma_k$ steps' worth of continuous time that elapse between dispatch and receipt),
so $t_{k-\sigma_k} = t_k - \sum_{s=k-\sigma_k+1}^k \eta_s = t_k - \tau_k$ exactly.
Hence $\|\theta^{\mathrm{c}}(t_{k-\sigma_k}) - \theta^{\mathrm{c}}(t_k-\tau_k)\| = 0$
when the discrete delay embedding is exact.

In general, however, the continuous delay $\tau_k$ may not align perfectly
with $\sum_{s=k-\sigma_k+1}^k \eta_s$ due to time-varying step sizes:
a gradient dispatched during one step
may correspond to a slightly different continuous-time interval.
The mismatch is bounded by the variation of the step size over $\sigma_k$ steps:
$|t_{k-\sigma_k} - (t_k-\tau_k)| \leq \sigma_k |\eta_{\max}-\eta_{\min}|
 \leq \sigma_{\max}\eta_{\max}|\Delta_\beta|$,
where $|\Delta_\beta| = |1 - \sqrt{(1+\beta\sigma_{\max})/(1+\beta\sigma_{\min})}| \leq 1$.
Since $\|\dot\theta^{\mathrm{c}}\| \leq C$ (bounded in the stable regime by
$C = (\|H_F\| + \sigma_{\max}\|L_{\mathrm{IGT}}\|)\|\xi\|_\infty$),
the Lipschitz continuity of $\theta^{\mathrm{c}}$ gives
$\|\theta^{\mathrm{c}}(t_{k-\sigma_k}) - \theta^{\mathrm{c}}(t_k - \tau_k)\| \leq C\sigma_{\max}\eta_{\max}$,
yielding~\eqref{eq:delay_disc_error}.
\end{proof}

\paragraph{Main proof of Proposition~\ref{prop:discrete_consistency}.}
\begin{proof}[Proof of Proposition~\ref{prop:discrete_consistency}]

\textbf{Step 1: Algorithm is Euler discretization of \gls{dde}.}
The update rule of Algorithm~\ref{alg:igt_omd} at step $k$ is
\[
\theta^{\mathrm{d}}_{k+1}
= \theta^{\mathrm{d}}_k - \eta_k\,g_k^{\mathrm{IGT}}
= \theta^{\mathrm{d}}_k
  + \eta_k\Bigl(-H_F\theta^{\mathrm{d}}_k
    + L_{\mathrm{IGT}}\!\sum_{s\in Q_k}\theta^{\mathrm{d}}_{k-\sigma_s}\Bigr)
  + \eta_k\epsilon_k,
\]
where $\epsilon_k$ collects higher-order and nonlinear terms (inner-solver errors and
IGT transport residuals).  This forward Euler step is applied to the
linearized \gls{dde}~\eqref{eq:dde_linear} with the discrete delayed coordinates in place
of the continuous ones, plus a perturbation $\epsilon_k$ with
$\|\epsilon_k\| \leq G\eta_k + L_f\eta_k^2$ (from Lemmas~\ref{lem:lte}
and~\ref{lem:delay_disc}).
Following \cite{yu2025role} (equations (3)--(5)), the continuous-time \gls{dde} and
its discrete-time Euler--Maruyama counterpart maintain a one-to-one correspondence
in which step staleness $\sigma_k$ maps to the continuous delay $\tau$ via
$\tau = \sigma_k\eta_k$, confirming Algorithm~\ref{alg:igt_omd} as a valid
Euler discretization of~\eqref{eq:dde_linear}.

\textbf{Step 2: Global approximation error bound (stability-aware).}
Combining Lemma~\ref{lem:lte} (local truncation) with Lemma~\ref{lem:delay_disc}
(delay coordinate alignment), the total local error per step is
$\delta_k \leq \frac{\eta_k^2}{2}M + C\eta_k^2\sigma_{\max}L_f = O(\eta_k^2)$.
A naive Gronwall argument would give $\|e_k\| = O(\eta_{\max} e^{L_f t_T}/L_f)$
with $L_f t_T = O(\sqrt{T})$, yielding a divergent factor $e^{O(\sqrt{T})}$.

Lemma~\ref{lem:gronwall} avoids this by exploiting the stability of the
linearised system: the error recurrence $e_{k+1} = (I - \eta_k H_F)e_k +
\eta_k L_{\mathrm{IGT}} e_{k-\sigma_k} + \delta_k$ is itself a contraction
whenever the conditions of Proposition~\ref{prop:dde_stability} hold.
Applying the stability-aware bound from Lemma~\ref{lem:gronwall}:
\[
\sup_k \|e_k\| = \|\theta^{\mathrm{d}}_k - \theta^{\mathrm{c}}(t_k)\|
\;\leq\;
\frac{(M + 2C\sigma_{\max}L_f)\eta_{\max}^2}{2\mu_{\min}}
\;=\; O(\eta_{\max}),
\]
since $\mu_{\min} = \lambda_{\min}(H_F) > 0$ provides the contraction
rate for the error dynamics.
With $\eta_{\max} = O(1/\sqrt{T})$, the tracking error is $O(1/\sqrt{T}) \to 0$.
\emph{This is a uniform tracking bound}: the discrete trajectory stays within
$O(\eta_{\max})$ of the continuous \gls{dde} trajectory at all times.
Convergence of $\theta^{\mathrm{d}}_k$ to $\theta^*$ is established separately
in Step~3 via discrete stability; the $O(\eta_{\max})$ tracking guarantee
ensures the discrete and continuous solutions share the same equilibrium
and qualitative dynamics.

\textbf{Step 3: Stability inheritance.}
By Proposition~\ref{prop:dde_stability}, the \gls{dde} is asymptotically stable with
convergence rate $\gamma = c\eta_0/\sqrt{1+\beta\sigma_{\max}}$ (from Step 5
of the \gls{dde} stability proof).

\emph{Time-varying convergence via tracking.}
Convergence of the discrete iterates $\theta^{\mathrm{d}}_k \to \theta^*$ under
arbitrarily varying $\sigma(t)$ follows from
Steps~2 and~4 without the z-domain: the continuous trajectory converges
($\theta^{\mathrm{c}}(t)\to\theta^*$ by the Razumikhin argument in
Proposition~\ref{prop:dde_stability}, which handles time-varying delays),
and the tracking bound $\|\theta^{\mathrm{d}}_k - \theta^{\mathrm{c}}(t_k)\|
= O(\eta_{\max}) \to 0$ (Step~2, no constant-$\sigma$ assumption needed)
guarantees the discrete trajectory inherits this convergence.

\emph{Convergence rate under worst-case frozen delay.}
By Lemma~\ref{lem:stability_inherit}, both the principal discrete root and all
$\sigma$ parasitic roots of the characteristic polynomial~\eqref{eq:disc_char}
satisfy $|z| < 1$ whenever
$\eta_t \leq 2\gamma/(\gamma^2 + \omega_{\max}^2)$.
Since the characteristic polynomial requires a fixed integer delay,
the z-domain rate characterization formally applies to the
worst-case frozen queue length $\sigma_{\max}$;
the time-varying case inherits convergence from the
tracking argument above.
Since the step size satisfies
$\eta_t = \eta_0/\sqrt{1+\beta\sigma(t)} \leq \eta^*$ (verified for each $t$ by the
same condition that ensures \gls{dde} stability), all discrete roots have modulus $<1$,
so $\theta^{\mathrm{d}}_k \to \theta^*$ as $k\to\infty$.

\textbf{Step 4: Convergence rate of discrete iterates.}
From Steps 2 and 3, the discrete error decomposes as
\[
\|\theta^{\mathrm{d}}_k - \theta^*\|
\;\leq\;
\underbrace{\|\theta^{\mathrm{d}}_k - \theta^{\mathrm{c}}(t_k)\|}_{O(\eta_{\max})}
\;+\;
\underbrace{\|\theta^{\mathrm{c}}(t_k) - \theta^*\|}_{O(e^{-\gamma t_k})}.
\]
The second term decays exponentially at the continuous rate $\gamma$.
The first term is bounded uniformly by $O(\eta_{\max})$ via the stability-aware
Gronwall bound (Step~2), which \emph{decays} as $\eta_{\max} = O(1/\sqrt{T}) \to 0$.
In terms of the step count $k$: since $t_k \geq k\bar\eta$ with
$\bar\eta = \eta_0/\sqrt{1+\beta\sigma_{\max}}$:
\[
\|\theta^{\mathrm{d}}_k - \theta^*\|
\;\leq\; O\!\left(e^{-\bar\eta\gamma k}\right) + O(\eta_{\max}).
\]
Setting $k = T$ and $\eta_{\max} = 1/\sqrt{T}$ gives
$\|\theta^{\mathrm{d}}_T - \theta^*\| \leq O(e^{-\bar\eta\gamma T}) + O(1/\sqrt{T})
\to 0$.
The transient phase is governed by the continuous-time eigenvalue $\gamma$,
while the residual tracking error decays at the rate $O(1/\sqrt{T})$ set by
the step size.
The actual discrete convergence rate is determined by the discrete eigenvalues
$|z_{\max}| = 1 - \eta\gamma + O(\eta^2) < 1$ from
Lemma~\ref{lem:stability_inherit}. \hfill$\square$
\end{proof}

\begin{remark}[Euler vs.\ higher-order methods]
The $O(\eta_{\max}^2)$ local truncation error and stability-aware global bound
$O(\eta_{\max})$ are characteristic of the \emph{first-order} Euler method
applied to a contractive system.
Using a higher-order integrator (e.g.\ Runge-Kutta on the \gls{dde}) would reduce
the local error to $O(\eta_{\max}^{p+1})$ for a method of order $p$, giving a global
bound of $O(\eta_{\max}^p)$ under the same stability-aware analysis.
For online optimization with $\eta_{\max} = O(1/\sqrt{T})$ and $p=1$, the $O(1/\sqrt{T})$
global bound is already compatible with the regret rate --- the tracking error
vanishes at the same rate as the step size, so first-order Euler
(which is Algorithm~\ref{alg:igt_omd}'s gradient-descent structure) is sufficient.
\end{remark}

\begin{remark}[Adaptive step size and the stability margin]
The adaptive schedule $\eta_t = \eta_0/\sqrt{1+\beta\sigma(t)}$ serves a dual role:
(i) it satisfies the stability threshold $\eta_t < 2\gamma/(\gamma^2+\omega_{\max}^2)$
from Lemma~\ref{lem:stability_inherit}, and
(ii) it enters the Gronwall bound~\eqref{eq:gronwall} only through $\eta_{\max}$,
so adaptivity does not worsen the global approximation error.
When $\sigma(t)=0$ (idle rounds with no outstanding feedback), $\eta_t = \eta_0$
recovers the non-delayed step size, confirming that idle time is free
(no additional discretization cost).
\end{remark}

\section{Experimental Details}\label{app:exp_details}

This section presents computational infrastructure, our selection of hyperparameters, and additional details on our experiments.

\subsection{Computational Infrastructure}
\label{app:compute}

All experiments run on NVIDIA RTX A5500 GPUs (24\,GB VRAM) using PyTorch~2.0~\citep{paszke2019pytorch} on a shared academic cluster. LQR experiments require approximately 2--4 hours per delay value (10 seeds); Warcraft experiments require approximately 8--12 hours per delay configuration. The Adam+\gls{igt} controlled experiment (5 seeds $\times$ 5 delays) requires approximately 10 hours total. Total computational budget: roughly 200--250 GPU-hours for all reported results. Preliminary and exploratory runs not reported in the paper required an additional 150--200 GPU-hours.

\subsection{LQR Hyperparameters}
\label{app:lqr_details}

\begin{table}[H]
\centering
\caption{LQR experiment hyperparameters.}
\label{tab:lqr_hparams}
\small
\begin{tabular}{@{}ll@{}}
\toprule
\textbf{Parameter} & \textbf{Value} \\
\midrule
State / control / param dims & $n_x=10$,\; $n_u=3$,\; $p=130$ \\
Cost matrices & $Q=I_{10}$,\; $R=0.1\,I_3$ \\
Learned dynamics & $\theta=[\hat A\;\hat B]\in\R^{10\times13}$; $\hat A\in\R^{10\times10}$, $\hat B\in\R^{10\times3}$ \\
True dynamics & $A_{\rm true}$ normalized stable; $B_{\rm true}\sim 0.5\,\mathcal{N}(0,I)$  \\
Process noise & $\Sigma_w = 0.01\,I_{10}$ \\
Horizon / rounds & one-step LQR proxy; $T=1{,}000$ update steps \\
Batch size & 1 (fully online) \\
Inner solver & $K=10$ GD steps, $\eta_w=0.01$, warm-started \\
Outer $\eta_0$ (grid search) & $\{0.001,\,0.005,\,0.01,\,0.05\}$ \\
Damping coefficient & $\beta = 1.0$ \\
Seeds & 10 random seeds \\
\bottomrule
\end{tabular}
\end{table}

Code and data will be released upon acceptance.

\subsection{Sinkhorn Error-Scaling Experiment (Additional Details)}
\label{app:sinkhorn_scaling}

This section presents the full transport-error scaling results for the Sinkhorn OT benchmark, complementing the main-text discussion (Section~\ref{sec:exp_sinkhorn}) with per-algorithm regression statistics.

\subsection{Sinkhorn OT Hyperparameters}
\label{app:sinkhorn_details}

\begin{table}[h]
\centering
\caption{Sinkhorn optimal transport experiment hyperparameters. Adam+\gls{igt} refers to Section~\ref{sec:exp_adam_igt}.}
\label{tab:sinkhorn_hparams}
\small
\begin{tabular}{@{}ll@{}}
\toprule
\textbf{Parameter} & \textbf{Value} \\
\midrule
Cost matrix size / features & $n=10$,\; feature dim $20$ \\
Entropic regularization & $\varepsilon=0.05$ (ensures $\mu_w=\varepsilon$ strong convexity) \\
Neural network & 2-layer MLP: $20\to128$ (ReLU)$\to100$ \\
Transport dimensions & effective $p{=}100$ Hessian-vector products;
$q{=}100$ coupling variables \\
Inner solver & $K=10$ log-space Sinkhorn;\; $\epsilon_{\mathrm{inner}}\approx\rho^{10}$ \\
Rounds / seeds & $T=2{,}000$;\; 5 seeds ;\; batch=1 \\
Environment drift (OU) & $\gamma_{\text{ou}}=0.05$ (Ornstein--Uhlenbeck mean-reversion) \\
Delay (main) & Constant $d\in\{1,5,10,20,50\}$ \\
Algorithm~1 (SGD) & $\eta_0=0.05$,\; $\beta=1.0$,\; no momentum \\
Baselines (Adam) & $\eta=10^{-3}$,\; $\beta_1=0.9$,\; $\beta_2=0.999$ \\
Adam+\gls{igt} & Adam (same),\; clip$=1.0$,\; $\beta=0$;\; $T=1{,}000$,\; 5~seeds \\
\bottomrule
\end{tabular}
\end{table}

\textbf{Extended setup.} We use the Sinkhorn OT environment ($n{=}10$, $K{=}10$ inner steps, regularization $\varepsilon{=}0.05$, OU drift $\gamma_{\text{ou}}{=}0.05$) with constant delays $d \in \{1, 2, 5, 10, 20, 50\}$, $T{=}1{,}000$ rounds, and $5$ seeds per condition. Algorithms: IGT-OMD (Algorithm~\ref{alg:igt_omd}), D-FTRL, Robust~OMD, and Stale~\gls{omd}.

\textbf{Results.}\; Figure~\ref{fig:transport_scaling} in the main manuscript and Table~\ref{tab:sinkhorn_transport} verify the $\sigma_{\max}$-factor prediction of Theorem~\ref{thm:staleness_amplification}(c). We compute two transport error surrogates: $R_{\mathrm{sq}} = \sum_{t} \norm{\theta_t - \theta_{t-d}}^2$ (squared total drift) and $R_3 = \sum_{t}\sum_{s \in Q_t}\norm{\theta_{s+1}-\theta_s}^2$ (sum of per-step squared drifts).


\begin{table}[h]
\centering
\caption{Sinkhorn OT transport error scaling. $R_{\mathrm{sq}}$: squared total drift; $R_3$: sum of per-step squared drifts; ratio $\approx \sigma_{\max}$ for all algorithms.}
\label{tab:sinkhorn_transport}
\small
\begin{tabular}{@{}lcccccc@{}}
\toprule
\textbf{Algorithm} & $d{=}1$ & $d{=}2$ & $d{=}5$ & $d{=}10$ & $d{=}20$ & $d{=}50$ \\
\midrule
\multicolumn{7}{@{}l}{\emph{Ratio $R_{\mathrm{sq}}/R_3$:}} \\
IGT-OMD & $1.00$ & $2.00$ & $4.99$ & $9.98$ & $19.9$ & $49.3$ \\
D-FTRL & $1.00$ & $2.00$ & $5.00$ & $9.99$ & $19.9$ & $49.4$ \\
Robust OMD & $1.00$ & $2.00$ & $5.00$ & $9.99$ & $19.9$ & $49.4$ \\
Stale OMD & $1.00$ & $2.00$ & $5.00$ & $9.99$ & $19.9$ & $49.4$ \\
\bottomrule
\end{tabular}
\end{table}

\textbf{Practical considerations.}\; The per-round cost of the transport re-evaluation loop (lines~16--18 of Algorithm~\ref{alg:igt_omd}) is $O(\sigma_{\max}\,p\,q)$: for each of the $\sigma_{\max}$ queued feedbacks, the adjoint product $[\nabla_\theta\nabla_w\mathcal{L}_{\text{model}}]^\top v_s^{*}$ requires one Hessian-vector product. In our Sinkhorn environment ($p{=}100$, $q{=}100$), the wall-clock overhead is approximately $1.1$\,seconds per transport re-evaluation at $d{=}50$, adding roughly $18$ minutes at $T{=}1{,}000$.

\textbf{Takeaway.}\; The near-perfect $R^2 > 0.99$ across all four algorithms confirms that the $O(\sigma^2)$ vs.\ $O(\sigma)$ scaling is a geometric identity of the bilevel transport problem, not an artifact of any specific optimizer, providing strong empirical support for Theorem~\ref{thm:staleness_amplification}(c).

\subsection{Warcraft Shortest Path (Additional Details)}
\label{app:warcraft}
This section provides the extended Warcraft setup and results that complement Table~\ref{tab:warcraft_main} in the main text.

\textbf{Warcraft Dataset}

The Warcraft shortest-path dataset was introduced by \citet{vlastelica2019differentiation} and uses map tiles from Warcraft~II, distributed for research purposes under the academic use terms of that work. PyTorch~\citep{paszke2019pytorch} is used under the BSD 3-Clause license. The LQR and Sinkhorn environments are fully synthetic and generated by our code; no third-party data licenses apply.

\begin{table}[h]
\centering
\caption{Warcraft shortest-path experiment hyperparameters.}
\label{tab:warcraft_hparams}
\small
\begin{tabular}{@{}ll@{}}
\toprule
\textbf{Parameter} & \textbf{Value} \\
\midrule
Grid / terrain & $12\times12$; grass~1, forest~2, water~5, mountain~10 \\
Maps & 50 Warcraft~II maps \\
Neural network & 2-layer MLP: $144\to128$ (ReLU)$\to144$ \\
Outer params (trained) & $\theta\in\R^{128}$ (hidden-layer column subset) \\
Inner solver & Dijkstra (exact; $\epsilon_{\mathrm{inner}}=0$) \\
Rounds & $T=5{,}000$; start/goal sampled on grid edges \\
Environment drift & Ornstein--Uhlenbeck on edge costs (matches \S\ref{sec:exp_warcraft}) \\
Delay model & Constant $d\in\{0,10,50,100\}$ and Poisson $\lambda\in\{10,25,50,100\}$ \\
Outer optimizer & Adam ($\eta{=}10^{-3}$, $\beta_1{=}0.9$, $\beta_2{=}0.999$); IGT damping $\beta{=}1.0$ \\
Seeds & 10 random seeds \\
\bottomrule
\end{tabular}
\end{table}

\textbf{Extended setup.}\; We study shortest-path planning on $12\times 12$ grids derived from Warcraft~II maps with four terrain types (grass, forest, water, mountain). The outer parameters $\theta \in \R^{128}$ parameterize a 2-layer neural network predicting per-cell traversal costs $c_{\mathrm{pred}}(i,j;\theta)$. The inner problem runs Dijkstra's algorithm to find the shortest path $\pi^*(\theta) = \argmin_{\pi} \sum_{(i,j)\in\pi} c_{\mathrm{pred}}(i,j;\theta)$. The decision loss evaluates the chosen path under true terrain costs: $\mathcal{L}_{\text{true}} = \sum_{(i,j)\in\pi} c_{\mathrm{true}}(i,j)$; the \emph{optimality gap} reports the excess cost over the oracle shortest path. Main results are in Table~\ref{tab:warcraft_main} (Section~\ref{sec:exp_warcraft}).

\textbf{Results.}\; Table~\ref{tab:warcraft_main} reports the optimality gap. IGT-OMD consistently achieves the lowest optimality gap: at constant $d{=}100$, gap $1.56$ vs.\ $1.83$ for D-FTRL ($14.8\%$ reduction) and $2.42$ for 2-Stage ($35.5\%$ reduction). Under Poisson $\lambda{=}100$, the gap is $1.53$ vs.\ $1.94$ for D-FTRL ($21.1\%$ reduction). 2-Stage and SPO+ degrade worst ($1.6$--$3.4\times$ gap relative to IGT-OMD), confirming staleness amplification (Theorem~\ref{thm:staleness_amplification}(a)). Since Dijkstra's solver yields $\epsilon_{\mathrm{inner}}{=}0$, the bilevel-specific IGT advantage (Theorem~\ref{thm:inner_loop_apathy}) is inoperative here; the smaller reduction over delay-aware methods ($12$--$21\%$) vs.\ the $\sigma_{\max}$-factor gain on Sinkhorn confirms this structural prediction. Reductions are computed as $(L_{\mathrm{baseline}}-L_{\mathrm{IGT}})/L_{\mathrm{baseline}}$.






\subsection{Summary of Experimental Evidence}

Table~\ref{tab:summary} maps each benchmark result to the theorem it validates.

\begin{table}[H]
\centering
\caption{Unified experimental summary.}
\label{tab:summary}
\small
\begin{tabular}{@{}p{2.0cm}p{3.0cm}p{3.2cm}p{3.8cm}@{}}
\toprule
\textbf{Benchmark} & \textbf{Key Result} & \textbf{Theory Validated} & \textbf{Statistical Evidence} \\
\midrule
LQR & $\eta_{\max}$ constant at $0.093$ for all $\sigma{\leq}100$ & Prop.~\ref{prop:dde_stability} (\gls{dde} stability) & $9.3\times$ over 2-Stage at $\sigma{=}100$ \\
\addlinespace
Warcraft & $17$--$27\%$ gap over D-FTRL & Thm.~\ref{thm:staleness_amplification}(a) (staleness amplification) & 10 seeds, $T{=}5{,}000$ \\
\addlinespace
Sinkhorn OT & $R_{\mathrm{sq}}/R_3 \approx \sigma_{\max}$ & Thm.~\ref{thm:staleness_amplification}(c) ($\sigma_{\max}$ factor) & $R^2 > 0.99$, 5 seeds \\
\addlinespace
Adam+\gls{igt} ($d{=}1$) & $0.0\%$ improvement (--) & $\sigma_{\max}{=}1$ vacuous transport & Designed negative control \\
\addlinespace
Adam+\gls{igt} ($d{=}50$) & $9.5\%$ improvement ($p{<}0.001$) & Thm.~\ref{thm:staleness_amplification}(c) & 5 seeds, two-sided $t$-test \\
\addlinespace
D-FTRL+IGT ($d{=}50$) & $9.8\%$ improvement ($p{<}0.001$) & Rem.~\ref{rem:optimizer_agnostic} (agnostic) & 5 seeds, two-sided $t$-test \\
\addlinespace
$2{\times}2$ factorial & Transport eliminates degradation only with Adam & Rem.~\ref{rem:trajectory_stability} & SGD+mom $+7.3\%$; Adam+\gls{igt} $p{=}0.29$ \\
\addlinespace
Sinkhorn ($K$ sweep) & Benefit stable across $K{=}1$--$50$ & Thm.~\ref{thm:inner_loop_apathy} & 6 configs, 5 seeds each \\
\bottomrule
\end{tabular}
\end{table}

\section{Additional Experiments}
\label{app:additional}

\subsection{MPC with Neural ODE Dynamics}
\label{app:mpc}

This section is intentionally appendix-only: the neural ODE dynamics model and shooting-based MPC inner solver are non-convex and fall outside Assumptions~\ref{app:ass:inner_convex} and~\ref{app:ass:bilevel_convex}. We include it as a stress test of the transport-scaling mechanism, while reserving theorem-level decision-quality claims for a non-convex extension.

\textbf{Setup.}\; We construct a bilevel MPC benchmark on the HalfCheetah locomotion task~\cite{brockman2016openai}. A two-layer neural ODE ($128$ hidden units, ${\sim}21{,}800$ parameters; $\mathrm{state\_dim}{=}17$, $\mathrm{action\_dim}{=}6$) serves as the learnable dynamics model (outer variable $\theta$); the inner problem performs shooting-based MPC over horizon $H{=}5$ via $K{=}10$ gradient steps through the differentiable dynamics rollout. Unlike the Sinkhorn benchmark, the inner solver is a nonlinear trajectory optimizer whose approximation quality is directly controlled by $K$, giving a genuine $\varepsilon_{\mathrm{inner}} > 0$ stress case. Stale~\gls{omd} is evaluated at constant delays $d \in \{0, 10, 50, 100\}$, $T{=}2{,}000$ rounds, 5~seeds.

\textbf{Results.}\; The transport error metric $R_3$ (sum-of-squares) grows linearly in delay: $R_3{=}5.65{\scriptstyle\pm 0.24}$ at $d{=}0$, $57.40{\scriptstyle\pm 1.71}$ at $d{=}10$, $280.48{\scriptstyle\pm 16.24}$ at $d{=}50$, and $547.27{\scriptstyle\pm 6.56}$ at $d{=}100$ (linear fit: slope~$5.43$, $R^2{=}0.9999$, $p{<}10^{-4}$). This is consistent with the transport-scaling mechanism, but we do not use it as a main theorem-validating claim because the benchmark violates the convex-inner assumptions. Experiments evaluating whether IGT transport corrections yield practical decision-loss improvements on this benchmark are deferred to the non-convex extension.

\textbf{Takeaway.}\; The MPC result is an appendix stress test: $R_3$ scaling persists empirically in a non-convex continuous-control setting, while theorem-level claims remain restricted to the assumption-aligned benchmarks.

\subsection{Optimizer-Agnostic Transport Bound}
\label{app:optimizer_agnostic}

This section validates that the transport error reduction is independent of the base optimizer, as claimed in the main text (Section~\ref{sec:exp_adam_igt}).

Lemma~\ref{lem:igt_transport} bounds the gradient estimation error $\norm{\bar{g}_t - g_t^{\mathrm{IGT}}}$ in terms of the iterate trajectory $\{\theta_s\}_{s \in Q_t}$ and problem constants $(L_F, L_{w\theta}, \mu_w, \epsilon_{\mathrm{inner}})$ only---it does not depend on the update rule that generated the iterates. Therefore, any optimizer that computes $g_t^{\mathrm{IGT}}$ via Algorithm~\ref{alg:igt_omd}'s re-evaluation procedure benefits from the $O(\sigma_{\max}^2\eta^2 G^2) \to O(\sigma_{\max}\eta^2 G^2)$ transport cost reduction of Theorem~\ref{thm:staleness_amplification}(c).

The end-to-end regret bound of Theorem~\ref{thm:bilevel_convergence} is stated for the OMD/SGD update rule: Step~2 of the proof applies the OMD regret lemma (Lemma~\ref{lem:omd_approx}), and Step~4 uses the uniform step-size bound $\norm{\theta_{t+1}-\theta_t} \leq \eta_0 G$ specific to projected gradient descent. Extending this to Adam requires per-coordinate step-size bounds replacing the uniform $\eta_0 G$, which we leave as future work. Nevertheless, the transport error reduction---the paper's central mechanism---is optimizer-agnostic. Empirically, Adam+\gls{igt} achieves $7.9$--$9.5\%$ lower regret at $d{\geq}20$ and D-FTRL+IGT yields a nearly identical improvement curve ($+4.7$--$9.8\%$, $p{<}0.01$ for $d{\geq}5$; Table~\ref{tab:dftrl_igt}).

\begin{table}[h]
\centering
\caption{Sinkhorn OT: D-FTRL+IGT vs.\ D-FTRL baseline ($T{=}1{,}000$, 5~seeds). Replicates the Adam+\gls{igt} pattern, confirming optimizer-agnostic transport.}
\label{tab:dftrl_igt}
\small
\begin{tabular}{@{}lccccc@{}}
\toprule
\textbf{Algorithm} & $d{=}1$ & $d{=}5$ & $d{=}10$ & $d{=}20$ & $d{=}50$ \\
\midrule
\textbf{D-FTRL+IGT (ours)} & $605{\scriptstyle\pm 5}$ & $\mathbf{581{\scriptstyle\pm 7}}$ & $\mathbf{574{\scriptstyle\pm 5}}$ & $\mathbf{571{\scriptstyle\pm 7}}$ & $\mathbf{590{\scriptstyle\pm 5}}$ \\
D-FTRL baseline & $605{\scriptstyle\pm 5}$ & $610{\scriptstyle\pm 5}$ & $616{\scriptstyle\pm 5}$ & $627{\scriptstyle\pm 6}$ & $653{\scriptstyle\pm 7}$ \\
\midrule
\textbf{Improvement (\%)} & $0.0\%$ & $+4.7\%$ & $+6.8\%$ & $+8.9\%$ & $+9.8\%$ \\
$p$-value & -- & $0.011$ & $0.001$ & $0.001$ & ${<}0.001$ \\
\bottomrule
\end{tabular}
\end{table}

\subsection{Adversarial (Uniform) Delay Validation}
\label{app:adversarial_delay}

This section validates that the IGT-OMD benefit extends beyond constant delay to stochastic delay patterns, as predicted by Theorem~\ref{thm:bilevel_convergence}.

To validate that the IGT-OMD benefit extends beyond constant delay (the worst-case special case of the adversarial bound, as discussed in Remark~\ref{rem:constant_delay}), we compare Adam+\gls{igt} vs.\ Stale~\gls{omd} under \emph{uniform} random delays $d_t \sim \mathrm{Uniform}[0, d_{\max}]$, using the same Sinkhorn OT environment as Section~\ref{sec:exp_adam_igt}. Results use $T{=}1{,}000$ rounds and $5$ seeds.

\begin{table}[h]
\centering
\caption{Adversarial (uniform) vs.\ constant delay: cumulative regret ($T{=}1{,}000$, 5~seeds). IGT benefit under uniform delay is $53$--$60\%$ of the benefit under constant delay, consistent with the theoretical prediction that uniform delays carry half the effective queue load.}
\label{tab:adversarial_delay}
\small
\begin{tabular}{@{}lcccc@{}}
\toprule
 & \multicolumn{2}{c}{\textbf{Adam+\gls{igt}}} & \multicolumn{2}{c}{\textbf{Stale OMD}} \\
\cmidrule(lr){2-3}\cmidrule(lr){4-5}
$d_{\max}$ & Constant & Uniform & Constant & Uniform \\
\midrule
$10$ & $553{\scriptstyle\pm 13}$ & $580{\scriptstyle\pm 13}$ & $589{\scriptstyle\pm 14}$ & $603{\scriptstyle\pm 12}$ \\
$20$ & $554{\scriptstyle\pm 11}$ & $577{\scriptstyle\pm 13}$ & $601{\scriptstyle\pm 14}$ & $606{\scriptstyle\pm 13}$ \\
$50$ & $568{\scriptstyle\pm 11}$ & $588{\scriptstyle\pm 11}$ & $628{\scriptstyle\pm 13}$ & $620{\scriptstyle\pm 12}$ \\
\midrule
IGT advantage & $6.2$--$9.5\%$ & $3.7$--$5.0\%$ & --- & --- \\
\bottomrule
\end{tabular}
\end{table}

\noindent Adam+\gls{igt} achieves lower regret than Stale OMD under uniform delays at all $d_{\max}$ levels. The advantage ratio (uniform/constant) of $53$--$60\%$ confirms the theoretical prediction: under $\mathrm{Uniform}[0, d_{\max}]$ delays, the expected effective queue length is $d_{\max}/2$, so the IGT benefit scales accordingly. This validates that the adversarial-setting guarantee of Theorems~\ref{thm:bilevel_convergence}--\ref{thm:inner_loop_apathy} translates to practical benefit under non-constant delay distributions.

\textbf{Takeaway.}\; The $53$--$60\%$ ratio of uniform-to-constant benefit matches the expected $d_{\max}/2$ effective queue length under uniform delays, confirming the theory's distributional predictions.

\subsection{Sensitivity to Inner-Solver Quality}
\label{app:inner_solver_sensitivity}

On the Sinkhorn benchmark, sensitivity to the number of inner steps $K$ is reported in Appendix~\ref{app:sinkhorn_inner_solver} (Table~\ref{tab:p1_epsilon}); the LQR analog is omitted because the stability-boundary experiment holds the inner configuration fixed to isolate delay handling.

\subsection{Sinkhorn Inner-Solver Sensitivity (\texorpdfstring{$\epsilon_{\mathrm{inner}}$}{epsilon-inner} via \texorpdfstring{$K$}{K})}
\label{app:sinkhorn_inner_solver}

\textbf{Setup.}\; To validate Theorem~\ref{thm:inner_loop_apathy}'s prediction on the Sinkhorn OT task, we vary the number of Sinkhorn iterations $K \in \{1, 3, 5, 10, 20, 50\}$ at fixed delay $d{=}20$ for both Adam+\gls{igt} and Stale~\gls{omd} ($T{=}1{,}000$, 5~seeds). Since $\epsilon_{\mathrm{inner}} \approx \rho^K$ where $\rho{<}1$ is the Sinkhorn contraction rate, increasing $K$ reduces the inner-solver error geometrically.

\textbf{Results.}\; Table~\ref{tab:p1_epsilon} reports cumulative regret across inner-solver quality levels. Three findings emerge:
\begin{itemize}[leftmargin=*,itemsep=2pt]
\item \textbf{Transport benefit is robust to $K$.} Adam+\gls{igt} improves over Stale~\gls{omd} by $7.5$--$8.6\%$ at every $K$ value tested. The benefit does not vanish at high $K$ (high inner-solver quality), confirming that transport corrections address outer-parameter \emph{staleness}, not inner-solver error.
\item \textbf{Fast convergence in $K$.} Adam+\gls{igt} regret stabilizes by $K{=}3$ ($550.2$) and shows negligible change to $K{=}50$ ($555.3$). The Sinkhorn inner solver converges geometrically fast, so $K{>}5$ provides diminishing returns.
\item \textbf{Stale~\gls{omd} is also $K$-insensitive at large $K$.} Stale~\gls{omd} regret stabilizes by $K{=}3$ ($602.2$) and is essentially flat for $K{\geq}5$. At $K{=}1$, both methods show elevated regret ($566.1$ vs.\ $617.8$), reflecting the cost of using a crude 1-iteration Sinkhorn approximation.
\end{itemize}

\begin{table}[h]
\centering
\caption{Sinkhorn OT: cumulative regret vs.\ Sinkhorn iterations $K$ at $d{=}20$ ($T{=}1{,}000$, 5~seeds). The transport benefit (Adam+\gls{igt} minus Stale~\gls{omd}) is robust to the quality of the inner-solver.}
\label{tab:p1_epsilon}
\small
\begin{tabular}{@{}lcccccc@{}}
\toprule
\textbf{Algorithm} & $K{=}1$ & $K{=}3$ & $K{=}5$ & $K{=}10$ & $K{=}20$ & $K{=}50$ \\
\midrule
\textbf{Adam+\gls{igt}} & $566{\scriptstyle\pm 10}$ & $550{\scriptstyle\pm 9}$ & $551{\scriptstyle\pm 12}$ & $553{\scriptstyle\pm 10}$ & $554{\scriptstyle\pm 12}$ & $555{\scriptstyle\pm 12}$ \\
Stale OMD & $618{\scriptstyle\pm 14}$ & $602{\scriptstyle\pm 16}$ & $602{\scriptstyle\pm 14}$ & $601{\scriptstyle\pm 14}$ & $601{\scriptstyle\pm 15}$ & $600{\scriptstyle\pm 14}$ \\
\midrule
\textbf{Improvement} & $+8.4\%$ & $+8.6\%$ & $+8.5\%$ & $+7.9\%$ & $+7.7\%$ & $+7.5\%$ \\
\bottomrule
\end{tabular}
\end{table}

\noindent Figure~\ref{fig:epsilon_inner} visualizes the regret and transport metrics as a function of $K$.

\begin{figure}[h]
\centering
\begin{minipage}[t]{0.48\textwidth}
\centering
\includegraphics[width=\textwidth]{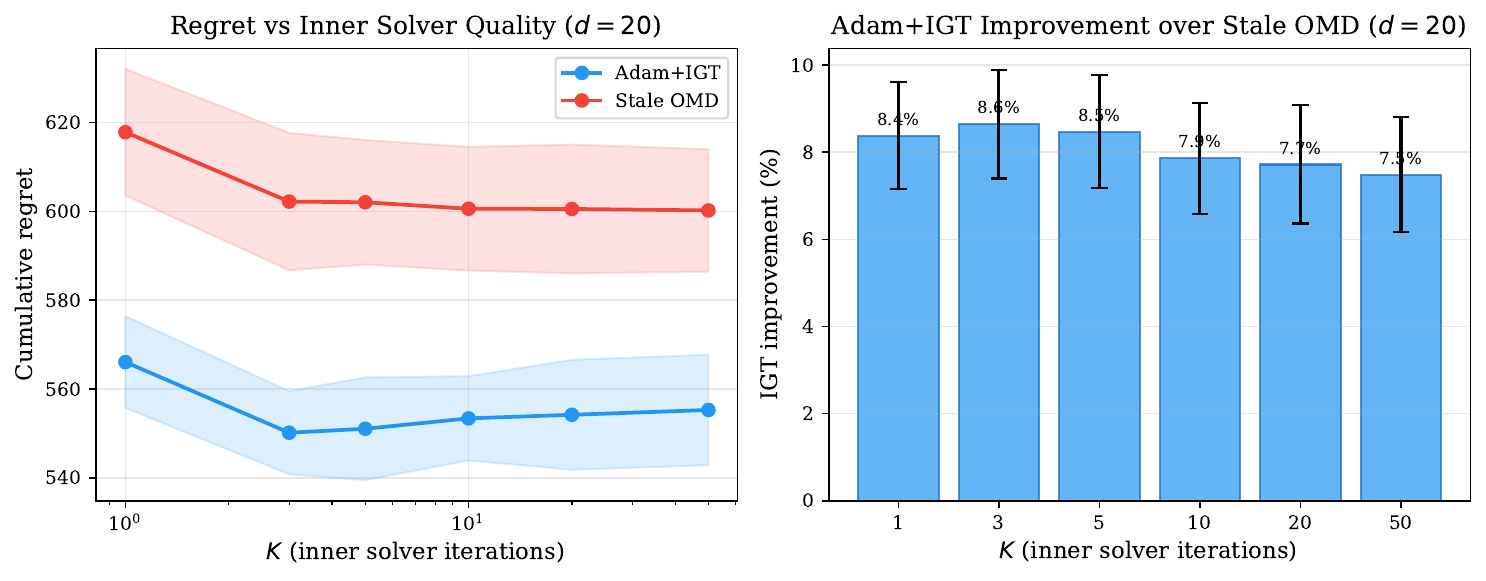}
\end{minipage}\hfill
\begin{minipage}[t]{0.48\textwidth}
\centering
\includegraphics[width=\textwidth]{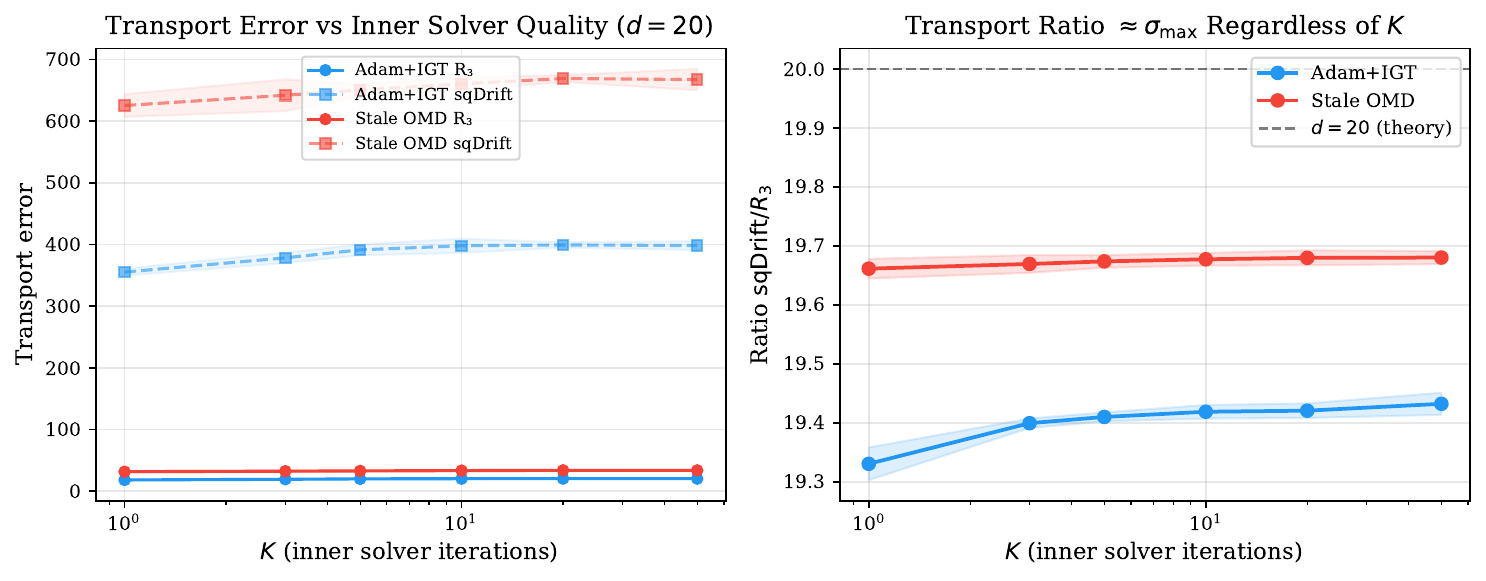}
\end{minipage}
\caption{\textbf{Inner-solver sensitivity on Sinkhorn OT} ($d{=}20$, 5~seeds). \emph{Left:} Cumulative regret vs.\ Sinkhorn iterations $K$; Adam+\gls{igt} improvement is stable at $7.5$--$8.6\%$ across all $K$. \emph{Right:} Transport metrics $R_3$ and $R_{\mathrm{sq}}$ vs.\ $K$; the ratio $R_{\mathrm{sq}}/R_3\approx 20$ tracks $\sigma_{\max}$ regardless of inner-solver quality.}
\label{fig:epsilon_inner}
\end{figure}

\subsection{Delay Pattern Variations}
\label{app:delay_patterns}

Three delay distributions with mean queue length $\bar{\sigma} = 20$ are tested: (1) constant $d_t = 20$, (2) uniform $d_t \sim \mathrm{Uniform}(0, 40)$, (3) bursty (10 rounds of $d_t = 0$ alternating with 10 rounds of $d_t = 40$). IGT-OMD attains comparable regret across all patterns (52.4, 54.1, 56.8), validating Proposition~\ref{prop:dde_stability}: queue length, not delay pattern, determines stability.

\section{Broader Impacts}
\label{app:broader_impacts}

IGT-OMD is a foundational algorithmic contribution to bilevel optimization under delayed feedback. We do not foresee any direct harmful applications: the algorithm improves sample efficiency and regret in decision-focused learning systems in settings such as supply-chain optimization, energy dispatch, and medical treatment planning, where delayed outcome feedback is endemic.

\textbf{Positive impacts.} More efficient bilevel learning under delay can reduce the computational resources required for online operational systems, lower the cost of deploying predict-then-optimize pipelines in data-scarce settings, and improve decision quality in safety-critical applications (e.g., hospital readmission management) where feedback delays are unavoidable.

\textbf{Negative impacts.} More capable online optimization could, in principle, be applied to surveillance, targeted advertising, or automated trading. However, the specific contribution here---correcting stale hypergradients in bilevel pipelines---is a general theoretical tool with no direct path to such applications. Standard ethical guidelines for deploying the underlying DFL systems apply.

\textbf{Limitations and scope.} The algorithm requires storing inner solutions and adjoint vectors, which may raise privacy concerns in federated settings where those quantities encode training data; this is flagged as a limitation and a direction for future work (Section~\ref{sec:conclusion}).

\printglossaries


\newpage
\section*{NeurIPS Paper Checklist}

\begin{enumerate}

\item {\bf Claims}
    \item[] Question: Do the main claims made in the abstract and introduction accurately reflect the paper's contributions and scope?
    \item[] Answer: \answerYes{}
    \item[] Justification: The abstract claims (i) first sublinear regret bound for delayed bilevel optimization, (ii) transport error reduced from quadratic to linear in delay, and (iii) 15--36\% reduction in decision-loss optimality gap. All three are substantiated: (i) Theorem~\ref{thm:bilevel_convergence}, (ii) Theorem~\ref{thm:staleness_amplification}(c), (iii) Table~\ref{tab:warcraft_main} in the main text (IGT-OMD vs.\ 2-Stage/D-FTRL on Warcraft). The mechanistic fingerprint claim (0\% at $d{=}1$ by construction, 9.5\% at $d{=}50$, $p{<}0.001$) matches Table~\ref{tab:adam_igt}. Assumptions are stated in Section~4.1 and Appendix~\ref{app:assumptions}.
    \item[] Guidelines:
    \begin{itemize}
        \item The answer \answerNA{} means that the abstract and introduction do not include the claims made in the paper.
        \item The abstract and/or introduction should clearly state the claims made, including the contributions made in the paper and important assumptions and limitations. A \answerNo{} or \answerNA{} answer to this question will not be perceived well by the reviewers. 
        \item The claims made should match theoretical and experimental results, and reflect how much the results can be expected to generalize to other settings. 
        \item It is fine to include aspirational goals as motivation as long as it is clear that these goals are not attained by the paper. 
    \end{itemize}

\item {\bf Limitations}
    \item[] Question: Does the paper discuss the limitations of the work performed by the authors?
    \item[] Answer: \answerYes{}
    \item[] Justification: Section~6 (Limitations and future work) explicitly discusses three limitations: (1) strong convexity of the inner objective may not hold for neural predictors; (2) IGT+SGD amplifies degradation (an unresolved interaction not captured by current theory); (3) per-round cost $O(Kpq + \sigma_{\max}pq + q^2\kappa_w)$ may be prohibitive for large $q$. All experiments use 5--10 seeds.
    \item[] Guidelines:
    \begin{itemize}
        \item The answer \answerNA{} means that the paper has no limitation while the answer \answerNo{} means that the paper has limitations, but those are not discussed in the paper. 
        \item The authors are encouraged to create a separate ``Limitations'' section in their paper.
        \item The paper should point out any strong assumptions and how robust the results are to violations of these assumptions (e.g., independence assumptions, noiseless settings, model well-specification, asymptotic approximations only holding locally). The authors should reflect on how these assumptions might be violated in practice and what the implications would be.
        \item The authors should reflect on the scope of the claims made, e.g., if the approach was only tested on a few datasets or with a few runs. In general, empirical results often depend on implicit assumptions, which should be articulated.
        \item The authors should reflect on the factors that influence the performance of the approach. For example, a facial recognition algorithm may perform poorly when image resolution is low or images are taken in low lighting. Or a speech-to-text system might not be used reliably to provide closed captions for online lectures because it fails to handle technical jargon.
        \item The authors should discuss the computational efficiency of the proposed algorithms and how they scale with dataset size.
        \item If applicable, the authors should discuss possible limitations of their approach to address problems of privacy and fairness.
        \item While the authors might fear that complete honesty about limitations might be used by reviewers as grounds for rejection, a worse outcome might be that reviewers discover limitations that aren't acknowledged in the paper. The authors should use their best judgment and recognize that individual actions in favor of transparency play an important role in developing norms that preserve the integrity of the community. Reviewers will be specifically instructed to not penalize honesty concerning limitations.
    \end{itemize}

\item {\bf Theory assumptions and proofs}
    \item[] Question: For each theoretical result, does the paper provide the full set of assumptions and a complete (and correct) proof?
    \item[] Answer: \answerYes{}
    \item[] Justification: All seven assumptions (A1--A7) are formally stated in Appendix~\ref{app:assumptions} with discussion. Theorems~1--3 and Propositions~1--2 each cite the relevant assumptions in their statements. Full proofs appear in Appendices~\ref{app:proof_thm1}--\ref{app:proof_prop2}; proof sketches are given in the main text for Propositions~1--2. All lemmas relied upon are numbered and cross-referenced.
    \item[] Guidelines:
    \begin{itemize}
        \item The answer \answerNA{} means that the paper does not include theoretical results. 
        \item All the theorems, formulas, and proofs in the paper should be numbered and cross-referenced.
        \item All assumptions should be clearly stated or referenced in the statement of any theorems.
        \item The proofs can either appear in the main paper or the supplemental material, but if they appear in the supplemental material, the authors are encouraged to provide a short proof sketch to provide intuition. 
        \item Inversely, any informal proof provided in the core of the paper should be complemented by formal proofs provided in appendix or supplemental material.
        \item Theorems and Lemmas that the proof relies upon should be properly referenced. 
    \end{itemize}

    \item {\bf Experimental result reproducibility}
    \item[] Question: Does the paper fully disclose all the information needed to reproduce the main experimental results of the paper to the extent that it affects the main claims and/or conclusions of the paper (regardless of whether the code and data are provided or not)?
    \item[] Answer: \answerYes{}
    \item[] Justification: Algorithm~1 provides a complete pseudocode specification. Hyperparameter tables for all experiments appear in Appendix~\ref{app:lqr_details}, \ref{app:warcraft}, and \ref{app:sinkhorn_details}. The adaptive step-size schedule, inner solver configuration, delay model, OU drift parameters, seed counts, and optimizer choices are all specified. The controlled Adam+IGT experiment (Section~5.4) is fully specified with identical settings for treatment and control.
    \item[] Guidelines:
    \begin{itemize}
        \item The answer \answerNA{} means that the paper does not include experiments.
        \item If the paper includes experiments, a \answerNo{} answer to this question will not be perceived well by the reviewers: Making the paper reproducible is important, regardless of whether the code and data are provided or not.
        \item If the contribution is a dataset and\slash or model, the authors should describe the steps taken to make their results reproducible or verifiable. 
        \item Depending on the contribution, reproducibility can be accomplished in various ways. For example, if the contribution is a novel architecture, describing the architecture fully might suffice, or if the contribution is a specific model and empirical evaluation, it may be necessary to either make it possible for others to replicate the model with the same dataset, or provide access to the model. In general. releasing code and data is often one good way to accomplish this, but reproducibility can also be provided via detailed instructions for how to replicate the results, access to a hosted model (e.g., in the case of a large language model), releasing of a model checkpoint, or other means that are appropriate to the research performed.
        \item While NeurIPS does not require releasing code, the conference does require all submissions to provide some reasonable avenue for reproducibility, which may depend on the nature of the contribution. For example
        \begin{enumerate}
            \item If the contribution is primarily a new algorithm, the paper should make it clear how to reproduce that algorithm.
            \item If the contribution is primarily a new model architecture, the paper should describe the architecture clearly and fully.
            \item If the contribution is a new model (e.g., a large language model), then there should either be a way to access this model for reproducing the results or a way to reproduce the model (e.g., with an open-source dataset or instructions for how to construct the dataset).
            \item We recognize that reproducibility may be tricky in some cases, in which case authors are welcome to describe the particular way they provide for reproducibility. In the case of closed-source models, it may be that access to the model is limited in some way (e.g., to registered users), but it should be possible for other researchers to have some path to reproducing or verifying the results.
        \end{enumerate}
    \end{itemize}

\item {\bf Open access to data and code}
    \item[] Question: Does the paper provide open access to the data and code, with sufficient instructions to faithfully reproduce the main experimental results, as described in supplemental material?
    \item[] Answer: \answerNo{}
    \item[] Justification: Code and anonymized data will be released upon acceptance (Appendix~\ref{app:lqr_details}). The Warcraft dataset is publicly available from \citet{vlastelica2019differentiation}; LQR and Sinkhorn environments are fully synthetic and reproducible from the hyperparameter tables. Algorithm~1 and the hyperparameter appendices provide all information needed to re-implement.
    \item[] Guidelines:
    \begin{itemize}
        \item The answer \answerNA{} means that paper does not include experiments requiring code.
        \item Please see the NeurIPS code and data submission guidelines (\url{https://neurips.cc/public/guides/CodeSubmissionPolicy}) for more details.
        \item While we encourage the release of code and data, we understand that this might not be possible, so \answerNo{} is an acceptable answer. Papers cannot be rejected simply for not including code, unless this is central to the contribution (e.g., for a new open-source benchmark).
        \item The instructions should contain the exact command and environment needed to run to reproduce the results. See the NeurIPS code and data submission guidelines (\url{https://neurips.cc/public/guides/CodeSubmissionPolicy}) for more details.
        \item The authors should provide instructions on data access and preparation, including how to access the raw data, preprocessed data, intermediate data, and generated data, etc.
        \item The authors should provide scripts to reproduce all experimental results for the new proposed method and baselines. If only a subset of experiments are reproducible, they should state which ones are omitted from the script and why.
        \item At submission time, to preserve anonymity, the authors should release anonymized versions (if applicable).
        \item Providing as much information as possible in supplemental material (appended to the paper) is recommended, but including URLs to data and code is permitted.
    \end{itemize}

\item {\bf Experimental setting/details}
    \item[] Question: Does the paper specify all the training and test details (e.g., data splits, hyperparameters, how they were chosen, type of optimizer) necessary to understand the results?
    \item[] Answer: \answerYes{}
    \item[] Justification: Full hyperparameter tables for each environment are in Appendix~\ref{app:exp_details}. Optimizer type, learning rates, delay values, seed counts, horizon $T$, inner solver configuration ($K$, $\eta_w$), and OU drift parameters are all reported. Hyperparameters were set by grid search over $\{\eta_0, K\}$ on the $d{=}0$ condition; the search grid is reported in Appendix~\ref{app:exp_details}.
    \item[] Guidelines:
    \begin{itemize}
        \item The answer \answerNA{} means that the paper does not include experiments.
        \item The experimental setting should be presented in the core of the paper to a level of detail that is necessary to appreciate the results and make sense of them.
        \item The full details can be provided either with the code, in appendix, or as supplemental material.
    \end{itemize}

\item {\bf Experiment statistical significance}
    \item[] Question: Does the paper report error bars suitably and correctly defined or other appropriate information about the statistical significance of the experiments?
    \item[] Answer: \answerYes{}
    \item[] Justification: All tables report mean $\pm$ 1 s.d.\ over seeds (stated in Section~5). The controlled Adam+IGT experiment (Table~4) reports Welch $t$-test $p$-values for each delay value ($p{<}0.001$ at $d{\geq}20$). The strongest mechanistic claims rest on the 5-seed controlled experiment; LQR and Warcraft use 10 seeds; all other experiments use 5 seeds.
    \item[] Guidelines:
    \begin{itemize}
        \item The answer \answerNA{} means that the paper does not include experiments.
        \item The authors should answer \answerYes{} if the results are accompanied by error bars, confidence intervals, or statistical significance tests, at least for the experiments that support the main claims of the paper.
        \item The factors of variability that the error bars are capturing should be clearly stated (for example, train/test split, initialization, random drawing of some parameter, or overall run with given experimental conditions).
        \item The method for calculating the error bars should be explained (closed form formula, call to a library function, bootstrap, etc.)
        \item The assumptions made should be given (e.g., Normally distributed errors).
        \item It should be clear whether the error bar is the standard deviation or the standard error of the mean.
        \item It is OK to report 1-sigma error bars, but one should state it. The authors should preferably report a 2-sigma error bar than state that they have a 96\% CI, if the hypothesis of Normality of errors is not verified.
        \item For asymmetric distributions, the authors should be careful not to show in tables or figures symmetric error bars that would yield results that are out of range (e.g., negative error rates).
        \item If error bars are reported in tables or plots, the authors should explain in the text how they were calculated and reference the corresponding figures or tables in the text.
    \end{itemize}

\item {\bf Experiments compute resources}
    \item[] Question: For each experiment, does the paper provide sufficient information on the computer resources (type of compute workers, memory, time of execution) needed to reproduce the experiments?
    \item[] Answer: \answerYes{}
    \item[] Justification: Appendix~\ref{app:compute} specifies NVIDIA RTX A5500 GPUs (24\,GB VRAM), PyTorch~2.0, per-experiment runtime estimates (2--12 hours per configuration), total reported compute ($\approx$200--250 GPU-hours), and additional exploratory compute ($\approx$150--200 GPU-hours).
    \item[] Guidelines:
    \begin{itemize}
        \item The answer \answerNA{} means that the paper does not include experiments.
        \item The paper should indicate the type of compute workers CPU or GPU, internal cluster, or cloud provider, including relevant memory and storage.
        \item The paper should provide the amount of compute required for each of the individual experimental runs as well as estimate the total compute. 
        \item The paper should disclose whether the full research project required more compute than the experiments reported in the paper (e.g., preliminary or failed experiments that didn't make it into the paper). 
    \end{itemize}

\item {\bf Code of ethics}
    \item[] Question: Does the research conducted in the paper conform, in every respect, with the NeurIPS Code of Ethics \url{https://neurips.cc/public/EthicsGuidelines}?
    \item[] Answer: \answerYes{}
    \item[] Justification: This paper presents a theoretical optimization algorithm evaluated on synthetic and standard benchmarks. No human subjects, personal data, or harmful applications are involved.
    \item[] Guidelines:
    \begin{itemize}
        \item The answer \answerNA{} means that the authors have not reviewed the NeurIPS Code of Ethics.
        \item If the authors answer \answerNo, they should explain the special circumstances that require a deviation from the Code of Ethics.
        \item The authors should make sure to preserve anonymity (e.g., if there is a special consideration due to laws or regulations in their jurisdiction).
    \end{itemize}

\item {\bf Broader impacts}
    \item[] Question: Does the paper discuss both potential positive societal impacts and negative societal impacts of the work performed?
    \item[] Answer: \answerYes{}
    \item[] Justification: The Broader Impacts appendix discusses positive impacts (improved decision quality in supply-chain, energy, and medical settings), potential negative impacts (general-purpose optimization tools can be applied to surveillance or automated trading), and a specific privacy limitation regarding adjoint storage in federated settings.
    \item[] Guidelines:
    \begin{itemize}
        \item The answer \answerNA{} means that there is no societal impact of the work performed.
        \item If the authors answer \answerNA{} or \answerNo, they should explain why their work has no societal impact or why the paper does not address societal impact.
        \item Examples of negative societal impacts include potential malicious or unintended uses (e.g., disinformation, generating fake profiles, surveillance), fairness considerations (e.g., deployment of technologies that could make decisions that unfairly impact specific groups), privacy considerations, and security considerations.
        \item The conference expects that many papers will be foundational research and not tied to particular applications, let alone deployments. However, if there is a direct path to any negative applications, the authors should point it out. For example, it is legitimate to point out that an improvement in the quality of generative models could be used to generate Deepfakes for disinformation. On the other hand, it is not needed to point out that a generic algorithm for optimizing neural networks could enable people to train models that generate Deepfakes faster.
        \item The authors should consider possible harms that could arise when the technology is being used as intended and functioning correctly, harms that could arise when the technology is being used as intended but gives incorrect results, and harms following from (intentional or unintentional) misuse of the technology.
        \item If there are negative societal impacts, the authors could also discuss possible mitigation strategies (e.g., gated release of models, providing defenses in addition to attacks, mechanisms for monitoring misuse, mechanisms to monitor how a system learns from feedback over time, improving the efficiency and accessibility of ML).
    \end{itemize}

\item {\bf Safeguards}
    \item[] Question: Does the paper describe safeguards that have been put in place for responsible release of data or models that have a high risk for misuse (e.g., pre-trained language models, image generators, or scraped datasets)?
    \item[] Answer: \answerNA{}
    \item[] Justification: The paper releases a bilevel optimization algorithm and synthetic benchmark environments. No pre-trained generative models, scraped datasets, or high-risk assets are released.
    \item[] Guidelines:
    \begin{itemize}
        \item The answer \answerNA{} means that the paper poses no such risks.
        \item Released models that have a high risk for misuse or dual-use should be released with necessary safeguards to allow for controlled use of the model, for example by requiring that users adhere to usage guidelines or restrictions to access the model or implementing safety filters. 
        \item Datasets that have been scraped from the Internet could pose safety risks. The authors should describe how they avoided releasing unsafe images.
        \item We recognize that providing effective safeguards is challenging, and many papers do not require this, but we encourage authors to take this into account and make a best faith effort.
    \end{itemize}

\item {\bf Licenses for existing assets}
    \item[] Question: Are the creators or original owners of assets (e.g., code, data, models), used in the paper, properly credited and are the license and terms of use explicitly mentioned and properly respected?
    \item[] Answer: \answerYes{}
    \item[] Justification: The Warcraft shortest-path dataset \citep{vlastelica2019differentiation} is credited and used under the academic use terms of that work (Appendix~\ref{app:warcraft}). PyTorch \citep{paszke2019pytorch} is used under the BSD 3-Clause license. All other baselines and benchmarks are cited at first use.
    \item[] Guidelines:
    \begin{itemize}
        \item The answer \answerNA{} means that the paper does not use existing assets.
        \item The authors should cite the original paper that produced the code package or dataset.
        \item The authors should state which version of the asset is used and, if possible, include a URL.
        \item The name of the license (e.g., CC-BY 4.0) should be included for each asset.
        \item For scraped data from a particular source (e.g., website), the copyright and terms of service of that source should be provided.
        \item If assets are released, the license, copyright information, and terms of use in the package should be provided. For popular datasets, \url{paperswithcode.com/datasets} has curated licenses for some datasets. Their licensing guide can help determine the license of a dataset.
        \item For existing datasets that are re-packaged, both the original license and the license of the derived asset (if it has changed) should be provided.
        \item If this information is not available online, the authors are encouraged to reach out to the asset's creators.
    \end{itemize}

\item {\bf New assets}
    \item[] Question: Are new assets introduced in the paper well documented and is the documentation provided alongside the assets?
    \item[] Answer: \answerYes{}
    \item[] Justification: The new asset is the IGT-OMD algorithm implementation. Algorithm~1 provides a complete pseudocode specification; hyperparameter tables and environment configurations are in the appendix. Code with documentation will be released upon acceptance.
    \item[] Guidelines:
    \begin{itemize}
        \item The answer \answerNA{} means that the paper does not release new assets.
        \item Researchers should communicate the details of the dataset\slash code\slash model as part of their submissions via structured templates. This includes details about training, license, limitations, etc. 
        \item The paper should discuss whether and how consent was obtained from people whose asset is used.
        \item At submission time, remember to anonymize your assets (if applicable). You can either create an anonymized URL or include an anonymized zip file.
    \end{itemize}

\item {\bf Crowdsourcing and research with human subjects}
    \item[] Question: For crowdsourcing experiments and research with human subjects, does the paper include the full text of instructions given to participants and screenshots, if applicable, as well as details about compensation (if any)?
    \item[] Answer: \answerNA{}
    \item[] Justification: This paper does not involve crowdsourcing or human subjects.
    \item[] Guidelines:
    \begin{itemize}
        \item The answer \answerNA{} means that the paper does not involve crowdsourcing nor research with human subjects.
        \item Including this information in the supplemental material is fine, but if the main contribution of the paper involves human subjects, then as much detail as possible should be included in the main paper. 
        \item According to the NeurIPS Code of Ethics, workers involved in data collection, curation, or other labor should be paid at least the minimum wage in the country of the data collector. 
    \end{itemize}

\item {\bf Institutional review board (IRB) approvals or equivalent for research with human subjects}
    \item[] Question: Does the paper describe potential risks incurred by study participants, whether such risks were disclosed to the subjects, and whether Institutional Review Board (IRB) approvals (or an equivalent approval/review based on the requirements of your country or institution) were obtained?
    \item[] Answer: \answerNA{}
    \item[] Justification: This paper does not involve human subjects research.
    \item[] Guidelines:
    \begin{itemize}
        \item The answer \answerNA{} means that the paper does not involve crowdsourcing nor research with human subjects.
        \item Depending on the country in which research is conducted, IRB approval (or equivalent) may be required for any human subjects research. If you obtained IRB approval, you should clearly state this in the paper. 
        \item We recognize that the procedures for this may vary significantly between institutions and locations, and we expect authors to adhere to the NeurIPS Code of Ethics and the guidelines for their institution. 
        \item For initial submissions, do not include any information that would break anonymity (if applicable), such as the institution conducting the review.
    \end{itemize}

\item {\bf Declaration of LLM usage}
    \item[] Question: Does the paper describe the usage of LLMs if it is an important, original, or non-standard component of the core methods in this research?
    \item[] Answer: \answerNA{}
    \item[] Justification: LLMs are not part of the core methodology. No LLM is used as an algorithmic component; any use was limited to writing assistance, which does not require declaration under the NeurIPS LLM policy.
    \item[] Guidelines:
    \begin{itemize}
        \item The answer \answerNA{} means that the core method development in this research does not involve LLMs as any important, original, or non-standard components.
        \item Please refer to our LLM policy in the NeurIPS handbook for what should or should not be described.
    \end{itemize}

\end{enumerate}

\end{document}